\setlist[enumerate]{itemsep=0mm}
\newcommand{\eins}{\boldsymbol{1}}
\DeclareSymbolFont{wideparensymbol}{OMX}{yhex}{m}{n}
\DeclareMathAccent{\wideparen}{\mathord}{wideparensymbol}{"F3}
\newcommand{\argmin}{\operatornamewithlimits{arg \, min}}
\newtheorem{assumption}{Assumption}
\newcommand{\Rmnum}[1]{\expandafter\@slowromancap\romannumeral #1@}
\pgfplotsset{compat=newest}
\newlength\figureheight
\newlength\figurewidth
\begin{document}

\title{Bagged Regularized $k$-Distances for Anomaly Detection}

\author{\name Yuchao Cai \email yccai@nus.edu.sg\\
\addr Department of Statistics and Data Science \\
National University of Singapore \\
117546, Singapore
\vspace{0.2cm}
\\
\name Hanfang Yang \email hyang@ruc.edu.cn \\
\addr Center for Applied Statistics and School of Statistics \\ 
Renmin University of China \\
100872 Beijing, China 
\vspace{0.2cm}
\\
\name Yuheng Ma \email yma@ruc.edu.cn \\
\addr School of Statistics \\ 
Renmin University of China \\
100872 Beijing, China 
\vspace{0.2cm}
\\
\name Hanyuan Hang \email hanyuan0725@gmail.com\\
\addr Hong Kong Research Institute \\
Contemporary Amperex Technology (Hong Kong) Limited \\
Hong Kong Science Park, New Territories, Hong Kong
}

\editor{Aryeh Kontorovich}

\allowdisplaybreaks

\maketitle



\begin{abstract}We consider the paradigm of unsupervised anomaly detection, which involves the identification of anomalies within a dataset in the absence of labeled examples.
Though distance-based methods are top-performing for unsupervised anomaly detection, they suffer heavily from the sensitivity to the choice of the number of the nearest neighbors.
In this paper, we propose a new distance-based algorithm called \textit{bagged regularized $k$-distances for anomaly detection} (\textit{BRDAD}), converting the unsupervised anomaly detection problem into a convex optimization problem.
Our BRDAD algorithm selects the weights by minimizing the \textit{surrogate risk}, i.e., the finite sample bound of the empirical risk of the \textit{bagged weighted $k$-distances for density estimation} (\textit{BWDDE}).
This approach enables us to successfully address the sensitivity challenge of the hyperparameter choice in distance-based algorithms.
Moreover, when dealing with large-scale datasets, the efficiency issues can be addressed by the incorporated bagging technique in our BRDAD algorithm.
On the theoretical side, we establish fast convergence rates of the AUC regret of our algorithm and demonstrate that the bagging technique significantly reduces the computational complexity.
On the practical side, we conduct numerical experiments to illustrate the insensitivity of the parameter selection of our algorithm compared with other state-of-the-art distance-based methods. Furthermore, our method achieves superior performance on real-world datasets with the introduced bagging technique compared to other approaches.
\end{abstract}

\begin{keywords}
Unsupervised learning,
density estimation, 
anomaly detection, 
weighted $k$-distances,
regularization,
surrogate risk minimization (SRM),
bagging,
ensemble learning,
optimal convergence rates, 
learning theory
\end{keywords}

\section{Introduction} \label{sec::intro}

Anomaly detection refers to the process of identifying patterns or instances that deviate significantly from the expected behavior within a dataset \citep{chandola2009anomaly}.  
It has been widely and carefully studied within diverse research areas and application domains, including industrial engineering \citep{fahim2019anomaly,wang2021anomaly}, medicine \citep{fernando2021deep,tschuchnig2021anomaly}, cyber security \citep{folino2023ensemble,ravinder2023review}, earth science \citep{luz2022mapping, chen2022reference}, and finance \citep{lokanan2019detecting,hilal2022financial}, etc. For further discussions on anomaly detection techniques and applications, we refer readers to the survey of \citet{nassif2021machine}.

Based on the availability of labeled data, anomaly detection problems can be classified into three main paradigms.
The first is the supervised paradigm, where both the normal and anomalous instances are labeled. As mentioned in \citet{aggarwal2016introduction} and \citet{vargaftik2021rade}, researchers often employ existing binary classifiers in this case.
The second is the semi-supervised paradigm, where the training data only consists of normal samples, and the goal is to identify anomalies that deviate from the normal samples. \citep{akcay2018ganomaly, zhou2023semi}. 
Perhaps the most flexible yet challenging paradigm is the unsupervised paradigm \citep{aggarwal2016introduction, gu2019statistical}, where no labeled examples are available to train an anomaly detector.
For the remainder of this paper, we only focus on the unsupervised paradigm, where we do not assume any prior knowledge of labeled data.

The existing algorithms in the literature on unsupervised anomaly detection can be roughly categorized into three main categories:
The first category is distance-based methods, which determine an anomaly score based on the distance between data points and their neighboring points.
For example, $k$-nearest neighbors ($k$-NN) \citep{ramaswamy2000efficient} calculate the anomaly score of an instance based on the distance to its $k$-th nearest neighbor,
distance-to-measure (DTM) \citep{gu2019statistical} introduces a novel distance metric based on the distances of the first $k$-nearest neighbors, and
local outlier factor (LOF) \citep{breunig2000lof} computes the anomaly score by quantifying the deviation of the instance from the local density of its neighboring data points.
The second category is forest-based methods, which compute anomaly scores based on tree structures. 
For instance, isolation forest (iForest) \citep{liu2008isolation} constructs an ensemble of trees to isolate data points and quantifies the anomaly score of each instance based on its distance from the leaf node to the root in the constructed tree and
partial identification forest (PIDForest) \citep{gopalan2019pidforest} computes the anomaly score of a data point by identifying the minimum density of data points across all subcubes partitioned by decision trees.
The third category is kernel-based methods such as the one-class SVM (OCSVM) \citep{scholkopf1999support}, which defines a hyperplane to maximize the margin between the origin and normal samples.
It has been empirically shown \citep{aggarwal2015theoretical, aggarwal2016outlier, gu2019statistical} that distance-based and forest-based methods are the top-performing methods across a broad range of real-world datasets.
Moreover, experiments in \citet{gu2019statistical} suggest that distance-based methods show their advantage on high-dimensional datasets, as forest-based methods are likely to neglect a substantial number of features when dealing with high-dimensional data.
Unfortunately, it is widely acknowledged that distance-based methods suffer from the sensitivity to the choice of the hyperparameter $k$ \citep{aggarwal2012applications}. 
This problem is particularly severe in unsupervised learning tasks because the absence of labeled data makes it difficult to guide the selection of hyperparameters. 
To the best of our knowledge, no algorithm in the literature effectively solves the aforementioned sensitivity problem.
Besides, while distance-based methods are crucial and efficient for identifying anomalies, they pose a challenge in scenarios with a high volume of data samples, owing to the need for a considerable expansion in the search for nearest neighbors, leading to a notable increase in computational overhead. 
Therefore, there also remains a great challenge for distance-based algorithms to improve their computational efficiency.

In this paper, we propose a distance-based algorithm, \textit{bagged regularized $k$-distances for anomaly detection} (\textit{BRDAD}), which formulates the weight selection problem in unsupervised anomaly detection as a minimization problem.
Specifically, we first establish the \textit{surrogate risk}, a finite-sample bound on the empirical risk of the \textit{bagged weighted $k$-distances for density estimation} (\textit{BWDDE}). At each bagging round, we determine the weights by minimizing the surrogate risk on a subsampled dataset. Then, using an independently drawn subsample of the same size, we compute the corresponding $k$-distances. By combining the learned weights and these $k$-distances, we obtain the \textit{regularized $k$-distance}. The final anomaly scores are derived by averaging these regularized $k$-distances, referred to as \textit{bagged regularized $k$-distances}. BRDAD ranks the data in descending order of these scores and identifies the top $m$ instances as anomalies.
BRDAD offers two key advantages. First, the \textit{surrogate risk minimization} (\textit{SRM}) approach effectively mitigates the sensitivity of parameter choices in distance-based methods. Second, the incorporation of bagging enhances computational efficiency, making the method scalable for large datasets.

The contributions of this paper are summarized as follows.

\textit{(i)} We propose a new distance-based algorithm BRDAD, that prevents the sensitivity of the hyperparameter selection in unsupervised anomaly detection problems by formulating it as a convex optimization problem. 
Moreover, the incorporated bagging technique in BRDAD improves the computational efficiency of our distance-based algorithm.

\textit{(ii)} From the theoretical perspective, we establish fast convergence rates of the AUC regret of BRDAD.
Moreover, we show that with relatively few bagging rounds $B$, the number of iterations in the optimization problem at each bagging round can be reduced substantially.
This demonstrates that the bagging technique significantly reduces computational complexity.

\textit{(iii)} From an experimental perspective, we conduct numerical experiments to evaluate the effectiveness of our proposed BRDAD method. First, we empirically validate the reasonableness of SRM by demonstrating similar convergence behaviors for both SR and MAE. Next, we compare BRDAD with distance-based, forest-based, and kernel-based methods on anomaly detection benchmarks, highlighting its superior performance. Finally, we perform a parameter analysis on the number of bagging rounds $B$, showing that selecting an appropriate $B$ based on the sample size leads to improved performance.

The remainder of this paper is organized as follows.
In Section \ref{sec::methodology}, we introduce some preliminaries related to anomaly detection and propose our BRDAD algorithm. 
We provide basic assumptions and theoretical results on the convergence rates of BRDDE and BRDAD in Section \ref{sec::TheoreticalResults}. 
Some comments and discussions concerning the theoretical results will also be provided in this section.
We present the error and complexity analysis of our algorithm in Section \ref{sec::ErrorAnalysis}.
Some comments concerning the time complexity will also be provided in this section.
We verify the theoretical findings of our algorithm by conducting numerical experiments in Section \ref{sec::Experiments}.
We also conduct numerical experiments to compare our algorithm with other state-of-the-art algorithms for anomaly detection on real-world datasets in this Section.
All the proofs of Sections \ref{sec::methodology}, \ref{sec::TheoreticalResults}, and \ref{sec::ErrorAnalysis} can be found in Section \ref{sec::proofs}.
We conclude this paper in Section \ref{sec::Conclusion}.

\section{Methodology}\label{sec::methodology}
 
We present our methodology in this section. 
Section \ref{sec::Preliminaries} introduces basic notations and concepts. 
In Section \ref{sec::weightdkad}, we propose the \textit{bagged weighted $k$-distances for density estimation} (\textit{BWDDE}) to demonstrate how bagged weighted $k$-distances can be applied to anomaly detection.
Section \ref{sec::datadrivennnad} reformulates the weight selection problem for density estimation as a surrogate risk minimization problem, aiming to minimize the finite-sample bound of the empirical risk of BWDDE.
Finally, the weights obtained by solving the SRM problem are utilized to construct our main algorithm, named \textit{bagged regularized $k$-distances for anomaly detection} (\textit{BRDAD}).

\subsection{Preliminaries}\label{sec::Preliminaries}

We begin by introducing some fundamental notations that will frequently appear. 
Suppose that the data $D_n := \{ X_1, \ldots, X_n \}$ are independent and identically distributed ($\mathrm{i.i.d.}$) and drawn from an unknown distribution $\mathrm{P}$ that is absolutely continuous with respect to the Lebesgue measure $\mu$ and admits a unique density function $f$.
In this paper, we assume that  $f$ is supported on $[0,1]^d$, which we denote as $\mathcal{X}$.
Recall that for $1 \leq p < \infty$ and a vector $x\in \mathbb{R}^d$, the $\ell_p$-norm is defined as
$\|x\|_p := ( x_1^p + \cdots + x_d^p )^{1/p}$,
and the $\ell_{\infty}$-norm is defined as
$\|x\|_{\infty} := \max_{i = 1, \ldots, d} |x_i|$.  
For a measurable function $g:\mathcal{X}\to\mathbb{R}$, we define the $L_p$-norm as $\|g\|_{p}:=\big(\int_{\mathcal{X}} |g(x)|^p \, dx\big)^{1/p}$. Let $B(x,r):=\{x'\in\mathbb{R}^d: \|x'-x\|_2\leq r\}$ denote a ball in Euclidean space $\mathbb{R}^d$ centered at $x\in \mathbb{R}^d$ with radius $r\in(0,+\infty)$.
We use $V_d$ to denote the volume of the $d$-dimensional closed unit ball.
In addition, for $n\in \mathbb{N}_+$, we write $[n]:=\{1,\ldots,n\}$ as the set containing integers from $1$ to $n$ and $\mathcal{W}_n:=\{(w_1,\ldots,w_n)\in \mathbb{R}^n:\sum_{i=1}^n w_i=1,\ w_i\geq 0,\ i\in [n] \}$.  
For any $x\in \mathbb{R}$, let $\lfloor x \rfloor$ be the largest integer less than or
equal to $x$ and $\lceil x \rceil $ be the smallest ingeter larger than or equal to $x$.

Throughout this paper,  we use $a\vee b=\max\{a,b\}$ and $a\wedge b=\min\{a,b\}$. 
Moreover, we use the following notations to compare the magnitudes of quantities: $a_n \lesssim b_n$ or $a_n=\mathcal{O}(b_n)$ indicates that there exists a positive constant $c>0$ that is independent of $n$ such that $a_n \leq c b_n$; $a_n \gtrsim b_n$ implies that there exists a positive constant $c >0$ such that $a_n \geq c b_n$; and $a_n \asymp b_n$ means that $a_n\lesssim b_n$ and $b_n\lesssim a_n$ hold simultaneously.
In this paper, we focus on the case of a \textit{fixed} dimension, allowing the constant $c$ to depend on the dimension $d$.
Finally, we use $C$, $C'$, $c$, and $c'$ to represent positive constants.

\subsection{Bagged Weighted $k$-Distances for Anomaly Detection} \label{sec::weightdkad}

The learning goal of anomaly detection is to identify observations that deviate significantly from the majority of the data. Anomalies are typically rare and different from the expected behavior of the data set. 
In this paper, we adopt the contamination model proposed by \citet{huber1965robust} and further discussed in \citet[Section 1]{huber1992robust}:
\begin{assumption}\label{asp::huber}
The data $D_n$ consists of $\mathrm{i.i.d.}$ samples drawn from a distribution $\mathrm{P}$ satisfying the contamination model: 
\begin{align}\label{equ::hubermodel}
\mathrm{P} = (1-\Pi) \cdot \mathrm{P}_0 + \Pi \cdot \mathrm{P}_1,
\end{align}
where $\mathrm{P}_0$ and $\mathrm{P}_1$ denote the distributions of normal and anomalies, respectively, and $\Pi \in (0,1)$ is the contamination proportion.
Additionally, we assume that $\mathrm{P}_0$ has a probability density function $f_0$, while $\mathrm{P}_1$ is uniformly distributed over $[0,1]^d$ with density function $ f_1$.
\end{assumption}

Assumption \ref{asp::huber} describes a contamination model where the observed data is drawn from a mixture of normal and anomalous distributions. 
Assuming $\mathrm{P}_1$ to be uniform on $[0,1]^d$ is a common choice, reflecting an uninformative prior on anomalies \citep{steinwart2005classification}.
Within this framework, anomaly detection reduces to identifying low-density regions, as the density of normal data shares the same family of level sets as the density of the contaminated model.
This classical assumption facilitates theoretical analysis and underpins several well-known unsupervised methods, including OC-SVM \citep{scholkopf2001estimating} and deep learning-based approaches \citep{ruff2021unifying}.

Building on this framework, we explore the distance-based method for unsupervised anomaly detection, motivated by the connection between distance functions and $k$-nearest neighbor (kNN) density estimations highlighted by \citet{biau2011weighted}.
This method leverages the distances between data points and their nearest neighbors to assess anomalies.
For any $x \in \mathbb{R}^d$ and a dataset $D_n$, we denote $X_{(k)}(x;D_n)$ as the $k$-th nearest neighbor of $x$ in $D_n$. We then define $R_{n,(k)}(x):=\bigl\|x-X_{(k)}(x;D_n)\bigr\|_2$ as the distance between $x$ and $X_{(k)}(x;D_n)$, referred to as the \textit{$k$-nearest neighbor distance}, or \textit{$k$-distance} of $x$ in $D_n$.

Distance-based methods are important and effective for anomaly detection. However, when handling large datasets, the number of nearest neighbors that need to be searched grows substantially, leading to significant computational overhead.
To mitigate this issue, we incorporate the bagging technique by averaging the weighted 
$k$-distances computed on multiple disjoint sub-datasets randomly drawn from the original dataset $D_n$ without replacement. 
Let $B$ be the number of bagging rounds pre-specified by the user, and $\{D_s^b\}_{b=1}^B$ be $B$ disjoint subsets of $D_n$, each of size $s$.
Since the sub-samples are disjoint and the data $D_n$ is supposed $\mathrm{i.i.d.}$,
this procedure is mathematically equivalent to taking the first $s$ samples for bag $1$, the following $s$ samples for bag $2$, etc.
In each subset $D^b_s$, $b \in [B]$, let $R_{s,(k)}^b (x):=\bigl\|x-X_{(k)}(x;D_s^b)\bigr\|_2$ be the $k$-distance of $x$ in $D_s^b$ for any integer $k\leq s$, and let the \textit{weighted $k$-distance} be defined as $R^{w,b}_s(x):=\sum_{i=1}^s w_i^b R_{s,(i)}^b (x)$, with $w^b\in \mathcal{W}_s$. The \textit{bagged weighted 
$k$-distances} are obtained by averaging these weighted $k$-distances across the $B$ sub-datasets:
\begin{align*}
R_n^B(x):= \frac{1}{B} \sum_{b=1}^B R^{w,b}_s(x).
\end{align*}
Then, we introduce the \textit{bagged weighted $k$-distances for density estimation} (\textit{BWDDE}) as
\begin{align}\label{equ::flambdabx}  
f^B_n(x):=\frac{1}{V_d {R}_n^B (x)^d}
\biggl( \frac{1}{B} \sum_{b=1}^B \sum_{i=1}^s w_i^b \gamma_{s,i} \biggr)^d,
\end{align}
where $V_d = \pi^{d/2} / \Gamma(d/2+1)$ is the volume of the unit ball and
\begin{align}\label{equ::gammai}
\gamma_{s,i}:=\frac{\Gamma(i+1/d)\Gamma(s+1)}{\Gamma(i)\Gamma(s+1+1/d)}.
\end{align}
Here, $\gamma_{s,i}$ represents the expected value of the $d$-th root of the beta distribution $\mathrm{Beta}(i,s+1-i)$, which is associated with the probability of a ball having a radius of $R_{s,(i)}^b(x)$.
The choice of $\gamma_{s,i}$ facilitates the derivation of the concentration inequality for BWDDE. A detailed discussion is provided in Section \ref{sec::errorBWDDE}.
Potential anomalies can be identified in regions of low density using BWDDE. More specifically, the dataset $D_n = \{ X_1, \ldots, X_n \}$ can be sorted in ascending order based on their BWDDE values, denoted as $ \{ X_1', \ldots, X_n' \} $, such that  
$f_n^B(X_1') \leq \cdots \leq f_n^B(X_n')$.  
If the number of anomalies is predetermined as $m$, then the $m$ data points with the smallest BWDDE values are identified as anomalies.

\subsection{Bagged Regularized $k$-Distances for Anomaly Detection} \label{sec::datadrivennnad}

A key challenge in using bagged weighted $k$-distance for density estimation \eqref{equ::flambdabx} is selecting appropriate weights for the nearest neighbors. These weights play a crucial role in determining the accuracy of the density estimate and, consequently, the precision of anomaly detection.
The simplest way is to take $B = 1$ and, for a fixed in advance $k$, set $w_k = 1$ and $w_i = 0$ for $i \in [n] \setminus \{k\}$.
In this case, BWDDE reverts to the standard $k$-NN density estimation \citep{moore1977large, devroye1977strong, dasgupta2014optimal}. 
Notably, the standard $k$-NN density estimation relies solely on the distance to the $k$-th nearest neighbor, disregarding information from other neighbors. To address this limitation, a more general approach was proposed by \cite{biau2011weighted}, which investigated the general weighted $k$-nearest neighbor density estimation by associating the weights with a given probability measure on $[0,1]$.
More specifically, for a given probability measure $\nu$ on $[0,1]$ and a sequence of positive integer $\{k_n\}$, the weights are defined as $w_i = \int_{((i-1)/k_n,i/k_n]}\nu(dt)$, for $1\leq i\leq k_n$, with $w_i=0$ otherwise.
\paragraph{Challenges for the weight selection.}
Since density estimation is an unsupervised problem, we lack access to the true density function for direct hyperparameter selection. Existing literature has proposed two common approaches to address this challenge:

\vspace{+2mm}
\noindent
1. \textbf{ANLL-based Approach}: A common approach is to optimize hyperparameters by minimizing the Average Negative Log Likelihood (ANLL) \citep{chow1983consistent,lopez2013histogram,silverman2018density}, which is equivalent to minimizing KL divergence. This approach assumes that the density estimate integrates to one over $\mathbb{R}^d$. However, this assumption does not hold for our BWDDE.
Given the dataset $D_n$, let $M:=\max_{i\in [n]}\|X_i\|_2$.
By the definition of the bagged weighted $k$-distances, for any $x\in \mathbb{R}^d$, we have:
\begin{align*}
R_n^B(x) = \frac{1}{B} \sum_{b=1}^B R_s^{w,b}(x) \leq \frac{1}{B} \sum_{b=1}^B R_{s,(s)}^b(x) \leq \frac{1}{B} \sum_{b=1}^B (\|x\|_2 + M) = \|x\|_2 + M.
\end{align*}
Therefore, the density estimate satisfies
\begin{align*}
f_n^B(x) \geq \frac{1}{V_d \big(\|x\|_2 + M\big)^d} \Bigg( \frac{1}{B} \sum_{b=1}^B \sum_{i=1}^s w_i^b \gamma_{s,i}\Bigg)^d
\end{align*}
for any $x\in \mathbb{R}^d$.
Notably, the integral of $f_n^B(x)$ over $\mathbb{R}^d$ diverges, rendering the ANLL approach unsuitable. 

\vspace{+2mm}
\noindent
2. \textbf{$L_2$ Risk Approach}: An alternative approach is leave-one-out cross-validation based on $L_2$ risk of the density estimate \citep{tsybakov2008introduction,biau2011weighted}, defined as
\begin{align*}
\int_{\mathbb{R}^d} \left(f_n^B(x) - f(x)\right)^2 \, dx = \int_{\mathbb{R}^d} f_n^B(x)^2 \, dx - 2 \int_{\mathbb{R}^d} f_n^B(x) f(x) \, dx + \int_{\mathbb{R}^d} f(x)^2 \, dx.
\end{align*}
    
The last term is independent of \( f_n^B \) and can be ignored in the optimization. The second term can be estimated by $-2\sum_{i=1}^n f_{n,(-i)}^B(X_i)/n$, where $f_{n,(-i)}^B(x)$ is the density estimator with the $i$-th observation removed. Thus, we define the cross-validation loss as
\begin{align*}
L_{\mathrm{CV}}(f_n^B) := \int_{\mathbb{R}^d} f_n^B(x)^2 \, dx - \frac{2}{n} \sum_{i=1}^n f_{n,(-i)}^B(X_i).
\end{align*}
However, computing $\int_{\mathbb{R}^d} f_n^B(x)^2 \, dx$ requires Monte Carlo methods, which becomes infeasible in high dimensions due to the absence of a closed-form expression for this integral.

In summary, existing hyperparameter selection methods are not well-suited for BWDDE in high-dimensional cases due to the lack of closed-form integrals. To address this, we propose the Surrogate Risk Minimization (SRM) approach, which allows automatic weight selection through a more computationally feasible optimization, distinguishing our method from other nearest-neighbor-based density estimators.

\paragraph{Surrogate risk.}
In the context of density estimation, 
we consider the absolute loss function
$L : \mathcal{X} \times \mathbb{R} \to [0, \infty)$,
defined as $L(x, t) := |f(x) - t|$, to measure the discrepancy between an estimate and the true density function $f$.
Let $D_s^B = \cup_{b=1}^B D_s^b$ denote the union of the $B$ sub-datasets, where each $D_s^b = \{X_1^b, \ldots, X_s^b\}$.
The \textit{empirical risk} of $f_n^B$ with respect to $D_s^B$ is given by
\begin{align}\label{RiskAbsoluteHypo}
\mathcal{R}_{L,D_s^B}(f_n^B)
:= \frac{1}{Bs}\sum_{b=1}^B \sum_{i=1}^s\bigl| f_n^B(X_i^b) - f(X_i^b) \bigr|.
\end{align}
As noted in \cite{devroye2001combinatorial,hang2018kernel}, the absolute loss is a reasonable choice for density estimation due to its invariance under monotone transformations. Moreover, it is proportional to the total variation metric, providing a more interpretable measure of proximity to the true density.

Since the underlying density function $f$ in \eqref{RiskAbsoluteHypo} is unknown, standard optimization techniques for parameter selection cannot be directly applied to weight selection in density estimation. To address this, we seek a \textit{surrogate} for the empirical risk in \eqref{RiskAbsoluteHypo} and minimize it to determine the nearest neighbor weights. To proceed, we introduce the following regularity assumptions on the underlying probability distribution $\mathrm{P}$.

\begin{assumption}\label{asp::holder}

Assume that $\mathrm{P}$ has a Lebesgue density $f$ with support $\mathcal{X}= [0,1]^d$. 
\begin{enumerate}
\item[(i)][\textbf{Lipschitz Continuity}] The density $f$ is Lipschitz continuous on $[0,1]^d$, i.e., for all $x, y \in [0,1]^d$, there exists a constant $c_L > 0$ such that $|f(x) - f(y)| \leq c_L \|x - y\|_2$.
\item[(ii)][\textbf{Boundness}] There exist constants $\overline{c} \geq \underline{c} > 0$ such that $\underline{c}\leq f(x) \leq \overline{c}$ for all $x \in \mathcal{X}$.
\end{enumerate}

\end{assumption}

The smoothness assumption is necessary for bounding the variation of the density function and is a common approach in density estimation \citep{dasgupta2014optimal,jiang2017uniform}. This assumption helps prevent overfitting and provides a more stable density estimate.
The assumption of lower boundedness of the density has been commonly employed in prior work, such as \citep{dasgupta2014optimal, zhao2022analysis}, to establish finite-sample rates for $k$-NN density estimation. This assumption simplifies deriving finite-sample bounds for $k$-distances (see Lemma 12), thereby providing a clearer characterization of the surrogate risk in the following Proposition. We emphasize that although this assumption aids theoretical analysis, our proposed algorithm remains applicable in broader settings. Further discussions can be found after Theorem \ref{thm::bagaverage}.

Under these assumptions, along with additional conditions on the weights, the next proposition presents a surrogate for the empirical risk  \eqref{RiskAbsoluteHypo}.

\begin{proposition}[Surrogate Risk] \label{pro::I456sur}
Let Assumption \ref{asp::holder} hold, and let $L$ denote the absolute value loss. 
Let $\{D_s^b\}_{b=1}^B$ be $B$ disjoint subsets randomly drawn from $D_n$, with $D_s^b = \{X_1^b, \ldots, X_s^b\}$, and define $\overline{R}_{s,(i)}^b :=  \sum_{j=1}^s R_{s,(i)}^b(X_j^b)/s$ as the   average $i$-distances for any integer $i \leq s$ on the subset $D_s^b$.
Furthermore, let $f$ be the true density function and $f_n^B$ be the BWDDE as in \eqref{equ::flambdabx}. 
Moreover, let $k^b :=k(w^b ):=\sup\{i\in [s]: w^b_i\neq 0\}$,
$\underline{k}:=\min_{b \in [B]} k^b $,
and $\overline{k}:=\max_{b \in [B]} k^b $.
Finally, suppose that the following four conditions hold:
\begin{enumerate}
\item[(i)] There exists a sequence $c_n\asymp \log n$ such that
$\displaystyle \sum_{i=1}^{c_n} w_i^b \lesssim (\log n)/k^b$ for all $b \in [B]$;
\item[(ii)] $\underline{k}\gtrsim (\log n)^2$, $\underline{k} \asymp \overline{k}$, $B \geq 2(d^2+4)(\log n)/3$, $B \lesssim (\overline{k}/(\log n))^{1+2/d}$, $\log s\asymp \log n$, and $s\geq  \max\{c_1',2\overline{k}\}$, where $c_1'$ is a constant defined in Lemma \ref{lem::Rrho};
\item[(iii)] $\|w^b\|_2 \gtrsim \bigl( k^b \bigr)^{-1/2}$ and $\displaystyle \sum_{i=1}^s i^{1/d} w_i^b \asymp \bigl( k^b \bigr)^{1/d}$ for $b \in [B]$;
\item[(iv)]
There exist constants $C_{n,i}$ such that
$\displaystyle \max_{b \in [B]} w^b_i \lesssim C_{n,i}$ for $i\in [s]$ and $\displaystyle \sum_{i=1}^s i^{1/d-1/2} C_{n,i} \lesssim \overline{k}^{1/d-1/2}$.
\end{enumerate}
Then, there exists $N_1^*\in \mathbb{N}$, specified in the proof, such that for all $n\geq N_1^*$ and $X_i^b$ satisfying $B(X_i^b,R_{s,(k^{b'})}^{b'}(X_i^b))\subset [0,1]^d$ for all $b'\in [B]$, there holds
\begin{align}
\label{equ:bagupperbound}
L(X_i^b, f_n^B)
\lesssim  \sqrt{(\log s)/B} \cdot \|w^b\|_2 + R^{w,b}_s(X_i^b), 
\qquad 
i \in [s],
\, 
b \in [B],
\end{align}
with probability $\mathrm{P}^{Bs}$ at least $1-4/n^2$.
Furthermore, we have
\begin{align}\label{equ::bagerub}
\mathcal{R}_{L,D_s^B}(f_n^B)
\lesssim \mathcal{R}_{L,D_s^B}^{\mathrm{sur}}(f_n^B)&:=
\frac{1}{B}\sum_{b=1}^B\biggl(\sqrt{(\log s)/B} \cdot \|w^b\|_2 + \frac{1}{s}\sum_{i=1}^s R^{w,b}_s(X_i^b)\biggr)\nonumber\\
&=\frac{1}{B}\sum_{b=1}^B\biggl(\sqrt{(\log s)/B} \cdot \|w^b\|_2 + \sum_{i=1}^s w_i^b  \overline{R}_{s,(i)}^b\biggr).
\end{align}
\end{proposition}
The term on the right-hand side of \eqref{equ::bagerub} is referred to as the \textit{surrogate risk}. A smaller surrogate risk clearly corresponds to higher accuracy in BWDDE.
Condition \textit{(i)} and \textit{(ii)} imply that $\sum_{i=1}^{c_n} w_i^b \to 0$ as $n \to \infty$ for all $b \in [B]$. This ensures that the weights are not overly concentrated in the first $c_n$ nearest neighbors, promoting a more balanced distribution of weights across the data points.
The first requirement in \textit{(ii)} ensures that the number of nearest neighbors at each bagging round is at least of the order $(\log n)^2$, which aligns with the condition $k\to \infty$ in \cite{moore1977large, dasgupta2014optimal}. The second requirement mandates that the number of non-zero nearest neighbors across different subsets remains of the same order. The third and fourth requirements impose lower and upper bounds on the number of bagging rounds $B$. Since the subsets are drawn without replacement, a very large $B$ can result in a small sample size in each subset, which would increase the estimation error. Conversely, if $B$ is too small, the density estimator may not benefit sufficiently from bagging. The last two conditions in \textit{(ii)} impose bounds on $s$ for similar reasons.
Condition \textit{(iii)} on the relationship between $w^b$ and $k^b$ is satisfied for commonly used weight choices for nearest neighbors.
Finally, condition \textit{(iv)} requires that the moments of the weights be bounded by powers of $\overline{k}$.
The condition $B(X_i^b, R_{s,(k^{b'})}^{b'}(X_i^b)) \subset [0,1]^d$ for all $b'\in [B]$ ensures that the $k^{b'}$-distance ball is fully contained within the cube $[0,1]^d$ for $b'\in [B]$. This is crucial because if the ball extends beyond the cube, the density estimator may not be consistent under Assumption \ref{asp::holder}.

While the conditions in Proposition 1 may appear complex, they encompass commonly used weight choices. For example, under a uniform weight distribution for the nearest neighbors, we have
$w_i^b = \eins\{i\leq k\}/k$ for $i\in [s]$ and $b\in [B]$, where $k\in [s]$ is fixed.
Under this setting, condition $(i)$ is directly satisfied, and condition $(ii)$ holds with appropriately chosen parameters.
Regarding condition $(iii)$, we obtain $
\|w^b\|_2 = \bigl( k^b \bigr)^{-1/2} = 1/\sqrt{k}$ and $\sum_{i=1}^s i^{1/d} w_i^b = \sum_{i=1}^k i^{1/d}/k \asymp k^{1/d}$ for all $b\in [B]$.
If we set $C_{n,i}=\eins\{i\leq k\}/k$ for $i\in [s]$, it follows that $
\sum_{i=1}^s i^{1/d-1/2}C_{n,i} =\sum_{i=1}^k i^{1/d-1/2}/k \asymp k^{1/d-1/2}$,
implying that all conditions hold for bagged uniformly weighted $k$-distance density estimation.
With minor modifications, these arguments extend to the more general case $w_i = \int_{((i-1)/k,i/k]}\nu(dt)$, for $1\leq i\leq k$, with $w_i=0$ otherwise, where $\nu$ is the probability measure associated with $\mathrm{Beta}(\alpha,1)$ for $\alpha\geq 1$.

These conditions are primarily used to derive our surrogate risk, which leads to a new and effective algorithm without hyperparameter tuning. In fact, Proposition \ref{lem::order:k:w:lambda} in Section \ref{sec::analysissr} ensures that our proposed method satisfies these conditions with high probability, making it unnecessary to verify them in practical applications.

\paragraph{Surrogate risk minimization (SRM).} 
From the expression of the surrogate risk in \eqref{equ::bagerub}, minimizing the surrogate risk is equivalent to solving the following optimization problems:
\begin{align}\label{equ::bagwstar}
w^{b,*}:=\argmin_{w^b\in \mathcal{W}_s} \; \sqrt{(\log s)/B} \cdot \|w^b\|_2 + \sum_{i=1}^s w_i^b \overline{R}^b_{s,(i)}, 
\qquad 
b \in [B].
\end{align}

A closer examination of the optimization problems in \eqref{equ::bagwstar} reveals that each consists of two components. The first term is proportional to the $\ell_2$-norm of the weights $w^b$, while the second term represents a linear combination of these weights.

Without the first term, the optimization objective in \eqref{equ::bagwstar} reduces to the second term, $\sum_{i=1}^s w_i^b \overline{R}_{s,(i)}^b$, which attains its minimum when $w^b=(1,0,\ldots,0)$. In this case, the weighted $k$-distance simplifies to $R_s^{w,b}(x) = R_{s,(1)}^b(x)$, representing the distance from $x$ to its nearest neighbor. This often leads to overfitting in density estimation, as it fails to incorporate information from other nearest neighbors.

By introducing the $\|w^b\|_2$ term into the minimization problem \eqref{equ::bagwstar}, we mitigate the overfitting issue. 
The $\ell_2$-norm $\|w^b\|_2$ attains its maximum value of $1$ when all weight is assigned to a single nearest neighbor, i.e., when $w_i^b = 1$ for some $i \in [s]$ and $w_j^b = 0$ for all $j \neq i$. Conversely, it reaches its minimum value of $n^{-1/2}$ when the weights are uniformly distributed as $w^b = (1/n, \ldots, 1/n)$. Consequently, incorporating this term encourages the weights to be distributed across multiple nearest neighbors, thereby preventing overfitting. As a result, $\|w^b\|_2$ serves as a \textit{regularization} term in the minimization problem \eqref{equ::bagwstar}.

\paragraph{Solution to SRM.}
Notice that \eqref{equ::bagwstar} is a convex optimization problem solved efficiently from the data. 
For a fixed $b \in [B]$, considering the constraint Lagrangian, we have
\begin{align*}
\mathcal{L}(w^b,\mu^b,\nu^b):=\sqrt{(\log s)/B}\cdot \|w^b\|_2 + \sum_{i=1}^s w_i^b\overline{R}_{s,(i)}^b +\mu^b\biggl(1-\sum_{i=1}^s w_i^b\biggr)-\sum_{i=1}^s \nu_i^b w_i^b,
\end{align*}
where $\mu^b\in \mathbb{R}$ and $\nu_1^b,\ldots\nu_s^b\geq 0$ are the Lagrange multipliers.
Since \eqref{equ::bagwstar} is a convex optimization problem, the solution satisfying the KKT conditions is a global minimum.
Setting the partial derivative
of $\mathcal{L}(w^b,\mu^b,\nu^b)$ with respect to $w_i^b$ to zero gives:
\begin{align}\label{equ::logswi}
\sqrt{(\log s)/B}\cdot
{w_i^b}/{\|w^b\|_2}=\mu^b+\nu_i^b-\overline{R}_{s,(i)}^b.
\end{align}
Since $w^{b,*}$ is the optimal solution of \eqref{equ::bagwstar}, the KKT conditions imply that if $w_i^{b,*} > 0$, then $\nu_i^b = 0$. Conversely, if $w_i^{b,*} = 0$, then $\nu_i^b \geq 0$, which implies $\overline{R}_{s,(i)}^b\geq \mu^b$.
Therefore, $w_i^{b,*}$ is proportional to $\mu^b-\overline{R}_{s,(i)}^b$ for all nonzero entries.
This together with the equality constraint $\sum_{i=1}^s w_i^{b,*} = 1$ yields that $w_i^{b,*}$ has the form
\begin{align*}
w_i^{b,*} = \frac{\bigl(\mu^b-\overline{R}_{s,(i)}^b\bigr)\cdot \eins\{\overline{R}_{s,(i)}^b< \mu^b\}}{ \sum_{i=1}^s \bigl(\bigl(\mu^b-\overline{R}_{s,(i)}^b\bigr)\cdot \eins\{\overline{R}_{s,(i)}^b< \mu^b\}\bigr)}
\end{align*}
for $i\in [s]$.
Since $\overline{R}_{s,(i)}^b$ becomes larger as $i$ increases, the formulation above shows that $w_i^{b,*}$ becomes smaller as $i$ increases.
Moreover, the optimal weights have a cut-off effect that only nearest neighbors near $x$, i.e.~$\overline{R}_{s,(i)}^b< \mu^b$ are considered in the solution, while the weights for the remaining nearest neighbors are all set to zero.
This is consistent with our usual judgment: the closer the neighbor, the greater the impact on the density estimation.

There are many efficient methods to solve the convex optimization problem \eqref{equ::bagwstar}. Here, we follow the method developed in \cite{anava2016k,dong2020lookcom,sheng2023tnn}.
The key idea is to add nearest neighbors in a greedy manner based on their distance from $x$ until a stopping criterion is met. 
We present it in Algorithm \ref{alg::SRM}.

\begin{algorithm}[h]
\caption{Surrogate Risk Minimization (SRM)}
\label{alg::SRM}
\KwIn{Average $i$-distances $\overline{R}_{s,(i)}^b$, $1\leq i\leq s$.
} 
Let $r_i=\sqrt{B/\log s}\cdot \overline{R}_{s,(i)}^b $, $1\leq i\leq s$.\\
Set $\mu_0=r_1+1$ and $k=0$.\\
\While{$\mu_k>r_{k+1}$ and $k\leq s-1$}{
$k\gets k+1$,\\
$\mu_k=\left(\sum_{j=1}^k r_j +\sqrt{k+ (\sum_{j=1}^kr_j)^2-                         k\sum_{j=1}^kr_j^2}\right)/k$.
}
Compute $A=\sum_{i=1}^{s}((\mu_{k}-r_i)\cdot \mathbf{1}(r_i<\mu_{k}))$.\\
Compute $w_i^{b,*} = {(\mu_{k}-r_i)\cdot \mathbf{1}(r_i<\mu_{k})}/A$, $1\leq i\leq s$.\\
\KwOut{Weights $w^{b,*}$.}
\end{algorithm}

\paragraph{Density estimation.}
The discussions above indicate that the minimization problem \eqref{equ::bagwstar} provides a practical method for determining the weights of nearest neighbors in density estimation.

However, these weights depend on the data, introducing statistical dependence between the weights
$w_i^{b,*}$ and the $i$-distance $R_{s,(i)}^b(x)$ for $i\in [s]$ in each bag $b \in [B]$.
Such dependence complicates the theoretical analysis of weighted $k$-distances and density estimation within the framework of statistical learning theory.

To address this issue, we employ distinct subsets of the data for computing the $k$-distances, ensuring their independence from the optimized weights. For simplicity, we assume that $n = 2Bs$, which facilitates a more straightforward theoretical derivation without loss of generality. This assumption ensures that each subset is of equal size; if $n$ is not exactly divisible by $2B$, minor modifications to the partitioning scheme can be applied without affecting the theoretical conclusions.
We randomly partition $D_n$ into $B$ disjoint pairs of subsets $\{(D_s^b, \widetilde{D}_s^b)\}_{b=1}^B$, where each subset has size $s$ and $D_s^b\cap \widetilde{D}_s^b=\emptyset$ for each $b$. Let $w^{b,*}$ represent the weights obtained by solving the optimization problem \eqref{equ::bagwstar} using $\{D_s^b\}_{b=1}^B$, and let $\widetilde{R}_{s,(i)}^b(x)$ denote the $i$-distance of $x$ computed using $\{\widetilde{D}_s^b\}_{b=1}^B$ for $i\in [s]$.
Then, we define the \textit{regularized $k$-distances} as follows:
\begin{align}\label{equ::rbstars}
R^{b,*}_s(x) := R^{w^{b,*},b}_s(x) = \sum_{i=1}^s w^{b,*}_i \widetilde{R}_{s,(i)}^b(x).
\end{align}
We refer to the weighted average of these $k$-distances as the \textit{bagged regularized $k$-distance}
\begin{align}\label{equ::rlambdaBx}
R^{B,*}_n(x) := \frac{1}{B} \sum_{b=1}^B R^{b,*}_s(x).
\end{align}
By incorporating the $w^{b,*}$ and $R_n^{B,*}(x)$ into the BWDDE formula \eqref{equ::flambdabx}, 
we are able to obtain a new nearest-neighbor-based density estimator called \textit{bagged regularized $k$-distances for density estimation} (\textit{BRDDE}), expressed as 
\begin{align}\label{equ::fbstar}
f_n^{B,*}(x)
:= \frac{1}{V_d {R}_n^{B,*}(x)^d}
\biggl( \frac{1}{B} \sum_{b=1}^B \sum_{i=1}^s w_i^{b,*} \gamma_{s,i}\biggr)^d.
\end{align}
This minimization approach distinguishes our BRDDE from existing nearest-neighbor-based density estimators. 
Specifically, they suffer from the sensitivity to the choice of the hyperparameter $k$, since the selection of $k$ is inherently difficult due to the lack of supervised information. On the contrary, when the number of bagging rounds $B$ is fixed, SRM enables the calculation of the weights of nearest neighbors in each subset $D_s^b$ by solving the convex optimization problem based on the average $i$-distance $\overline{R}_{s,(i)}^b$ as in equation \eqref{equ::bagwstar}.
As a result, we successfully address the hyperparameter selection challenge without changing the unsupervised nature of the problem.

\paragraph{Anomaly Detection.}
By applying BRDDE to all samples, anomalies can be identified as instances with lower BRDDE values. However, explicit density estimation is not necessary for anomaly detection.
Since density estimates serve as anomaly scores, any monotone transformation preserves their ranking (possibly in reverse) and maintains the same family of level sets, leading to identical AUC values.
Thus, bagged regularized $k$-distances provide a sufficient and practical alternative for anomaly detection. They can be accurately computed in high-dimensional spaces, and their associated weights can be efficiently optimized from \eqref{equ::bagwstar}, making them an effective approach for density-based anomaly detection.

We now introduce our anomaly detection algorithm, \textit{bagged regularized $k$-distances for anomaly detection} (\textit{BRDAD}). 
The dataset $D_n$ is sorted into the sequence $\{X_1',\ldots,X_n'\}$ based on their bagged regularized $k$-distances in descending order, i.e.~$R_n^{B,*}(X_1') \geq \cdots \geq R_n^{B,*}(X_n')$.
Given the pre-specified number of anomalies $m$, the first $m$ instances $\{X_i'\}_{i=1}^m$, are considered as the $m$ anomalies. The complete procedure of our BRDAD algorithm is presented in Algorithm \ref{alg:BRDAD}.
As illustrated above, SRM mitigates the challenge of hyperparameter selection in density estimation. Consequently, BRDAD retains the advantages of BRDDE by using the same weights to address the sensitivity of hyperparameter selection in nearest-neighbor-based methods for unsupervised anomaly detection.

\begin{algorithm}[t]
\caption{Bagged Regularized $k$-Distances for Anomaly Detection (BRDAD)}
\label{alg:BRDAD}
\KwIn{
Data $D=\{X_1,\cdots,X_n\}$; Number of anomalies $m$; \\
\qquad\qquad Bagging rounds $B$; Subsampling size $s$.
} 
Randomly partition $D_n$ into $B$ disjoint pairs of subsets, denoted as $\{(D_s^b, \widetilde{D}_s^b)\}_{b=1}^B$, such that $D_s^b \cap \widetilde{D}_s^b = \emptyset$ for each $b$. \\
\For{$b \in [B]$}
{Compute weights $w^{b,*}$ using $D_s^b$ by \eqref{equ::bagwstar};\\
Compute the regularized $k$-distances $R^{b,*}_s(X_i)$ for $1\leq i\leq n$ using \eqref{equ::rbstars}, based on $w^{b,*}$ and $\widetilde{D}_s^b$.}
Compute the bagged regularized $k$-distances $R_n^{B,*}(X_i)$ by \eqref{equ::rlambdaBx} for $1\leq i\leq n$.\\
Sort the data $D_n = \{X_1, \ldots, X_n\}$ as $\{X_1', \ldots, X_n'\}$ in descending order according to their bagged regularized $k$-distances, i.e., $
R_n^{B,*}(X_1') \geq \cdots \geq R_n^{B,*}(X_n')$.\\
\KwOut{Anomalies $\{X_i'\}_{i=1}^m$.}
\end{algorithm}
 
Algorithm \ref{alg:BRDAD} is based on the assumption that $\mathrm{P}_1$ follows a uniform distribution. However, when prior knowledge about the density function of anomaly is available, anomalies can still be detected using the bagged regularized $k$-distances $R_n^{B,*}(x)$.  
Consider the Huber model in \eqref{equ::hubermodel}, where $\mathrm{P}, \mathrm{P}_0,$ and $\mathrm{P}_1$ have densities $f, f_0,$ and $f_1$, respectively. Defining $h := f_0 / f_1$ and $\rho := \Pi / (1 - \Pi)$, \citet[Corollary 3]{steinwart2005classification} establishes that instances in the set $\{X_i : h(X_i) \leq \rho\}$ can be recognized as anomalies with theoretical guarantees. This set can be equivalently written as $\{X_i : f(X_i) / f_1(X_i) \leq 2 \Pi\}$, based on the relationship between the densities.  
Since $R_n^{B,*}(x)^d$ is inversely proportional to the density estimate $f_n^{B,*}(x)$, anomalies can be identified by sorting the data according to:
\begin{align*}
f_1(X_1')^{1/d} R_n^{B,*}(X_1') \geq \cdots \geq f_1(X_n')^{1/d} R_n^{B,*}(X_n'),
\end{align*}
where the top $m$ instances $\{X_i'\}_{i=1}^m$ are considered anomalies.  
Furthermore, the theoretical results in Section \ref{sec::TheoreticalResults} can be extended to this setting with minor modifications, ensuring the generality of our approach.

\section{Theoretical Results} \label{sec::TheoreticalResults}

In this section, we present theoretical results related to our BRDAD algorithm. 
We first investigate the Huber contamination model in Section \ref{sec::hubermodel}, in which we can analyze the performance of the bagged regularized $k$-distances from a learning theory perspective.
Then, we present the convergence rates of BRDDE and BRDAD in Section \ref{sec::convergencebrdde}
and \ref{sec::ratebrdad}, respectively. Finally, we provide comments and discussions on our algorithms and theoretical results in Section \ref{sec::theoreticalcomments}.
We also compare our theoretical findings on the convergences of both BRDDE and BRDAD with other nearest-neighbor-based methods in this section.

\subsection{Huber Contamination Model}\label{sec::hubermodel}

In the Huber contamination model (HCM) in Assumption \ref{asp::huber}, for every instance $X$ from $\mathrm{P}$, we can use a latent variable $Y\in \{0,1\}$ that indicates which distribution it is from. 
More specifically, $Y=0$ and $Y=1$ indicate that the instance is from the normal and the anomalous distribution, respectively.
As a result, the anomaly detection problem can be converted into a bipartite ranking problem where instances are labeled positive or negative implicitly according to whether it is normal or not.
Let $\widetilde{\mathrm{P}}$ represent the joint probability distribution of $\mathcal{X}\times \mathcal{Y}$. 
In this case, our learning goal is to learn a score function that minimizes the probability of mis-ranking a pair of normal and anomalous instances, i.e.~that maximizes the area under the ROC curve (AUC).
Therefore, we can study regret bounds for the AUC of the bagged regularized $k$-distances to evaluate its performance from the learning theory perspective.
Let $r:\mathcal{X}\to \mathbb{R}$ be a score function, then the $\mathrm{AUC}$ of $r$ can be written as 
\begin{align*}
\mathrm{AUC}(r)=\mathbb{E}\bigl[\eins\{(Y-Y')(r(X)-r(X')>0)\}+\eins\{r(X)=r(X')\}/2|Y\neq Y'\bigr],
\end{align*}
where $(X,Y)$, $(X',Y')$ are assumed to be drawn $\mathrm{i.i.d.}$~from $\widetilde{\mathrm{P}}$.
In other words, the $\mathrm{AUC}$ of $r$ is the probability that a randomly drawn anomaly is ranked higher than a randomly drawn normal instance by the score function $r$. 
Given the HCM in \eqref{equ::hubermodel} and the assumption that $\mathrm{P}_1$ is uniformly distributed over $[0,1]^d$ in Assumption \ref{asp::huber}, the posterior probability function with respect to $\widetilde{\mathrm{P}}$ is given by  
\begin{align}\label{equ::etax}
\eta(x):=\widetilde{\mathrm{P}}(Y=1|X=x)=\frac{\Pi f_1(x)}{(1-\Pi)f_0(x)+\Pi f_1(x)} = \Pi f(x)^{-1}.
\end{align}
Then, the optimal $\mathrm{AUC}$ is defined as 
\begin{align*}
\mathrm{AUC}^*:=\sup_{r:\mathcal{X}\to \mathbb{R}} \mathrm{AUC}(r) = 1- \frac{1}{2\Pi(1-\Pi)} \mathbb{E}_{X,X'}\bigl[\min\bigl(\eta(X)(1-\eta(X')),\eta(X')(1-\eta(X)\bigr)\bigr].
\end{align*}
Finally, the $\mathrm{AUC}$ regret of a score function $r$ is defined as 
\begin{align*}
\mathrm{Reg}^{\mathrm{AUC}}(r):=\mathrm{AUC}^*-\mathrm{AUC}(r).
\end{align*}

As discussed in Section \ref{sec::weightdkad}, BRDAD is a density-based anomaly detection method. To establish its convergence rates under the Huber contamination model, we first derive the theoretical convergence rates of BRDDE in \eqref{equ::fbstar}, which are presented in the next subsection.

\subsection{Convergence Rates of BRDDE}\label{sec::convergencebrdde}

The convergence rates of BRDDE are presented in the following Theorem.

\begin{theorem}
\label{pro::rateBWDDE}
Let Assumption \ref{asp::holder} hold. 
Suppose that the dataset $D_n$ is randomly partitioned into $B$ disjoint pairs of subsets, denoted as $\{(D_s^b, \widetilde{D}_s^b)\}_{b=1}^B$, such that $D_s^b \cap \widetilde{D}_s^b = \emptyset$ for each $b$.   
Let $f$ be the true density function, and let $f_n^{B,*}$ be the BRDDE defined in \eqref{equ::fbstar}.  
If we choose
\begin{align}\label{equ::choicesb}
s\asymp(n/\log n)^{(d+1)/(d+2)} 
\quad
\text{and} 
\quad
B\asymp n^{1/(d+2)}(\log n)^{(d+1)/(d+2)},
\end{align}
then there exists $N_2^*\in \mathbb{N}$, which will be specified in the proof, such that for all $n>N_2^*$, with probability $\mathrm{P}^n$ at least $1-4/n^2$, we have
\begin{align*}
\int_{\mathcal{X}}|f_n^{B,*}(x) - f(x)| \, dx
\lesssim n^{-1/(2+d)}(\log n)^{(d+3)/(d+2)}.
\end{align*}
\end{theorem}

The convergence rate of the $L_1$-error of BRDDE in the above theorem matches the minimax lower bound established in \cite{zhao2020analysis} when the density function is Lipschitz continuous. 
Therefore, BRDDE attains the optimal convergence rates for density estimation.
As a result, the SRM procedure in Section \ref{sec::datadrivennnad} turns out to be a promising approach for determining the weights of nearest neighbors for BWDDE.

Moreover, notice that the number of iterations required in the optimization problem \eqref{equ::bagwstar} at each bagging round depends on the sub-sample size $s$.
In Theorem \ref{pro::rateBWDDE}, the choice of $s$ is significantly smaller than $n$, indicating that fewer iterations are required at each bagging round.
This explains the computational efficiency of incorporating the bagging technique if parallel computation is employed.
However, due to the dependence in $d$, this improvement becomes less and less significant in high dimension.
Further discussions on the complexity are presented in Section \ref{sec:complexity}.

\subsection{Convergence Rates of BRDAD}
\label{sec::ratebrdad}

The next theorem provides the convergence rates for BRDAD.

\begin{theorem}\label{thm::bagaverage}
Let Assumptions \ref{asp::huber} and  \ref{asp::holder} hold.  
Suppose the conditions in Theorem \ref{pro::rateBWDDE} hold, including the dataset partitioning and choice of parameters $s$ and $B$.
Let $R_n^{B,*}$ be the bagged regularized $k$-distances returned by Algorithm \ref{alg:BRDAD}.  
Then there exists $N_2^*\in \mathbb{N}$, as specified in Theorem \ref{pro::rateBWDDE}, such that for all $n>N_2^*$, with probability $\mathrm{P}^n$ at least $1 - 4/n^2$, we have  
\begin{align*}
\mathrm{Reg}^{\mathrm{AUC}}(R_n^{B,*})  \lesssim n^{-1/(2+d)}(\log n)^{(d+3)/(d+2)}.
\end{align*}
\end{theorem}

Theorem \ref{thm::bagaverage} establishes that, up to a logarithmic factor, the AUC regret of BRDAD converges at a rate of $\mathcal{O}(n^{-1/(d+2)})$, provided that the number of bagging rounds $B$ and the subsample size $s$ are chosen as in \eqref{equ::choicesb}.
Notably, the parameter choices and convergence rates in Theorem \ref{thm::bagaverage} align with those in Theorem \ref{pro::rateBWDDE} for BRDDE.
This follows from the fact that BRDAD is a density-based anomaly detection method built upon BRDDE.

Although surrogate risk minimization is formulated under Assumption \ref{asp::holder}, we note that these assumptions can be relaxed, and similar convergence rates for our BRDDE and BRDAD remain valid under more general conditions. In particular, Lipschitz continuity naturally extends to manifolds: if the data is supported on a $d'$-dimensional manifold $\mathcal{M}$ with density $f$ absolutely continuous with respect to the manifold's volume measure, then $f$ is Lipschitz continuous on $\mathcal{M}$ if there exists a constant $c_L>0$ such that $|f(x)-f(y)| \leq c_L d_{\mathcal{M}}(x,y)$ for all $x,y\in \mathcal{M}$, where $d_{\mathcal{M}}(x,y)$ denotes the geodesic distance. This assumption aligns with that in the prior literature \citep{berenfeld2021density}.
Additionally, the assumption of lower boundedness can be relaxed with suitable conditions on the tail behavior of the density, as discussed in \citet{zhao2022analysis}. Since geodesic distances on manifolds are well approximated by Euclidean distances within a small neighborhood \citep{niyogi2008finding, berenfeld2021density}, our theoretical results for SRM and the convergence rates of BRDDE and BRDAD can be extended to cases where both assumptions are relaxed.
However, our current focus remains on hyperparameter sensitivity in density estimation; therefore, a detailed exploration of these extensions is beyond the scope of this paper.

\subsection{Comments and Discussions}
\label{sec::theoreticalcomments}

By reformulating the analysis of bagged regularized $k$-distances in terms of BRDDE within a statistical learning framework \citep{vandervaart1996weak}, we establish convergence rates for the AUC regret of bagged regularized $k$-distances under the Huber contamination model, assuming mild regularity conditions on the density function (Theorem \ref{thm::bagaverage}). Notably, our findings reveal that the convergence rate of the AUC regret of BRDAD matches that for density estimation, indicating the effectiveness of BRDAD.

In contrast, previous theoretical studies on distance-based methods for unsupervised anomaly detection did not establish a connection between distance-based algorithms and density estimation, leaving the convergence rates unaddressed. For instance, \cite{sugiyama2013rapid} introduced a sampling-based outlier detection method and analyzed its effectiveness compared to traditional $k$-nearest neighbors approaches, but without a rigorous theoretical foundation. More recently, \cite{gu2019statistical} conducted a statistical analysis of distance-to-measure (DTM) for anomaly detection under the Huber contamination model, assuming specific regularity conditions on the distribution, and showed that anomalies can be identified with high probability. However, since these studies did not derive convergence rates for the AUC regret, their results are not directly comparable to ours.

\section{Error and Complexity Analysis} \label{sec::ErrorAnalysis}

In this section, we present the error analysis of the AUC regret and the complexity analysis of our algorithm. 
In detail, in Section \ref{sec::errorBWDDE}, we provide the error decomposition of the surrogate risk, which leads to the derivation of the surrogate risk in Proposition \ref{pro::I456sur} in Section \ref{sec::datadrivennnad}. 
Furthermore, in Section \ref{sec::LearningTheoryAnalysis}, we illustrate the three building blocks in learning the AUC regret,
which indicates the way to establish the convergence rates of both BRDDE and BRDAD in Theorem \ref{pro::rateBWDDE} and \ref{thm::bagaverage} in Section \ref{sec::ratebrdad}.
Finally, we analyze the time complexity of BRDAD and illustrate the computational efficiency of BRDAD compared to other distance-based methods for anomaly detection in Section \ref{sec:complexity}.

\subsection{Error Analysis for the Surrogate Risk}\label{sec::errorBWDDE}

In this section, we first provide the error decomposition for the BWDDE $f_n^B(x)$ in \eqref{equ::flambdabx}.
Then, we present the upper bounds for these error terms.

Let the term $(I)$ be defined as
\begin{align}\label{equ::i4}
(I) := \frac{1}{ V_d R_n^B (x)^d}
\sum_{j=0}^{d-1} \biggl( \frac{1}{B} \sum_{b=1}^B \sum_{i=1}^s w_i^b  \gamma_{s,i} \biggr)^j \bigl(V_d^{1/d}  f(x)^{1/d} R_n^B(x) \bigr)^{d-1-j}.
\end{align}
Using the triangle inequality and the equality 
\begin{align}\label{equ::dequality}
x^d - y^d
= (x - y) \cdot \sum_{i=0}^{d-1} x^i y^{d-1-i},
\end{align}
we obtain
\begin{align}\label{equ::flambdabdec}
\bigl| f_n^B(x) - f(x) \bigr|
& = \frac{1}{V_d R_n^B (x)^d} \cdot
\biggl| \biggl( \frac{1}{B} \sum_{b=1}^B \sum_{i=1}^s w_i^b  \gamma_{s,i} \biggr)^d - V_d f(x) R_n^B(x)^d \biggr|
\nonumber\\
& = (I) \cdot \biggl| \frac{1}{B} \sum_{b=1}^B \sum_{i=1}^s w_i^b  \gamma_{s,i} - V_d^{1/d} f(x)^{1/d} R_n^B(x) \biggr|
\nonumber\\
& =  (I) \cdot \biggl| \frac{1}{B} \sum_{b=1}^B \sum_{i=1}^s w_i^b  \gamma_{s,i} - \frac{1}{B}\sum_{b=1}^B \sum_{i=1}^s w_i^b V_d^{1/d} f(x)^{1/d} R_{s,(i)}^b(x)  \biggr|
\nonumber\\
& \leq (I) \cdot \sum_{i=1}^s \Biggl|\frac{1}{B}\sum_{b=1}^B w_i^b \bigl(\gamma_{s,i} - V_d^{1/d} f(x)^{1/d} R_{s,(i)}^b(x) \bigr) \Biggr|.
\end{align}
If the terms $(II)$ and $(III)$ are defined respectively as
\begin{align}
(II) 
& := \sum_{i=1}^s \Biggl|\frac{1}{B}\sum_{b=1}^B w_i^b  \big(\gamma_{s,i} - \mathrm{P}(B(x,R_{s,(i)}^b(x)))^{1/d}\big) \Biggr|, 
\label{equ::i5} 
\\
(III)
& := \sum_{i=1}^s \Biggl|\frac{1}{B}\sum_{b=1}^B  w_i^b \bigl( \mathrm{P}(B(x,R_{s,(i)}^b(x)))^{1/d} - V_d^{1/d} f(x)^{1/d} R_{s,(i)}^b(x) \bigr) \Biggr|,
\label{equ::i6}
\end{align}
then applying the triangle inequality to \eqref{equ::flambdabdec} yields
the error decomposition 
\begin{align}\label{equ::flambdabxerror}
\bigl| f^B_n(x) - f(x) \bigr|
\leq (I) \cdot (II) + (I) \cdot (III).
\end{align}

We emphasize that $\gamma_{s,i}=\mathbb{E}[\mathrm{P}(B(x,R_{s,(i)}^b(x)))^{1/d}]$ for a fixed $x\in \mathcal{X}$, $b\in [B]$, and $i\in [s]$.
Therefore, standard concentration inequalities can be applied to establish the convergence rates for term $(II)$.
The derivation of this equality is explained as follows:
Since $\mathrm{P}$ has a density over $[0,1]^d$ with respect to the Lebesgue measure, the random variable $\|X-x\|_2$ is continuous for a fixed $x\in \mathcal{X}$. 
By the probability integral transform, $\mathrm{P}(B(x,\|X-x\|_2))$ is uniformly distributed over $[0,1]$.
For any $b \in [B]$, note that $X_1^b,\ldots,X_s^b$ are $\mathrm{i.i.d.}$~with the same distribution $\mathrm{P}$.
Let $U_1^b, \ldots, U_s^b$ be $\mathrm{i.i.d.}$~uniform $[0,1]$ random variables. Then 
\begin{align*}
\bigl(\mathrm{P}(x,\|X_1^b-x\|_2),\ldots,\mathrm{P}(x,\|X_s^b-x\|_2)\bigr)\overset{\mathcal{D}}{=} \bigl(U_1^b,\ldots,U_s^b\bigr).
\end{align*}
Reordering the samples such that $\|X_{(1)}^b(x)-x\|_2\leq \cdots\leq \|X_{(s)}^b(x)-x\|_2$, we obtain
\begin{align}\label{equ::pbxribx}
\bigl(\mathrm{P}(B(x,R_{s,(1)}^b(x))),\ldots,\mathrm{P}(B(x,R_{s,(s)}^b(x)))\bigr)\overset{\mathcal{D}}{=}\big(U_{(1)}^b,\ldots,U_{(s)}^b\big),
\end{align}
where $U_{(i)}^b$ is the $i$-th order statistic of $U_1^b,\ldots,U_s^b$. 
This reduces the study of $\mathrm{P}(B(x,R_{s,(i)}^b(x)))$ to the study of $U_{(i)}^b$. 
By Corollary 1.2 in \cite{biaulecture}, $U_{(i)}^b\sim \mathrm{Beta}(i,s+1-i)$.
Hence, $\mathbb{E}[\mathrm{P}(B(x,R_{s,(i)}^b(x)))^{1/d}] = \mathbb{E}[(U_{(i)}^b)^{1/d}] = \gamma_{s,i}$. 

In the literature on weighted nearest-neighbor density estimation \citep{biau2011weighted, biaulecture}, the expression $(i/s)^{1/d}$ is often used in place of $\gamma_{s,i}$, introducing an additional error term $|\gamma_{s,i} - (i/s)^{1/d}|$. While this error is negligible in the absence of bagging, it slows the convergence rate of term $(II)$ when bagging is employed. Therefore, using $\gamma_{s,i}$ enables a clearer demonstration of the benefits of bagging in density estimation.

The following proposition provides the upper bounds for the error terms $(I)$, $(II)$, and $(III)$, respectively.

\begin{proposition}\label{pro::I456}
Let Assumption \ref{asp::holder} hold. 
Furthermore, let $(I)$, $(II)$, and $(III)$ be defined as in \eqref{equ::i4}, \eqref{equ::i5}, and \eqref{equ::i6}, respectively.
Let $k^b$, $\underline{k}$, and $\overline{k}$ be defined as in Proposition \ref{pro::I456sur}. Suppose that the conditions $(i)-(iv)$ in Proposition \ref{pro::I456sur} hold.
Then there exists $N_1\in \mathbb{N}$, which will be specified in the proof, such that for all $n> N_1$ and $x$ satisfying $B(x,R_{s,(k^b)}^b(x))\subset [0,1]^d$ for all $b\in [B]$, the following statements hold with probability $\mathrm{P}^{Bs}$ at least $1-2/n^2$:
\begin{enumerate}
\item[$(I)$] 
$\lesssim \bigl( \overline{k} / s \bigr)^{-1/d}$;
\item[$(II)$] 
$\lesssim 
\bigl(\overline{k}/s\bigr)^{1/d}\bigl((\log n)/ (\overline{k} B)\bigr)^{1/2}$;
\item[$(III)$] 
$\lesssim (\log n)^{1+1/d}/(s^{1/d}\overline{k})+ 
(\overline{k}/s)^{2/d}$.
\end{enumerate}
\end{proposition}

\subsection{Learning the AUC Regret: Three Building Blocks} \label{sec::LearningTheoryAnalysis}

Recalling that the central concern in statistical learning theory is the convergence rates of learning algorithms under various settings. 
In Section \ref{sec::hubermodel}, we show that when the probability distribution $\mathrm{P}$ follows the Huber contamination model (HCM) in Assumption \ref{asp::huber}, 
we can use a latent variable $Y$ to indicate whether it is from the anomalous distribution. 
Moreover, the posterior probability in \eqref{equ::etax} implies that in HCM, anomalies can be identified by using the Bayes classifier with respect to the classification loss, resulting in the set of anomalies as 
\begin{align*} 
\mathcal{S}
:= \{ x \in \mathbb{R}^d : \eta(x) > 1/2 \}
= \{ x \in \mathbb{R}^d : \Pi f(x)^{-1} > 1/2 \}
= \{ x \in \mathbb{R}^d : f(x) < 2 \Pi \}.
\end{align*}
This set can be estimated by the lower-level set estimation of BRDDE at the threshold $2 \Pi $ as in \eqref{equ::fbstar}, i.e., $ \widehat{\mathcal{S}} := \{x \in \mathbb{R}^d : f_n^{B,*}(x) < 2 \Pi \}$ with $f_n^{B,*}(x)$ as defined in \eqref{equ::fbstar}. If we choose 
\begin{align*}
\theta =  \frac{1}{(2V_d\Pi )^{1/d}B}\sum_{b=1}^B \sum_{i=1}^s w_i^{b,*} \gamma_{s,i} ,
\end{align*}
then we have 
\begin{align*} 
\{ x \in \mathbb{R}^d : R_n^{B,*}(x) \geq \theta\}
= \{ x \in \mathbb{R}^d : f_n^{B,*}(x) < 2 \Pi \}
= \widehat{\mathcal{S}}.
\end{align*}
This implies that the upper-level set of bagged regularized $k$-distances, i.e., $\{ x \in \mathbb{R}^d : R_n^{B,*}(x) \geq \theta\}$, equals the estimation $\widehat{\mathcal{S}}$ with the properly chosen threshold.
As a result, the unsupervised anomaly detection problem is converted to an implicit binary classification problem.
Therefore, we are able to analyze the performance of $R_n^{B,*}(x)$ in anomaly detection by applying the analytical tools for classification.
Since the posterior probability estimation is inversely proportional to the BRDDE as shown in \eqref{equ::etax} in Section \ref{sec::hubermodel},
the problem of analyzing the posterior probability estimation can be further converted to analyzing the BRDDE.
Therefore, it is natural and necessary to investigate the following three problems:
\begin{enumerate}
\item[(i)] 
The finite sample bounds of the optimized weights $w^{b,*}$ by solving SRM problems.
\item[(ii)] 
The convergence of the BRDDE as stated in Theorem \ref{pro::rateBWDDE}, that is, whether $f_n^{B,*}$ converges to $f$ in terms of $L_1$-norm.
\item[(iii)] 
The convergence of AUC regret for $R_n^{B,*}$, i.e.,~whether the convergences of BRDDE $f_n^{B,*}$ imply the convergences of the AUC regret of $R_n^{B,*}$. 
\end{enumerate}

\begin{center}
\begin{tikzpicture}[node distance = 5cm, auto]\label{fig::learning}
\tikzstyle{block} = [rectangle, draw, text width=9.5em, text centered, rounded corners, minimum height=4em, font=\large]
\tikzstyle{line} = [draw, line width=2pt, -{latex}]  

\node [block] (init1) {
$\mathrm{Reg}^{\mathrm{AUC}}(R_n^{B,*})\to 0$};
\node [block, right of=init1] (identify1) {
$\|f_n^{B,*} - f\|_{L_1(\mathcal{X})}\to 0$\\
\vspace{+1mm}
(Theorem \ref{pro::rateBWDDE})
};
\node [block, right of=identify1] (evaluate1) {SRM};

\path [line] (identify1) -- (init1);
\path [line] (evaluate1) -- (identify1);
\end{tikzpicture}
\captionof{figure}{An illustration of the three fundamental components of AUC regret. The left block represents the consistency of AUC regret, the middle block signifies the consistency of BRDDE, and the right block corresponds to the statistical analysis of SRM, aligning with Problem (iii), (ii), and (i), respectively.}
\end{center}

The above three problems form the foundations for conducting a learning theory analysis on bagged regularized $k$-distances and serve as three main building blocks.
Notice that Problem (ii) is already provided in Theorem \ref{pro::rateBWDDE} in Section \ref{sec::convergencebrdde}. 
Detailed explorations of the other two Problems (i) and (iii), will be expanded in the following subsections.

\subsubsection{Analysis for the Surrogate Risk Minimization}
\label{sec::analysissr}

The following Proposition ensures that the solution to the surrogate risk minimization satisfies the conditions in Proposition \ref{pro::I456sur} with high probability, thereby addressing Problem (i) and validating the effectiveness of our algorithm.

\begin{proposition}\label{lem::order:k:w:lambda}
Let Assumption \ref{asp::holder} hold.
Let $\{ D_s^b\}_{b=1}^B$ be $B$ disjoint subsets of size $s$ randomly drawn from the data set $D_n$. Moreover, let $w^{b,*}$ be defined as in \eqref{equ::bagwstar}, and $k^{b,*} := k(w^{b,*}) := \sup \{ i \in [n] : w_i^{b,*} \neq 0 \}$.
If we choose $s$ and $B$ as in \eqref{equ::choicesb} in Theorem \ref{pro::rateBWDDE}, then there exists $N_2\in \mathbb{N}$, which will be specified in the proof, such that for all $n>N_2$, the following two statements hold  with probability $\mathrm{P}^{Bs}$ at least $1-1/n^2$:
\vspace{-0.5mm}
\begin{enumerate}
\item[1.] The conditions $(i)-(iv)$ in Proposition \ref{pro::I456sur} hold for $w^{b,*}$ and $k^{b,*}$.
\item[2.] Furthermore, we have $k^{b,*}\asymp (n/\log n)^{1/(d+2)}$ for $b\in [B]$.
\end{enumerate}
\end{proposition}

\subsubsection{Analysis for the AUC Regret}
\label{sec::analysisaucregret}

Problem (iii) in the left block of Figure \ref{fig::learning} is solved by the next proposition, which shows that the problem of bounding the AUC regret of the bagged regularized $k$-distances can be converted to the problem of bounding the $L_1$-error of the BRDDE.

\begin{proposition}\label{lem:lemfrometatoauc}
Let Assumptions \ref{asp::huber} and \ref{asp::holder} hold, and let $f$ be the true density function.
Let $R_n^{B,*}$ be the bagged regularized $k$-distances as in \eqref{equ::rlambdaBx} and $f_n^{B,*}(x)$ be the BRDDE as in \eqref{equ::fbstar}.
Suppose that there exists a constant $c>0$ such that $\|f_n^{B,*}\|_{\infty}\geq c$. 
Then, we have 
\begin{align*}
\mathrm{Reg}^{\mathrm{AUC}}(R_n^{B,*}) 
\displaystyle \lesssim \int_{\mathcal{X}} \bigl|f_n^{B,*}(x)-f(x)\bigr|dx.
\end{align*}
\end{proposition}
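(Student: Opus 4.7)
My plan is to use the equivalence in \eqref{equ::fwtheta} to reduce the AUC regret of $R_n^{B,*}$ to an $L_1$ estimation problem for the posterior $\eta$, and then relate that quantity to the $L_1$ error of BRDDE.

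First, from \eqref{equ::etax} and Assumption \ref{asp::huber} we have $\eta(x)=\Pi/f(x)$ on $[0,1]^d$, so the Bayes optimal ranking is the reverse ordering of $f$. From \eqref{equ::fbstar}, the map $x\mapsto R_n^{B,*}(x)$ is a strictly decreasing function of $f_n^{B,*}(x)$. Define the plug-in posterior $\widehat\eta(x):=\Pi/f_n^{B,*}(x)$; then $R_n^{B,*}$ and $\widehat\eta$ induce the same total ordering on $\mathcal{X}$, so they produce identical ROC curves and $\mathrm{AUC}(R_n^{B,*})=\mathrm{AUC}(\widehat\eta)$. Consequently $\mathrm{Reg}^{\mathrm{AUC}}(R_n^{B,*})=\mathrm{Reg}^{\mathrm{AUC}}(\widehat\eta)$, and it suffices to bound the right-hand side.

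Second, I would invoke the standard integral representation of the AUC regret (the Cl\'emen\c{c}on--Vayatis identity)
\[
\mathrm{Reg}^{\mathrm{AUC}}(\widehat\eta)=\frac{1}{2\Pi(1-\Pi)}\mathbb{E}_{X,X'}\bigl[|\eta(X)-\eta(X')|\cdot\eins\{(\eta(X)-\eta(X'))(\widehat\eta(X)-\widehat\eta(X'))<0\}\bigr],
\]
where $(X,X')$ are independent draws from $\mathrm{P}$. On the disagreement event one has $|\eta(X)-\eta(X')|\leq|\widehat\eta(X)-\eta(X)|+|\widehat\eta(X')-\eta(X')|$, and taking expectations yields
\[
\mathrm{Reg}^{\mathrm{AUC}}(\widehat\eta)\leq\frac{1}{\Pi(1-\Pi)}\int_{\mathcal{X}}|\widehat\eta(x)-\eta(x)|\,f(x)\,dx.
\]
Next I convert the integrand to the density-error scale via the direct identity
\[
|\widehat\eta(x)-\eta(x)|=\Pi\left|\frac{1}{f_n^{B,*}(x)}-\frac{1}{f(x)}\right|=\frac{\Pi\,|f(x)-f_n^{B,*}(x)|}{f(x)\,f_n^{B,*}(x)}.
\]
By Assumption \ref{asp::holder}, $f$ is bounded above by $\overline{c}$ and below by $\underline{c}>0$ on $\mathcal{X}$, and the hypothesis on $f_n^{B,*}$ supplies the required uniform control on $f_n^{B,*}$ from below; combining these gives $|\widehat\eta(x)-\eta(x)|\,f(x)\lesssim|f(x)-f_n^{B,*}(x)|$. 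Integrating over $\mathcal{X}$ completes the proof.

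The main obstacle is the boundedness control needed in the last step: because $\widehat\eta=\Pi/f_n^{B,*}$ is a reciprocal of the density estimate, small values of $f_n^{B,*}$ can blow up $\widehat\eta$ and destroy the $L_1$ comparison, so the product $f\cdot f_n^{B,*}$ in the denominator must be bounded below uniformly in $x$. The boundedness hypothesis on $f_n^{B,*}$ (together with $f\geq\underline c$ from Assumption \ref{asp::holder}) is precisely what absorbs this denominator into a constant. The order-equivalence in the first step and the $L_1$-reduction in the second step are structural and follow from standard bipartite-ranking arguments, so the only substantive analytic input beyond those is controlling the reciprocal transformation between $f_n^{B,*}$ and $\widehat\eta$.
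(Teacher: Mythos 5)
Your proposal is correct and follows essentially the same route as the paper: identify the plug-in posterior $\widehat\eta=\Pi/f_n^{B,*}$, note that it induces the same ordering as $R_n^{B,*}$ so the two AUC regrets coincide, bound the regret by $\frac{1}{\Pi(1-\Pi)}\int_{\mathcal{X}}|\widehat\eta(x)-\eta(x)|\,d\mathrm{P}_X(x)$, and then absorb the denominators in $|\widehat\eta-\eta|=\Pi|f_n^{B,*}-f|/(f_n^{B,*}f)$ using the lower bounds on $f$ and $f_n^{B,*}$ together with $f\leq\overline c$ to pass from $d\mathrm{P}_X$ to Lebesgue measure. The only difference is that you re-derive the intermediate plug-in regret bound from the Cl\'emen\c{c}on--Vayatis representation and a triangle inequality on the mis-ranking event, whereas the paper cites it directly (Proposition \ref{prop:aucboundfromaga}, from Corollary 11 of \cite{agarwal2013surrogate}); this is a harmless, self-contained substitution, and your remark that the hypothesis on $f_n^{B,*}$ must be read as a uniform pointwise lower bound matches how the paper itself uses it.
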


The results in Proposition \ref{lem:lemfrometatoauc} apply broadly to any density estimator that is lower bounded and whose weights satisfy the conditions outlined in Propositions \ref{pro::I456} and \ref{lem::order:k:w:lambda}. Indeed, this means any sufficiently good (weighted) $k$-NN density estimator meeting these conditions would achieve similar theoretical guarantees.
Therefore, our proposed estimator does not claim a faster convergence rate than existing methods. Instead, its primary advantage lies in reducing sensitivity to hyperparameter selection in practice, as thoroughly discussed in Section \ref{sec::datadrivennnad}. The theoretical guarantees established in Proposition \ref{lem:lemfrometatoauc} play a crucial role in validating the consistency of our estimator in comparison to standard approaches.

\subsection{Complexity Analysis} \label{sec:complexity}

To deal with the efficiency issue in distance-based methods for anomaly detection when dealing with large-scale datasets, \cite{wu2006outlier} proposed the iterative subsampling, i.e., for each test sample, they first randomly select a portion of data and then compute the $k$-distance over the subsamples.
They provided a probabilistic analysis of the quality of the subsampled distance compared to the $k$-distance over the whole dataset.
Furthermore, \cite{sugiyama2013rapid} proposed the one-time sampling for the computation of the $k$-distances over the dataset for all test samples,
which is shown to be more efficient than the iterative sampling.
Although these sub-sampling methods improve computational efficiency, these distance-based methods fail to comprehensively utilize the information in the dataset since a large portion of samples are dropped out.
By contrast, the bagging technique incorporated in our BRDAD not only addresses the efficiency issues when dealing with large-scale datasets but also maintains the ability to make full use of the data.
In the following, we conduct a complexity analysis for BRDAD in detail to show the computational efficiency of BRDAD.

As a widely used algorithm, the $k$-d tree \citep{friedman1977algorithm} is commonly employed in NN-based methods to search for nearest neighbors. Given $n$ data points in $d$ dimensions, \citet{friedman1977algorithm} showed that constructing a $k$-d tree requires $\mathcal{O}(nd \log n)$ time, while searching for $k$ nearest neighbors takes $\mathcal{O}(k \log n)$ time. Below, we analyze the time complexities of the construction and search stages in BRDAD to demonstrate how bagging reduces computational complexity.
\begin{enumerate}
\item[$(i)$] 
In each bagging round, BRDAD constructs a $k$-d tree using $s$ data points. Taking $s$ as the order in Theorem \ref{pro::rateBWDDE}, the construction time per $k$-d tree is $\mathcal{O}(ds\log s) = \mathcal{O}(dn^{(1+d)/(d+2)} (\log n)^{1/(d+2)})$ when parallelism is applied. In contrast, without bagging, constructing a single $k$-d tree requires $\mathcal{O}(dn \log n)$ time. Thus, bagging reduces the construction time complexity.
\item[$(ii)$] 
The time complexity of regularized $k$-distances at each bagging round consists of two main components: $(i)$ computing the average $k$-distances, and $(ii)$ solving the SRM problem. The first part, querying $k^{b,*}$ neighbors, takes $\mathcal{O}(k^{b,*}\log s)$ time. For the second part, \citet[Theorem 3.3]{anava2016k} shows that Algorithm \ref{alg::SRM} finds the solution in $\mathcal{O}(k^{b,*})$ time. Consequently, the search stage requires at most $\mathcal{O}(k^{b,*} \log s)$ time. From the order of $k^{b,*}$ and $s$ in Proposition \ref{lem::order:k:w:lambda} and Theorem \ref{thm::bagaverage}, the search complexity with parallel computation across subsets $\{D_s^b\}_{b=1}^B$ is $\mathcal{O}(n^{1/(d+2)} (\log n)^{(d+1)/(d+2)})$. In contrast, without bagging, the search complexity is $\mathcal{O}(n^{2/(d+2)} (\log n)^{(2d+2)/(d+2)})$ from the order of $k^*$ in Lemma \ref{lemm::kbstarnobag} in Section \ref{sec::proofsbrdde},
This result shows that bagging also improves the search efficiency.
\end{enumerate}
\vspace{-1mm}
In summary, with proper parallelization, the overall complexity is dominated by the construction stage at $\mathcal{O}(dn^{(1+d)/(d+2)} (\log n)^{1/(d+2)})$, compared to $\mathcal{O}(dn \log n)$ without bagging. Thus, bagging enhances computational efficiency when fully leveraging parallel computation. However, due to its dependence on $d$, the computational improvement becomes less significant in higher dimensions.

For popular distance-based anomaly detection methods like standard $k$-NN and DTM \citep{gu2019statistical}, the primary computational cost comes from constructing a $k$-d tree and searching for $k$ nearest neighbors. If $k$ is set to the optimal order $\mathcal{O}(n^{2/(d+2)}(\log n)^{d/(d+2)})$ for standard $k$-NN density estimation, the construction stage takes $\mathcal{O}(n d \log n)$ time, while the search stage requires $\mathcal{O}(n^{2/(d+2)} (\log n)^{(2d+2)/(d+2)})$ time.
For another distance-based method, LOF, in addition to constructing a $k$-d tree and searching for $k$ nearest neighbors, there is an extra step of computing scores for all samples, which adds a time complexity of $\mathcal{O}(n)$ \citep{breunig2000lof}.
A straightforward comparison shows that these methods are significantly more computationally intensive than BRDAD.
In contrast, the construction and search stages of BRDAD have much lower complexities of $\mathcal{O}(dn^{(1+d)/(d+2)} (\log n)^{1/(d+2)})$ and $\mathcal{O}(n^{1/(d+2)}(\log n)^{(d+1)/(d+2)})$, respectively.

\section{Experiments} \label{sec::Experiments}

This section presents numerical experiments. In Section \ref{subsec::synexp}, we perform synthetic data experiments on density estimation to illustrate the convergence of the surrogate risk and mean absolute error of BRDDE as the sample size grows. Section \ref{subsec::realexp} focuses on real-world anomaly detection benchmarks, where we compare BRDAD with various methods and explain its advantages. Our empirical results demonstrate that bagging enhances the algorithm's performance.

\subsection{Synthetic Data Experiments on Density Estimation} 
\label{subsec::synexp}

In this section, we empirically demonstrate the convergence of both the surrogate risk (SR) and the mean absolute error (MAE) of BRDDE as the sample size $n$ increases. The results are presented in Figures \ref{fig::SR_converge} and \ref{fig::MAE_converge}. As established in Theorem \ref{pro::I456sur}, the surrogate risk is expected to exhibit a convergence behavior similar to that of the MAE for BRDDE. To investigate this, we sample $n$ data points from the $\mathcal{N}(0, 1)$ distribution, with $n$ set to $\{300, 1000, 3000, 5000, 10000\}$ for training. The surrogate risk for each $n$ is computed using Algorithm \ref{alg::SRM} with $B=1$.  
Additionally, we randomly sample $10,000$ instances to calculate the MAE to assess BRDDE’s performance. Each experiment is repeated 20 times for every sample size $n$. The results in Figure \ref{fig::SR_converge} show that the surrogate risk decreases monotonically as $n$ increases, while Figure \ref{fig::MAE_converge} exhibits a similar convergence pattern for the MAE. Furthermore, we plot the ratio of SR to MAE for each sample size $n$ in Figure \ref{fig::Ratio_converge}, which indicates that the ratio stabilizes when $n$ exceeds 3000, further confirming the similarity in their convergence behaviors.
To extend this analysis to more complex distributions and higher dimensions, we conduct additional experiments using a mixture distribution,  
$0.4 \times \mathcal{N}(0.3 \mathbf{1}_d,0.01\mathbf{I}_d) + 0.6 \times \mathcal{N}(0.7 \mathbf{1}_d,0.0025\mathbf{I}_d)$,
across dimensions $d \in \{1,3,5,7,9\}$. The ratio of SR to MAE for each sample size $n$ is plotted in Figure \ref{fig::Ratio_converge-2}, revealing convergence toward a dimension-dependent constant. This result reinforces the generalizability of our optimized weights across diverse settings.

\begin{figure}[!h]
\centering
\subfigure[Convergence of SR]{
\begin{minipage}{0.47\linewidth}
\centering
\centerline{\includegraphics[width = \linewidth]{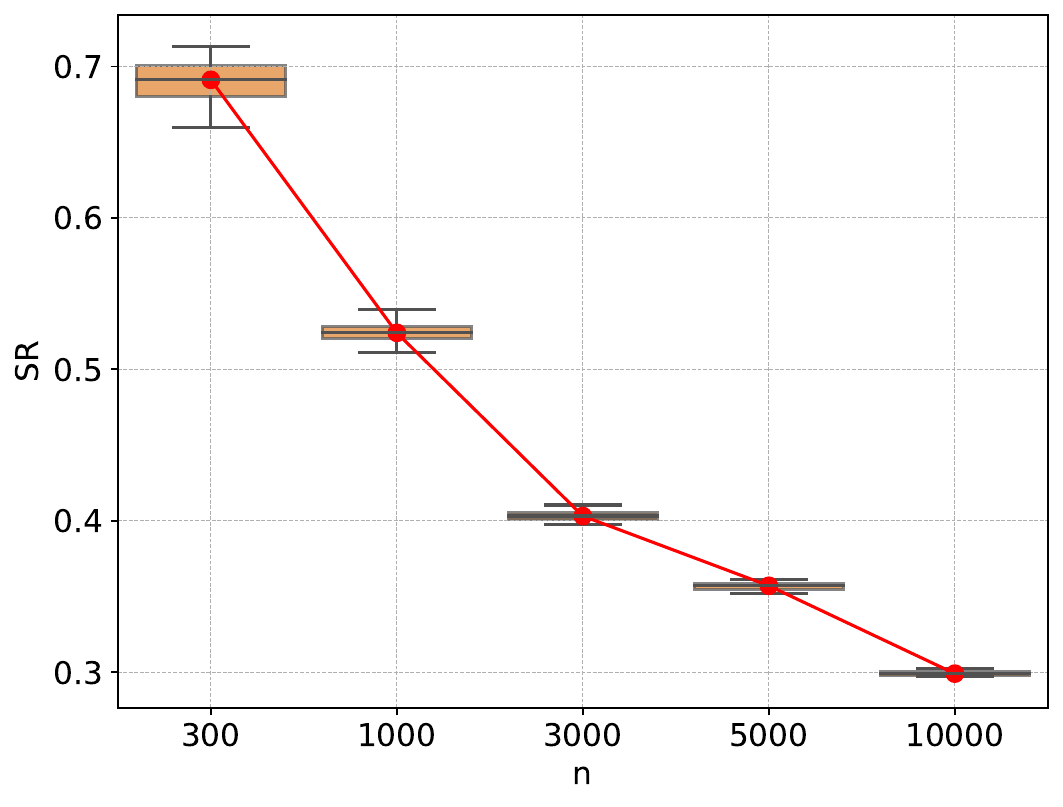}}
\label{fig::SR_converge}
\end{minipage}} 
\hfill
\subfigure[Convergence of MAE]{
\begin{minipage}{0.475\linewidth}
\centering
\centerline{\includegraphics[width = \linewidth]{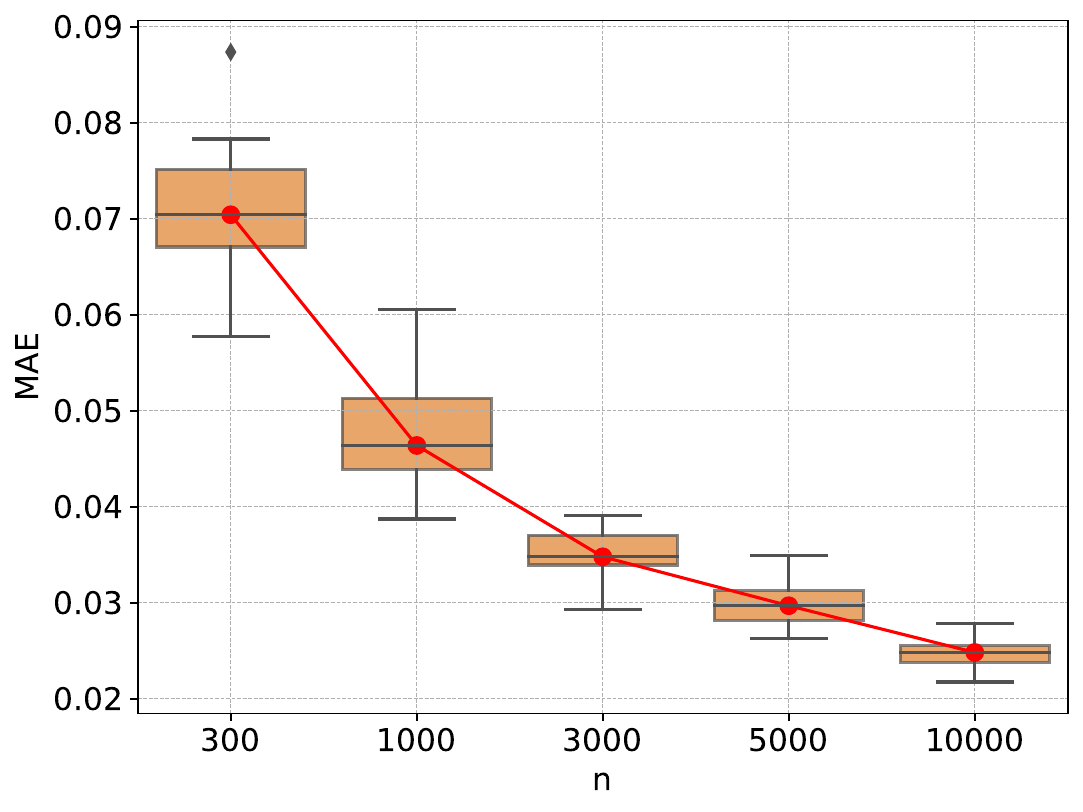}}
\label{fig::MAE_converge}
\end{minipage}} 
\\
\subfigure[Ratio of SR to MAE]{
\begin{minipage}{0.485\linewidth}
\vspace{+1.5mm}
\centering
\centerline{\includegraphics[width = \linewidth]{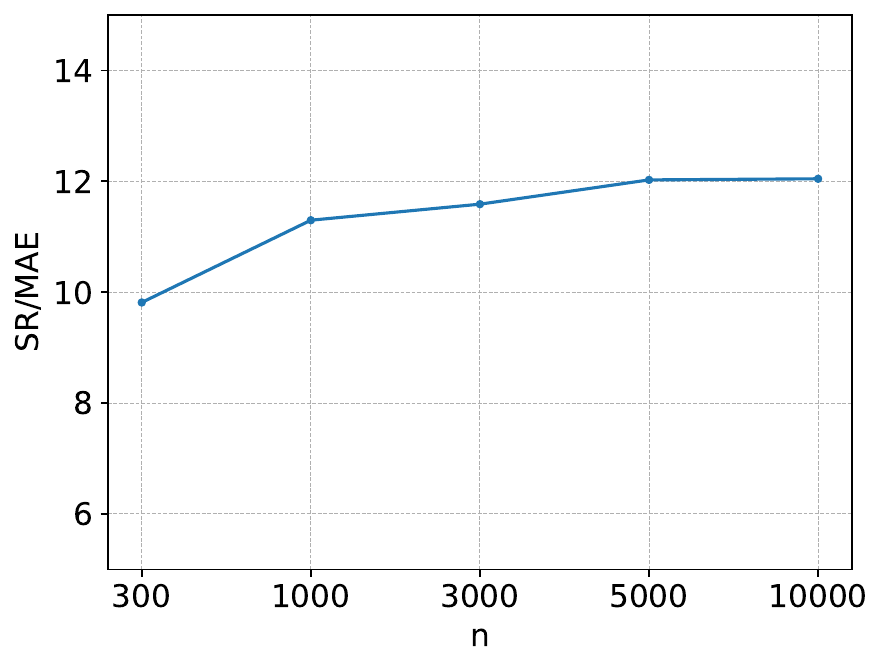}}
\label{fig::Ratio_converge}
\end{minipage}}
\subfigure[Log ratio of SR to MAE]{
\begin{minipage}{0.485\linewidth}
\vspace{+1.5mm}
\centering
\centerline{\includegraphics[width = \linewidth]{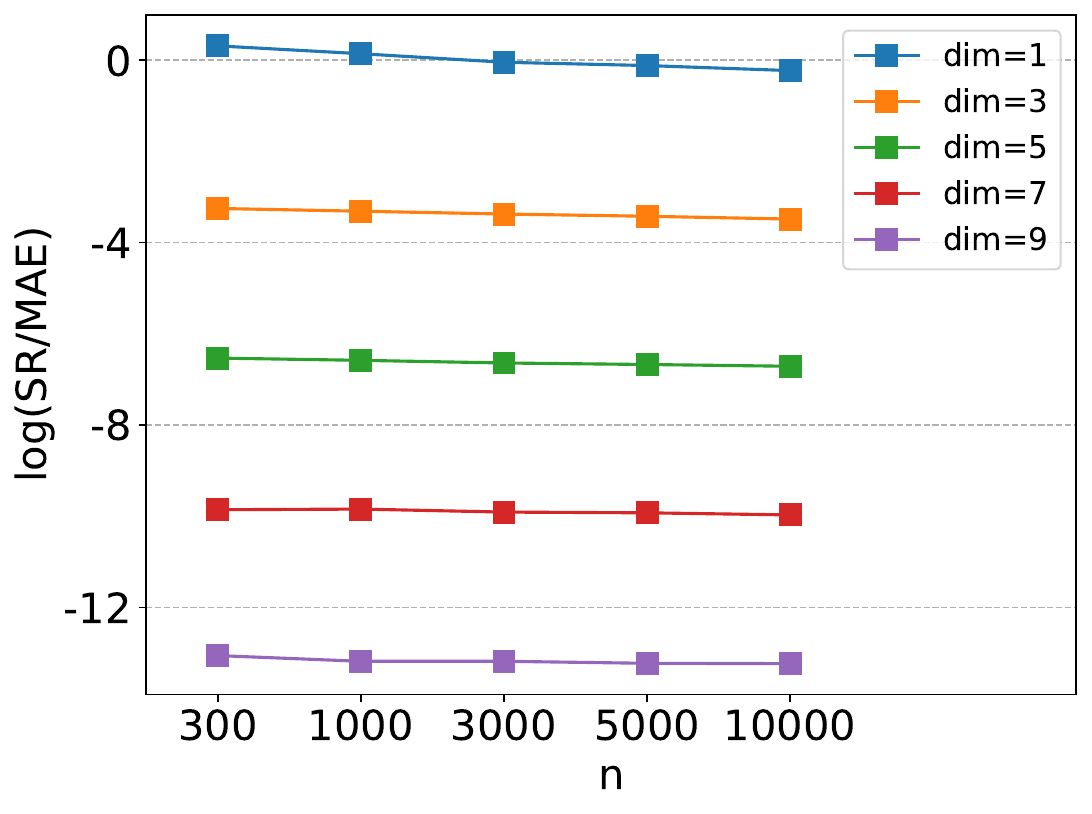}}
\label{fig::Ratio_converge-2}
\end{minipage}}
\caption{
(a)(b) show that SRM leads to the convergence of both surrogate risk (SR) and mean absolute error (MAE).
Furthermore, (c) shows that as the sample size $n$ increases, the ratio of SR to MAE becomes stable, indicating similar convergence behaviors for both SR and MAE by applying Algorithm \ref{alg::SRM}.
}
\vspace{-3.0mm}
\end{figure}

We provide an illustrative example on a synthetic dataset to demonstrate the sensitivity of choosing the hyperparameter $k$ in other nearest-neighbor-based density estimation methods, including the $k$-NN density estimation ($k$-NN) and the weighted $k$-NN density estimation (W$k$NN) \citep{biau2011weighted}. In accordance with \cite{biau2011weighted}, we take $\nu$ as the measure of $U^{\alpha}$, where $U$ is uniform on $[0,1]$ and $\alpha>0$ is a parameter. Given an integer $k$, the weights of W$k$NN are defined by $
w_i = \int_{((i-1)/k,i/k]}\nu(dt) = (i/k)^{1/\alpha}-((i-1)/k)^{1/\alpha}$, for $i\in [k]$, with $w_i=0$ otherwise.
We generate 1000 data points to train the density estimators and an additional 10,000 points to compute the MAE from a Gaussian mixture model with the density function $0.5 \times \mathcal{N}(0.3,0.01) + 0.5 \times \mathcal{N}(0.7,0.0025)$. We vary the hyperparameter $k$ from 3 to 500 to observe its effect on the MAE for both $k$-NN and W$k$NN. The results in Figure \ref{fig:kinsensitivity} illustrate that the performance of these density estimators is significantly influenced by the choice of $k$, regardless of $\alpha$. Only a narrow range of $k$ values leads to optimal results.
In contrast, our proposed estimator, BRDDE, mitigates this sensitivity. The black dashed line in Figure \ref{fig:kinsensitivity} represents the MAE performance of BRDDE, demonstrating that it achieves near-optimal results comparable to the best-tuned nearest-neighbor-based methods—without requiring fine-tuning of $k$.

\begin{figure}
\centering
\includegraphics[width=0.45\textwidth]{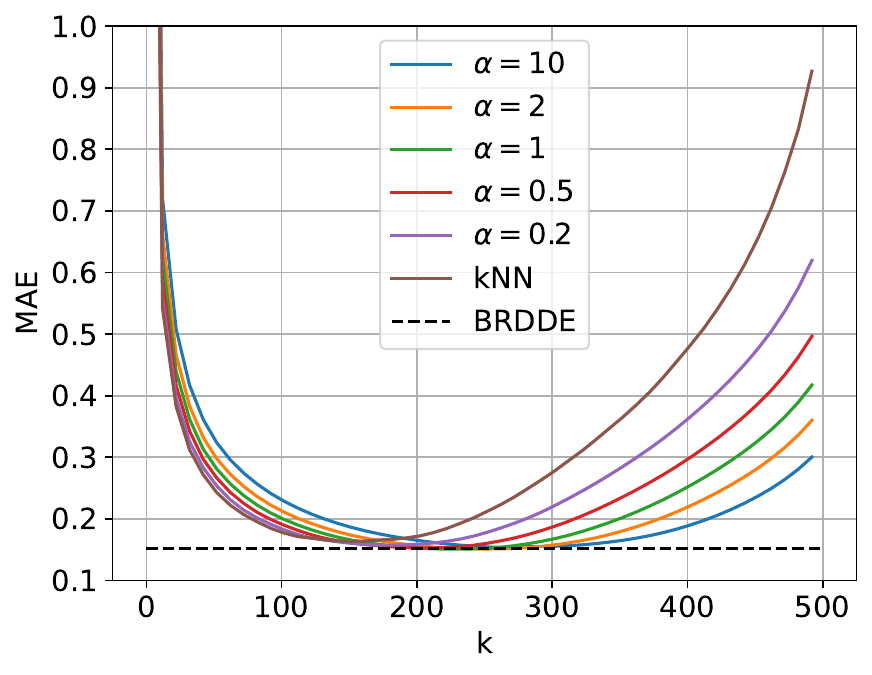}
\vspace{-5mm}
\caption{Illustration of parameter $k$'s sensitivity.}
\label{fig:kinsensitivity}
\vspace{-4mm}
\end{figure}

\subsection{Real-world Data Experiments on Anomaly Detection} \label{subsec::realexp}

\subsubsection{Dataset Descriptions}

To provide an extensive experimental evaluation, we use the latest anomaly detection benchmark repository named ADBench established by \citet{han2022adbench}.
The repository includes 47 tabular datasets, ranging from 80 to 619326 instances and from 3 to 1555 features.
We provide the descriptions of these datasets in the Table \ref{tab::detail_adbench}.

\begin{table}
\centering
\captionsetup{justification=centering}
\caption{\normalsize{Descriptions of ADBench Datasets}}
\label{tab::detail_adbench}
\resizebox{1\linewidth}{!}{
\begin{tabular}{|c|l|c|c|c|c|l|}
\toprule
\textbf{Number} & \textbf{Data} & \textbf{\# Samples} & \textbf{\# Features} & \textbf{\# Anomaly} & \textbf{\% Anomaly} & \textbf{Category} \\
\midrule
1 & ALOI & 49534 & 27 & 1508 & 3.04 & Image \\
2 & annthyroid & 7200 & 6 & 534 & 7.42 & Healthcare \\
3 & backdoor & 95329 & 196 & 2329 & 2.44 & Network \\
4 & breastw & 683 & 9 & 239 & 34.99 & Healthcare \\
5 & campaign & 41188 & 62 & 4640 & 11.27 & Finance \\
6 & cardio & 1831 & 21 & 176 & 9.61 & Healthcare \\
7 & Cardiotocography & 2114 & 21 & 466 & 22.04 & Healthcare \\
8 & celeba & 202599 & 39 & 4547 & 2.24 & Image \\
9 & census & 299285 & 500 & 18568 & 6.20 & Sociology \\
10 & cover & 286048 & 10 & 2747 & 0.96 & Botany \\
11 & donors & 619326 & 10 & 36710 & 5.93 & Sociology \\
12 & fault & 1941 & 27 & 673 & 34.67 & Physical \\
13 & fraud & 284807 & 29 & 492 & 0.17 & Finance \\
14 & glass & 214 & 7 & 9 & 4.21 & Forensic \\
15 & Hepatitis & 80 & 19 & 13 & 16.25 & Healthcare \\
16 & http & 567498 & 3 & 2211 & 0.39 & Web \\
17 & InternetAds & 1966 & 1555 & 368 & 18.72 & Image \\
18 & Ionosphere & 351 & 32 & 126 & 35.90 & Oryctognosy \\
19 & landsat & 6435 & 36 & 1333 & 20.71 & Astronautics \\
20 & letter & 1600 & 32 & 100 & 6.25 & Image \\
21 & Lymphography & 148 & 18 & 6 & 4.05 & Healthcare \\
22 & magic.gamma & 19020 & 10 & 6688 & 35.16 & Physical \\
23 & mammography & 11183 & 6 & 260 & 2.32 & Healthcare \\
24 & mnist & 7603 & 100 & 700 & 9.21 & Image \\
25 & musk & 3062 & 166 & 97 & 3.17 & Chemistry \\
26 & optdigits & 5216 & 64 & 150 & 2.88 & Image \\
27 & PageBlocks & 5393 & 10 & 510 & 9.46 & Document \\
28 & pendigits & 6870 & 16 & 156 & 2.27 & Image \\
29 & Pima & 768 & 8 & 268 & 34.90 & Healthcare \\
30 & satellite & 6435 & 36 & 2036 & 31.64 & Astronautics \\
31 & satimage-2 & 5803 & 36 & 71 & 1.22 & Astronautics \\
32 & shuttle & 49097 & 9 & 3511 & 7.15 & Astronautics \\
33 & skin & 245057 & 3 & 50859 & 20.75 & Image \\
34 & smtp & 95156 & 3 & 30 & 0.03 & Web \\
35 & SpamBase & 4207 & 57 & 1679 & 39.91 & Document \\
36 & speech & 3686 & 400 & 61 & 1.65 & Linguistics \\
37 & Stamps & 340 & 9 & 31 & 9.12 & Document \\
38 & thyroid & 3772 & 6 & 93 & 2.47 & Healthcare \\
39 & vertebral & 240 & 6 & 30 & 12.50 & Biology \\
40 & vowels & 1456 & 12 & 50 & 3.43 & Linguistics \\
41 & Waveform & 3443 & 21 & 100 & 2.90 & Physics \\
42 & WBC & 223 & 9 & 10 & 4.48 & Healthcare \\
43 & WDBC & 367 & 30 & 10 & 2.72 & Healthcare \\
44 & Wilt & 4819 & 5 & 257 & 5.33 & Botany \\
45 & wine & 129 & 13 & 10 & 7.75 & Chemistry \\
46 & WPBC & 198 & 33 & 47 & 23.74 & Healthcare \\
47 & yeast & 1484 & 8 & 507 & 34.16 & Biology \\
\bottomrule
\end{tabular}
}
\end{table}

\subsubsection{Methods for Comparison}
\label{sec::CompaMethods}

We conduct experiments on the following anomaly detection algorithms.
\begin{itemize}
\item[\textit{(i)}] BRDAD is our proposed algorithm, with details provided in Algorithm \ref{alg:BRDAD}.  
The choice of $B$ depends on the sample size: for $n \in (0, 10,000]$, $(10,000, 50,000]$, and $(50,000, +\infty)$, we set $B = 1$, $5$, and $10$, respectively.
In practice, when $B$ is fixed, we randomly divide the data into $B$ subsets, each containing either $\lfloor n/B \rfloor$ or $\lfloor n/B \rfloor +1$ samples. Each subset is then further split into two parts such that their sizes are equal or differ by at most 1. This process ensures that the full dataset is partitioned as evenly as possible.
\item[\textit{(ii)}] Distance-To-Measure (DTM) \citep{gu2019statistical} is a distance-based algorithm which employs a generalization of the $k$ nearest neighbors named ``distance-to-measure''. We use the author's implementation.
As suggested by the authors, the number of neighbors $k$ is fixed to be $k=0.03 \times \text{sample size}$.
\item[\textit{(iii)}] $k$-Nearest Neighbors ($k$-NN) \citep{ramaswamy2000efficient} is a distance-based algorithm that uses the distance of a point from its $k$-th nearest neighbor to distinguish anomalies. We use the implementation of the Python package {\tt PyOD} with its default parameters.
\item[\textit{(iv)}] Local Outlier Factor (LOF) \citep{breunig2000lof} is a distance-based algorithm that measures the local deviation of the density of a given data point with respect to its neighbors. We also use {\tt PyOD} with its default parameters.
\item[\textit{(v)}] Partial Identification Forest (PIDForest) \citep{gopalan2019pidforest} is a forest-based algorithm that computes the anomaly score of a point by determining the minimum density of data points across all subcubes partitioned by decision trees. We use the authors' implementation with the number of trees $T=50$, the number of buckets $B=5$, and the depth of trees $p=10$ suggested by the authors.
\item[\textit{(vi)}] Isolation Forest (iForest) \citep{liu2008isolation} is a forest-based algorithm that works by randomly partitioning features of the data into smaller subsets and distinguishing between normal and anomalous points based on the number of ``splits'' required to isolate them, with anomalies requiring fewer splits. We use the implementation of the Python package {\tt PyOD} with its default parameters.
\item[\textit{(vii)}] One-class SVM (OCSVM) \citep{scholkopf1999support} is a kernel-based algorithm which tries to separate data from the origin in the transformed high-dimensional predictor space.  We also use {\tt PyOD} with its default parameters.
\end{itemize}

Note that as BRDAD, iForest, and PIDForest are randomized algorithms, we repeat these three algorithms for 10 runs and report the averaged AUC performance. DTM, $k$-NN, LOF, and OCSVM are deterministic, and hence we report a single AUC number for them.

\subsubsection{Experimental Results}

\begin{table}
\centering
\captionsetup{justification=centering}
\caption{\normalsize{Experimental Comparisons on ADBench Datasets}}
\label{tab::res_adbench}
\resizebox{0.9\linewidth}{!}{
\begin{tabular}{llllllll}
\toprule
&      BRDAD &        DTM &        $k$-NN &        LOF &  PIDForest &    iForest &      OCSVM \\
\midrule
ALOI             &           0.5427 &           0.5440 &           0.6942 &  \textbf{0.7681} &           0.5061 &           0.5411 &           0.5326 \\
annthyroid       &           0.6516 &           0.6772 &           0.7343 &           0.7076 &  \textbf{0.8781} &           0.8138 &           0.5842 \\
backdoor         &           0.8490 &  \textbf{0.9216} &           0.6682 &           0.7135 &           0.6965 &           0.7238 &           0.8465 \\
breastw          &  \textbf{0.9883} &           0.9799 &           0.9765 &           0.3907 &           0.9750 &           0.9871 &           0.8052 \\
campaign         &           0.6711 &           0.6908 &           0.7202 &           0.5366 &  \textbf{0.7945} &           0.7182 &           0.6630 \\
cardio           &           0.9142 &           0.8879 &           0.7330 &           0.6372 &           0.8258 &           0.9271 &  \textbf{0.9286} \\
Cardiotocography &           0.6302 &           0.6043 &           0.5449 &           0.5705 &           0.5587 &           0.6973 &  \textbf{0.7872} \\
celeba           &           0.5896 &           0.6929 &           0.5666 &           0.4332 &           0.6732 &           0.6955 &  \textbf{0.6962} \\
census           &           0.6394 &           0.6435 &  \textbf{0.6465} &           0.5501 &           0.5543 &           0.6116 &           0.5336 \\
cover            &  \textbf{0.9301} &           0.9277 &           0.7961 &           0.5262 &           0.8065 &           0.8784 &           0.9141 \\
donors           &           0.7858 &  \textbf{0.8000} &           0.6117 &           0.5977 &           0.6945 &           0.7810 &           0.7323 \\
fault            &  \textbf{0.7591} &           0.7587 &           0.7286 &           0.5827 &           0.5437 &           0.5714 &           0.5074 \\
fraud            &           0.9552 &  \textbf{0.9583} &           0.9342 &           0.4750 &           0.9489 &           0.9493 &           0.9477 \\
glass            &           0.7993 &  \textbf{0.8688} &           0.8640 &           0.8114 &           0.7913 &           0.7933 &           0.4407 \\
Hepatitis        &           0.6954 &           0.6303 &           0.6745 &           0.6429 &  \textbf{0.7186} &           0.6944 &           0.6418 \\
http             &           0.9943 &           0.0507 &           0.2311 &           0.3550 &           0.9870 &  \textbf{0.9999} &           0.9949 \\
InternetAds      &  \textbf{0.7274} &           0.7063 &           0.7110 &           0.6485 &           0.6754 &           0.6913 &           0.6890 \\
Ionosphere       &           0.9113 &           0.9237 &  \textbf{0.9259} &           0.8609 &           0.6820 &           0.8493 &           0.7395 \\
landsat          &           0.6176 &  \textbf{0.6184} &           0.5773 &           0.5497 &           0.5245 &           0.4833 &           0.3660 \\
letter           &           0.8426 &           0.8417 &  \textbf{0.8950} &           0.8872 &           0.6636 &           0.6318 &           0.4843 \\
Lymphography     &           0.9988 &           0.9965 &           0.9988 &           0.9953 &           0.9656 &  \textbf{0.9993} &           0.9977 \\
magic.gamma      &           0.8205 &           0.8214 &  \textbf{0.8323} &           0.6712 &           0.7252 &           0.7316 &           0.5947 \\
mammography      &           0.8132 &           0.8301 &           0.8424 &           0.7398 &           0.8453 &  \textbf{0.8592} &           0.8412 \\
mnist            &           0.8335 &  \textbf{0.8630} &           0.8041 &           0.6498 &           0.5366 &           0.7997 &           0.8204 \\
musk             &           0.7583 &           0.9987 &           0.6604 &           0.4271 &  \textbf{0.9997} &           0.9995 &           0.8094 \\
optdigits        &           0.3912 &           0.5474 &           0.4189 &           0.5831 &  \textbf{0.8248} &           0.6970 &           0.5336 \\
PageBlocks       &           0.8889 &           0.8859 &           0.7813 &           0.7345 &           0.8154 &  \textbf{0.8980} &           0.8903 \\
pendigits        &           0.9174 &  \textbf{0.9581} &           0.7127 &           0.4821 &           0.9214 &           0.9515 &           0.9354 \\
Pima             &  \textbf{0.7291} &           0.7224 &           0.7137 &           0.5978 &           0.6842 &           0.6803 &           0.6022 \\
satellite        &  \textbf{0.7449} &           0.7375 &           0.6489 &           0.5436 &           0.7122 &           0.7043 &           0.5972 \\
satimage-2       &           0.9991 &  \textbf{0.9991} &           0.9164 &           0.5514 &           0.9919 &           0.9935 &           0.9747 \\
shuttle          &           0.9898 &           0.9442 &           0.6317 &           0.5239 &           0.9885 &  \textbf{0.9968} &           0.9823 \\
skin             &  \textbf{0.7570} &           0.7177 &           0.5881 &           0.5756 &           0.7071 &           0.6664 &           0.4857 \\
smtp             &           0.8506 &           0.8854 &           0.8953 &           0.9023 &  \textbf{0.9203} &           0.9077 &           0.7674 \\
SpamBase         &           0.5687 &           0.5663 &           0.4977 &           0.4581 &  \textbf{0.6941} &           0.6212 &           0.5251 \\
speech           &           0.4834 &           0.4810 &           0.4832 &  \textbf{0.5067} &           0.4739 &           0.4648 &           0.4639 \\
Stamps           &  \textbf{0.8980} &           0.8594 &           0.8362 &           0.7269 &           0.8883 &           0.8911 &           0.8179 \\
thyroid          &           0.9353 &           0.9470 &           0.9508 &           0.8075 &           0.9687 &  \textbf{0.9771} &           0.8437 \\
vertebral        &           0.3236 &           0.3663 &           0.3768 &  \textbf{0.4208} &           0.2857 &           0.3515 &           0.3852 \\
vowels           &           0.9489 &           0.9667 &  \textbf{0.9797} &           0.9443 &           0.7817 &           0.7590 &           0.5507 \\
Waveform         &  \textbf{0.7783} &           0.7685 &           0.7457 &           0.7133 &           0.7263 &           0.7144 &           0.5393 \\
WBC              &  \textbf{0.9972} &           0.9930 &           0.9925 &           0.8399 &           0.9904 &           0.9959 &           0.9967 \\
WDBC             &           0.9841 &           0.9773 &           0.9782 &           0.9796 &  \textbf{0.9916} &           0.9850 &           0.9877 \\
Wilt             &           0.3138 &           0.3545 &           0.4917 &  \textbf{0.5394} &           0.5012 &           0.4477 &           0.3491 \\
wine             &  \textbf{0.8788} &           0.4277 &           0.4992 &           0.8756 &           0.8221 &           0.7987 &           0.6941 \\
WPBC             &           0.5188 &           0.5101 &           0.5208 &           0.5184 &  \textbf{0.5283} &           0.4942 &           0.4743 \\
yeast            &           0.3717 &           0.3876 &           0.3936 &  \textbf{0.4571} &           0.4019 &           0.3964 &           0.4141 \\
\midrule
\textbf{Rank Sum} & \textbf{145} &  160 & 192 & 243 & 187 &  159  &  228 \\
\textbf{Num. No. 1} & \textbf{11} & 8 & 5 & 5 & 9 & 6 & 3 \\
\bottomrule
\end{tabular}
}
\end{table}

Table \ref{tab::res_adbench} presents the performance of seven anomaly detection methods on the ADBench benchmark, evaluated using the AUC metric. The last two rows summarize each algorithm’s rank sum and the number of top-ranking performances. A lower rank sum and a higher number of first-place rankings indicate better performance.
BRDAD demonstrates exceptional results across both evaluation metrics. Specifically, it achieves the lowest rank sum of 145, significantly outperforming other methods, with DTM and iForest scoring 160 and 159, respectively.
In terms of first-place rankings across datasets, BRDAD ranks first in 11 out of 47 tabular datasets, outperforming PIDForest (9/47) and DTM (8/47).
Overall, BRDAD exhibits outstanding performance, excelling over previous distance-based methods while competing effectively with forest-based approaches.
\begin{itemize}
\item On the one hand, BRDAD outperforms distance-based methods such as DTM and $k$-NN in some datasets. For example, on the Satellite dataset, while DTM achieves a high AUC of 0.7375, BRDAD further improves it to 0.7449. Similarly, on the InternetAds dataset, BRDAD achieves an AUC of 0.7274, outperforming $k$-NN’s score of 0.7110.
\item On the other hand, BRDAD remains competitive even in datasets where distance-based methods perform poorly while forest-based methods excel, such as the Stamps and Wine datasets. On Stamps, DTM and $k$-NN achieve AUC scores of 0.8594 and 0.8362, respectively, whereas forest-based methods like PIDForest and iForest attain 0.8883 and 0.8911. Surprisingly, despite being a distance-based method, BRDAD surpasses them all with an AUC of 0.8980.  
Similarly, on the Wine dataset, PIDForest and iForest achieve AUC scores of 0.8221 and 0.7987, respectively, while distance-based methods DTM and $k$-NN perform significantly worse, with scores of 0.4277 and 0.4992. Remarkably, BRDAD not only outperforms other distance-based methods but also surpasses forest-based methods, achieving the highest AUC of 0.8788.
\end{itemize}
These results empirically demonstrate BRDAD’s superiority over both distance-based and forest-based anomaly detection algorithms.

\subsubsection{Parameter Analysis}\label{sec:sensitivity}

In this section, we analyze the impact of the bagging rounds $B$ on BRDAD’s performance using the ADBench datasets.  
Following the sample size categorization in Section \ref{sec::CompaMethods}, we classify the datasets into three groups: small datasets with $n \in (0, 10,000]$, medium datasets with $n \in (10,000, 50,000]$, and large datasets with $n \in (50,000, +\infty)$. We evaluate BRDAD with different values of $B \in \{1, 5, 10, 20\}$ and record the rank sum for each setting.

\begin{table}[htbp]
\centering
\captionsetup{justification=centering}
\caption{\normalsize{Experimental Comparisons for BRDAD with different $B$ on ADBench Datasets}}
\label{tab::res_paramB}
\begin{tabular}{lcccc}
\toprule
&  $B=1$ &       $B=5$  &   $B=10$   &   $B=20$  \\
\midrule
\textbf{Small} &  \textbf{95}&  112 & 106 & 102 \\
\textbf{Medium} & 25  & \textbf{25} & 26 & 26 \\
\textbf{Large} & 26  & 25 & \textbf{25} & 26 \\
\bottomrule
\end{tabular}
\end{table}

Table \ref{tab::res_paramB} indicates that the effect of $B$ on the rank sum metric varies with sample size.  
For small datasets, $B=1$ significantly outperforms other choices, as bagging reduces the already limited sample size.  
In contrast, for medium and large datasets, a slightly larger $B$ yields marginally better performance.  
This aligns with our theoretical findings in Theorem \ref{thm::bagaverage}, which suggest that $B$ should increase with sample size.  
These results also validate the effectiveness of our chosen values of $B$ in real-world datasets, as discussed in the previous subsection.

\section{Proofs}\label{sec::proofs}

In this section, we present proofs of the theoretical results in this paper.
More precisely, we first provide proofs related to the surrogate risk in Section \ref{sec::proofssurrogate}.
The proofs related to the convergence rates of BRDDE and BRDAD are provided in Sections \ref{sec::proofsbrdde} and \ref{sec::proofsbrdadauc}, respectively.

\subsection{Proofs Related to the Surrogate Risk}\label{sec::proofssurrogate}

In this section, we first provide proofs related to the error analysis of BWDDE in Section \ref{sec::prooferrorBWDDE}.
Then in Section \ref{sec::proofssurrogaterisk}, we present the proof of Proposition \ref{sec::datadrivennnad} concerning the surrogate risk.

\subsubsection{Proofs Related to Section \ref{sec::errorBWDDE}} \label{sec::prooferrorBWDDE}

Before we proceed, we present Bernstein's inequality \cite{bernstein1946theory} that will be frequently applied within the subsequent proofs.  
This concentration inequality is extensively featured in numerous statistical learning literature, such as \cite{massart2007concentration, cucker2007learning, steinwart2008support, cai2023extrapolated}.

\begin{lemma}[Bernstein's inequality] \label{lem::bernstein}
	Let $B>0$ and $\sigma>0$ be real numbers, and $n\geq 1$ be an integer. Furthermore, let $\xi_1,\ldots,\xi_n$ be independent random variables satisfying $\mathbb{E}_{\mathrm{P}}\xi_i=0$, $\|\xi_i\|_\infty\leq B$, and $\mathbb{E}_{\mathrm{P}}\xi_{i}^2\leq \sigma^2$ for all $i=1,\ldots,n$. Then for all $\tau>0$, we have
	\begin{align*}
		\mathrm{P}\biggl(\frac{1}{n}\sum_{i=1}^n\xi_i\geq \sqrt{\frac{2\sigma^2\tau}{n}}+\frac{2B\tau}{3n}\biggr)\leq e^{-\tau}.
	\end{align*}
\end{lemma}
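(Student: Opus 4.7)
The plan is to prove Bernstein's inequality via the classical Chernoff/Cram\'er exponential-moment method. First I would apply Markov's inequality to the random variable $\exp\bigl(\lambda \sum_{i=1}^n \xi_i\bigr)$ for a free parameter $\lambda > 0$, obtaining
\begin{align*}
\mathrm{P}\Bigl(\sum_{i=1}^n \xi_i \geq t\Bigr)
\leq e^{-\lambda t}\,\mathbb{E}\Bigl[e^{\lambda \sum_{i=1}^n \xi_i}\Bigr]
= e^{-\lambda t}\prod_{i=1}^n \mathbb{E}\bigl[e^{\lambda \xi_i}\bigr],
\end{align*}
where the last equality uses independence. The quantity $t$ will later be chosen equal to the threshold $\sqrt{2n\sigma^2\tau} + 2B\tau/3$ that appears (after dividing by $n$) in the statement.

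The second step is the key moment-generating-function bound: for any random variable $\xi$ with $\mathbb{E}\xi = 0$, $|\xi| \leq B$, and $\mathbb{E}\xi^2 \leq \sigma^2$, and for every $\lambda \in (0, 3/B)$, one has
\begin{align*}
\mathbb{E}\bigl[e^{\lambda \xi}\bigr] \leq \exp\!\Bigl(\tfrac{\lambda^2 \sigma^2/2}{1 - \lambda B/3}\Bigr).
\end{align*}
I would derive this by Taylor expanding the exponential, using the crude bound $|\mathbb{E}\xi^k| \leq B^{k-2}\sigma^2$ for $k \geq 2$, together with the elementary inequality $k! \geq 2\cdot 3^{k-2}$, and summing the resulting geometric series, then using $1+x \leq e^x$. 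Multiplying over the $n$ independent summands replaces the exponent's numerator by $n\lambda^2\sigma^2/2$.

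The third step is the optimization. Combining the two bounds gives
\begin{align*}
\mathrm{P}\Bigl(\sum_{i=1}^n \xi_i \geq t\Bigr)
\leq \exp\!\Bigl(-\lambda t + \tfrac{n\lambda^2 \sigma^2/2}{1 - \lambda B/3}\Bigr).
\end{align*}
I would then choose $\lambda$ so that the right-hand side equals $e^{-\tau}$; a convenient choice is $\lambda = \tau/(n\sigma^2 + B\tau/3) \wedge (3/B)^{-}$, which reduces the bound to a simple function of $\tau$ and yields the deviation threshold $t = \sqrt{2n\sigma^2 \tau} + 2B\tau/3$ after straightforward algebra on the resulting quadratic inequality in $\sqrt{\tau}$. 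Dividing through by $n$ delivers the stated form $\sqrt{2\sigma^2\tau/n} + 2B\tau/(3n)$.

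The main obstacle is only bookkeeping: producing the correct constants, especially the coefficient $2/3$ multiplying $B\tau/n$, requires care in the Taylor-series estimate of the MGF and in the tuning of $\lambda$. Since Bernstein's inequality is a well-known classical result \textemdash{} the paper cites \cite{bernstein1946theory,massart2007concentration,cucker2007learning,steinwart2008support} \textemdash{} the proof is standard, and I would expect to simply reference one of these texts rather than reproducing the full calculation in the paper.
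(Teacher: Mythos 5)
Your proposal is correct: the standard Chernoff/exponential-moment argument with the sub-gamma MGF bound and the usual tuning of $\lambda$ does yield the stated deviation form, and the paper itself gives no proof at all — it quotes Bernstein's inequality as a classical result with citations, exactly as you suggest doing.
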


Although Bernstein's inequality is a powerful tool for establishing finite-sample bounds for random variables, the resulting bounds may not be tight enough for random variables with vanishing variance. To address this limitation, we employ the sub-exponential tail bound, which provides sharper concentration results for such random variables.

We begin by introducing the definition of sub-exponential random variables as stated in \citet[Definition 2.7]{wainwright2019high} as follows. 

\begin{definition}[Sub-exponential variables]\label{def::subexponential}
A random variable $X$ with mean $\mu = \mathbb{E}X$ is \textit{sub-exponential} if there exist non-negative parameters $(\zeta,\alpha)$ such that 
\begin{align*}
\mathbb{E}[e^{\lambda(X-\mu)}]\leq e^{{\zeta^2\lambda^2}/{2}}, \qquad \text{ for all } |\lambda|\leq 1/\alpha.
\end{align*}
\end{definition}

This condition on the moment-generating function, together with the Chernoff technique, leads to concentration inequalities that bound the deviations of sub-exponential random variables. The following lemma gives this result \citep[Proposition 2.9]{wainwright2019high}.

\begin{lemma}[Sub-exponential tail bound]\label{lem::subtailbound}
Suppose that $X$ is sub-exponential with parameters $(\zeta,\alpha)$. Then for all $0\leq \tau \leq \zeta^2/(2\alpha^2)$, we have
\begin{align*}
\mathrm{P}(|X-\mu|\geq \zeta\sqrt{2\tau}) \leq 2e^{-\tau}.
\end{align*}
\end{lemma}

To measure the complexity of the functional space, we first recall the definition of the covering number in \cite{vandervaart1996weak}.

\begin{definition}[Covering Number]
Let $(\mathcal{X}, d)$ be a metric space and $A \subset \mathcal{X}$. For $\varepsilon>0$, the $\varepsilon$-covering number of $A$ is denoted as 
\begin{align*}
\mathcal{N}(A, d, \varepsilon) 
:= \min \biggl\{ n \geq 1 : \exists x_1, \ldots, x_n \in \mathcal{X} \text{ such that } A \subset \bigcup^n_{i=1} B(x_i, \varepsilon) \biggr\},
\end{align*}
where $B(x, \varepsilon) := \{ x' \in \mathcal{X} : d(x, x') \leq \varepsilon \}$.
\end{definition}

The following Lemma, which is taken from \cite{hang2022under} and needed in the proof of Lemma \ref{lem::Rrho}, provides the covering number of the indicator functions on the collection of balls in $\mathbb{R}^d$.

\begin{lemma}\label{lem::CoveringNumber}
Let $\mathcal{B} := \{ B(x, r) : x \in \mathbb{R}^d, r > 0 \}$ and $\eins_{\mathcal{B}} := \{ \eins_B : B \in \mathcal{B} \}$. Then for any $\varepsilon \in (0, 1)$, there exists a universal constant $C$ such that 
\begin{align*}
\mathcal{N}(\eins_{\mathcal{B}}, \|\cdot\|_{L_1(\mathrm{Q})}, \varepsilon)
\leq C (d+2) (4e)^{d+2} \varepsilon^{-(d+1)}
\end{align*}
holds for any probability measure $\mathrm{Q}$.
\end{lemma}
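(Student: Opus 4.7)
The plan is to derive this bound from two well-known ingredients: a VC-dimension estimate for the family of closed Euclidean balls, and the classical polynomial covering-number estimate for indicator classes indexed by a VC class (as stated, for example, in Theorem 2.6.4 of van der Vaart and Wellner).

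The first step is to show that $\mathcal{B}$ is a VC class with VC index at most $d+2$. I would carry this out by the standard lifting argument: the membership relation $z \in B(x, r)$ can be rewritten as
\begin{align*}
\|z\|_2^2 - 2\langle x, z\rangle + \|x\|_2^2 - r^2 \le 0,
\end{align*}
which is an affine inequality in the feature vector $\phi(z) := (\|z\|_2^2, z_1, \ldots, z_d) \in \mathbb{R}^{d+1}$. Therefore every element of $\eins_{\mathcal{B}}$ is the pullback under $\phi$ of the indicator of an affine halfspace in $\mathbb{R}^{d+1}$. Since affine halfspaces in $\mathbb{R}^{d+1}$ form a VC class of index $d+2$, and since VC dimension cannot increase when one precomposes with a fixed map, we conclude that $\mathcal{B}$ has VC index at most $d+2$.

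With the VC estimate in hand, I would then appeal to the classical polynomial covering-number bound, which asserts that there is a universal constant $K$ such that, for every VC class $\mathcal{C}$ of sets with VC index $V$, every probability measure $\mathrm{Q}$, and every $\varepsilon \in (0, 1)$,
\begin{align*}
\mathcal{N}(\eins_{\mathcal{C}}, \|\cdot\|_{L_1(\mathrm{Q})}, \varepsilon) \le K V (4e)^{V} \varepsilon^{-(V-1)}.
\end{align*}
Specializing this inequality to $\mathcal{C} = \mathcal{B}$ with $V = d+2$ yields the desired bound $C(d+2)(4e)^{d+2} \varepsilon^{-(d+1)}$.

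The only nontrivial ingredient is the lifting step, namely recognizing the quadratic defining inequality $\|z - x\|_2^2 \le r^2$ as affine in the lifted coordinates $(\|z\|_2^2, z_1, \ldots, z_d)$ and then using stability of the VC index under pullback. Once that is in place, the remainder of the argument is a direct substitution into the standard covering-number theorem, so I do not expect any serious technical obstacle beyond invoking the correct reference.
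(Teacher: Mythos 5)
The paper does not prove this lemma itself (it is quoted from \cite{hang2022under}), so the only question is whether your argument delivers the stated bound, and as written it does not quite: there is an off-by-one gap in the VC bookkeeping. The covering bound you invoke (Theorem 2.6.4 of van der Vaart and Wellner) reads $\mathcal{N}(\eins_{\mathcal{C}}, \|\cdot\|_{L_1(\mathrm{Q})}, \varepsilon) \leq K\, V (4e)^{V} \varepsilon^{-(V-1)}$ with $V$ the VC \emph{index} of $\mathcal{C}$, i.e.\ its VC dimension plus one. To obtain the exponent $\varepsilon^{-(d+1)}$ and the factor $(d+2)(4e)^{d+2}$ you therefore need the balls to have VC index $d+2$, equivalently VC dimension $d+1$. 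Your lifting step only shows that every ball indicator is the pullback of the indicator of an affine halfspace of $\mathbb{R}^{d+1}$; since the class of \emph{all} affine halfspaces of $\mathbb{R}^{d+1}$ has VC dimension $d+2$ (VC index $d+3$), monotonicity under pullback gives only VC dimension at most $d+2$ for $\mathcal{B}$, hence a bound of order $\varepsilon^{-(d+2)}$ --- strictly weaker than the lemma. The sentence ``affine halfspaces in $\mathbb{R}^{d+1}$ form a VC class of index $d+2$'' conflates index with dimension; under either reading the chain of estimates loses one power of $\varepsilon$.

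The missing ingredient is the sharper fact that Euclidean balls in $\mathbb{R}^d$ have VC dimension exactly $d+1$. This does not follow from the lift to arbitrary halfspaces; it uses the extra structure that in the representation $\|z\|_2^2 - 2\langle x, z\rangle + \|x\|_2^2 - r^2 \leq 0$ the coefficient of the lifted coordinate $\|z\|_2^2$ is \emph{fixed}. Concretely, the balls are the sets $\{ f_0 + g \geq 0 \}$ with the fixed function $f_0(z) = -\|z\|_2^2$ and $g$ ranging over the $(d+1)$-dimensional vector space spanned by $z_1, \ldots, z_d, 1$; by the Wenocur--Dudley refinement of Dudley's finite-dimensional lemma (a fixed additive function does not increase the bound $\dim \mathcal{G}$ on the VC dimension), this class has VC dimension at most $d+1$, hence VC index $d+2$, and substituting $V = d+2$ into the covering theorem gives exactly $C(d+2)(4e)^{d+2}\varepsilon^{-(d+1)}$. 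So your outline is repairable by replacing the halfspace step with this refinement (or by directly citing the known value $d+1$ for the VC dimension of balls). I would add that for the way the lemma is used in the paper --- only to bound $\log J$ by a constant multiple of $\log s$ --- the weaker exponent $d+2$ would be equally serviceable, but it does not establish the lemma as stated.
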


The following lemma, which will be used several times in the sequel,  provides the uniform bound on the distance between any point and its $k$-th nearest neighbor with a high probability when the distribution has bounded support.

\begin{lemma}\label{lem::Rrho} 
Let Assumption \ref{asp::holder} hold. Furthermore, let $\{ D_s^b\}_{b=1}^B$ be $B$ disjoint subsets of size $s$ randomly drawn from the data set $D_n$, where $D_s^b = \{X_1^b, \ldots, X_s^b\}$ and $R_{s,(i)}^b(x)$ denotes the $i$-distance of $x$ in the subset $D_s^b$ for $i\in [s]$. Additionally, let the sequence $c_n := \lceil 48(2d+9) \log n + 48\log 2 + 144 \rceil$.
Suppose that $s \geq c_1':=\max\{ 4e, d + 2, C\}$, where $C$ is the constant specified in Lemma \ref{lem::CoveringNumber}. Then, there exist constants $0 < c_2' < c_3'$ such that, with probability $ \mathrm{P}^{Bs}$ at least $1 - 1/n^2$, for all $x \in \mathcal{X}$, $b \in [B]$, and $c_n \leq i \leq s$,   the following inequalities hold:
\begin{align}\label{equ::infrksuprk} 
c_2' (i/s)^{1/d}\leq  R_{s,(i)}^b(x)\leq c_3' (i/s)^{1/d}
\end{align}
and
\begin{align}\label{equ::pbxrkxkn} 
\mathrm{P}(B(x,R_{s,(i)}^b(x)))\asymp i/s.
\end{align} 
\end{lemma}

\begin{proof}[of Lemma \ref{lem::Rrho}]
For $x \in \mathcal{X}$ and $q \in [0,1]$, under the continuity of the density function, as stated in Assumption \ref{asp::holder}, the intermediate value theorem guarantees the existence of a unique $\rho_x(q) \geq 0$ such that $\mathrm{P}(B(x, \rho_x(q))) = q$.

Let $\tau:=(2d+9)\log n + \log 2 + 3$ and $c_n: = \lceil 48\tau \rceil =  \lceil 48(2d+9) \log n + 48\log 2 + 144 \rceil$. Let $i$ be an integer fixed in the sequel with $i \geq c_n$, which ensures that $i>3\tau$. Accordingly, we consider the set $\mathcal{B}_i^- :=  \big\{ B \bigl( x, \rho_x \bigl( (i - \sqrt{3\tau i})/s \bigr) \bigr) : x \in \mathcal{X} \bigr\} \subset \mathcal{B}$. 
Lemma \ref{lem::CoveringNumber} implies that for any probability measure $\mathrm{Q}$ and $\varepsilon\in (0,1)$, there holds
\begin{align} \label{Bk-CoveringNumber}
\mathcal{N}(\eins_{\mathcal{B}_{i}^-}, \|\cdot\|_{L_1(\mathrm{Q})}, \varepsilon)
\leq \mathcal{N}(\eins_{\mathcal{B}}, \|\cdot\|_{L_1(\mathrm{Q})}, \varepsilon)
\leq C (d+2) (4e)^{d+2} \varepsilon^{-(d+1)}.
\end{align}
By the definition of the covering number, there exists $\varepsilon$-net $\{A_j^-\}_{j=1}^J \subset  \mathcal{B}_{i}^-$ with $J:= \lfloor C (d+2) (4e)^{d+2} \varepsilon^{-(d+1)} \rfloor$, and for any $x \in \mathcal{X}$, there exists some $j \in \{ 1, \ldots, J \}$ such that 
\begin{align} \label{eq::approxAj-1}
\bigl\| \eins \bigl\{ B \bigl( x, \rho_x \bigl( (i -\sqrt{3\tau i})/s \bigr) \bigr) \bigr\} - \eins_{A_j^-} \bigr\|_{L_1(D_s^b)} 
\leq \varepsilon.
\end{align}
Let $j\in [J]$ and $b\in [B]$ be fixed for now.
For any $\ell \in [s]$, define the random variables $\xi_{\ell,b} = \eins_{A_j^-}(X_\ell^b) -  (i - \sqrt{3\tau i})/s $. 
We have $\mathbb{E}_{\mathrm{P}}\xi_{\ell,b} = 0$, $\|\xi_{\ell,b}\|_{\infty} \leq 1$, and $\mathbb{E}_{\mathrm{P}}\xi_{\ell,b}^2 \leq (i - \sqrt{3\tau i})/s$. Applying Bernstein's inequality in Lemma \ref{lem::bernstein}, we obtain
\begin{align}\label{equ::1ssuml1s}
\frac{1}{s} \sum_{\ell=1}^s \eins_{A_j^-}(X_\ell^b) - (i - \sqrt{3\tau i)})/s
\leq \sqrt{2\tau (i - \sqrt{3\tau i})} / s + 2\tau/ (3s)
\end{align}
for $A_j^-$
with probability $\mathrm{P}^s$ at least $1-e^{-\tau}$. 
The union bound then implies that this inequality  holds for all $A_j^-$, $j = 1, \cdots, J$ with probability $\mathrm{P}^s$ at least $1-Je^{-\tau}$.
Combining this result with \eqref{eq::approxAj-1}, we obtain the following bound:
\begin{align}\label{equ::uniformbound}
&\frac{1}{s} \sum_{\ell=1}^s \eins \bigl\{X_\ell^b\in B \bigl( x, \rho_x \bigl( (i - \sqrt{3\tau i})/s \bigr) \bigr) \bigr\} - (i - \sqrt{3\tau i)})/s \nonumber\\
&\leq \sqrt{2\tau (i - \sqrt{3\tau i})} / s + 2\tau/ (3s) + \varepsilon
\end{align}
for all $x\in \mathcal{X}$ with probability $\mathrm{P}^s$ at least $1-Je^{-\tau}$.
 
Let $\varepsilon = 1/s$. Since $\lfloor x\rfloor \leq  x$ and $s\geq \max\{4e, d+2, C\}$, it follows that $\log J \leq \log C + \log(d+2) +  (d+2)\log(4e) + (d+1) \log s \leq (2d+5)\log s$.
Since $\tau =(2d+9)\log n + \log 2 + 3$ and $s\leq n$, we have the following lower bound on the probability: $1-Je^{-\tau}\geq 1-s^{2d+5}/(2e^3 n^{2d+9})> 1-1/(2n^4)$.
Therefore, for fixed $c_n\leq i\leq s$ and $b\in [B]$, inequality \eqref{equ::uniformbound} holds
for all $x\in \mathcal{X}$
with probability $\mathrm{P}^s$ at least $1-1/(2n^4)$.

Next, note that since $\tau>3$ and $i\geq \lceil 48\tau \rceil \geq 48\tau$,
a straightforward calculation gives
\begin{align}\label{equ::2taui}
&\sqrt{2\tau (i - \sqrt{3\tau i})} / s + 2\tau/ (3s) + \varepsilon \leq \sqrt{2\tau i}/s + (2\tau/3+1)/s \nonumber \\
& \leq \sqrt{2\tau i}/s +\tau/s\leq \sqrt{2\tau i}/s +\sqrt{\tau i/48}/s =(\sqrt{2}+\sqrt{1/48})\sqrt{\tau i}/s\leq \sqrt{3\tau i}/s.
\end{align}
The first inequality holds trivially without requiring $i\geq \lceil 48\tau \rceil$.
However, when considering the set
$\mathcal{B}_i^+ := \big\{ B \bigl( x, \rho_x \bigl( (i + \sqrt{3\tau i})/s \bigr) \bigr) : x \in \mathcal{X} \bigr\}$,  
the condition $i \geq  \lceil 48\tau \rceil$  is necessary to establish $i + \sqrt{3\tau i} \leq 5i/4$, allowing us to proceed with a similar argument.
Combining \eqref{equ::2taui} with \eqref{equ::uniformbound}, we get 
$\sum_{\ell=1}^s \eins \bigl\{ X_\ell^b\in B \bigl( x, \rho_x \bigl( (i - \sqrt{3\tau i})/s \bigr) \bigr) \bigr\} /s \leq i/s$ for all $x\in \mathcal{X}$ with probability $\mathrm{P}^s$ at least $1-1/(2n^4)$. 
By the definition of $R_{s,(i)}^b(x)$, there holds 
\begin{align}\label{equ::Rrho1-}
R_{s,(i)}^b(x) \geq \rho_x \bigl( (i - \sqrt{3\tau i})/s \bigr).
\end{align}
Now, let $V_d$ denote the volume of the $d$-dimensional closed unit ball and $\mu$ denote the Lesbegue measure. By the definition of $\rho_x$ and the condition $f(x)\leq \overline{c}$ for all $x\in \mathcal{X}$, as stated in Assumption \ref{asp::holder}, we have
\begin{align}\label{equ::rxlower}
& \bigl(i - \sqrt{3\tau i}\bigr)/s  = \mathrm{P} \bigl( B ( x, \rho_x \bigl( (i - \sqrt{3\tau i})/s \bigr) \bigr) \bigr) 
=\int_{B ( x, \rho_x ( (i - \sqrt{3\tau i})/s ) )} f(u) \, du \nonumber
\\
&\leq \overline{c} \cdot
\mu\bigl(B \bigl( x, \rho_x \bigl( (i - \sqrt{3\tau i})/s \bigr)\bigr)=\overline{c} V_d \rho_x^d\bigl( (i - \sqrt{3\tau i})/s\bigr).
\end{align}
Since $i\geq \lceil 48\tau \rceil \geq 48\tau$, we have $
\bigl(i - \sqrt{3\tau i}\bigr)/s \geq \bigl(i - \sqrt{3 i^2/48}\bigr)/s = 3i/(4s)$.
Combining this with inequality \eqref{equ::rxlower}, we get
$ \overline{c} V_d \rho_x^d\bigl( (i - \sqrt{3\tau i})/s\bigr) \geq 3i/(4s)$.
Therefore, we obtain 
\begin{align*}
\rho_x \bigl( (i - \sqrt{3\tau i})/s \bigr) 
\geq \bigl( 3 /(4V_d\overline{c} ) \bigr)^{1/d}(i/s)^{1/d}.
\end{align*}
Combining this with \eqref{equ::Rrho1-}, we have $R_{s,(i)}^b(x) \geq \rho_x\big((i-\sqrt{3\tau i})/s)  \bigr) \gtrsim (i/s)^{1/d}$ for all $x \in \mathcal{X}$ and fixed $i$ and $b$ with probability $\mathrm{P}^s$ at least $1-1/(2n^4)$. Therefore, by a union bound argument over $i$ and $b$, for all $i \geq c_n$, $x\in \mathcal{X}$, and $b\in [B]$, we have
\begin{align}\label{equ::rkxlower}
R_{s,(i)}^b(x)	
\geq \rho_x\big((i-\sqrt{3\tau i})/s)  \bigr)\gtrsim (i/s)^{1/d}
\end{align}
with probability $ \mathrm{P}^{Bs}$ at least $1 - 1/(2n^2)$. 

On the other hand, to obtain the upper bound for $R_{s,(i)}^b(x)$, we consider the set $\mathcal{B}_i^+ =  \big\{ B \bigl( x, \rho_x \bigl( (i + \sqrt{3\tau i})/s \bigr) \bigr) : x \in \mathcal{X} \bigr\}$. Using similar arguments, we can show that for all $c_n\leq i\leq s$, $x\in \mathcal{X}$, and $b\in [B]$, it holds that
\begin{align}\label{equ::rkxupper}
R_{s,(i)}^b(x) \leq \rho_x\bigl((i+\sqrt{3\tau i})/s\bigr)
\lesssim (i/s)^{1/d}
\end{align}
with probability $ \mathrm{P}^{Bs}$ at least $1 - 1/(2n^2)$.
Combining \eqref{equ::rkxlower} and   \eqref{equ::rkxupper}, we obtain that \eqref{equ::infrksuprk} holds for all $c_n\leq i\leq s$, $x\in \mathcal{X}$, and $b\in [B]$
with probability $ \mathrm{P}^{Bs}$ at least $1-1/n^2$.
Furthermore, \eqref{equ::rkxlower} and  \eqref{equ::rkxupper} implies 
\begin{align*}
(i-\sqrt{3\tau i})/s=\mathrm{P}\big(B(x,\rho_x((i-\sqrt{3\tau i})/s))\big)\leq \mathrm{P}(B(x,R_{s,(i)}^b(x)))\\
\leq \mathrm{P}\big(B(x,\rho_x((i+\sqrt{3\tau i})/s))\big)=(i+\sqrt{3\tau i})/s.
\end{align*}
Since $\sqrt{3\tau i}\lesssim \sqrt{i\log n}\lesssim i$ and $i-\sqrt{3\tau i}\geq i-\sqrt{3i^2/48}=  3i/4$ for $c_n\leq i\leq s$, we get
$\mathrm{P}(B(x,R_{s,(i)}^b(x)))\asymp i/s$.
This completes the proof of Lemma \ref{lem::Rrho}.
\end{proof}

The following lemma, which is needed in the proof of Lemma \ref{lem::i5lem}, shows that the probability $\mathrm{P}(B(x,R_{s,(i)}^b(x)))$ is a Lipschitz continuous function for fixed $i\in [s]$ and $b\in [B]$.
This Lemma is necessary to establish the uniform concentration inequality in Lemma \ref{lem::i5lem}. 

\begin{lemma}\label{lem:kdistlipschitz}
Suppose that Assumption \ref{asp::holder} holds. For $i \in [s]$ and $b \in [B]$, let $R_{s,(i)}^b(x)$ denote the $i$-distance of $x$ in the subset $D_s^b = \{X_1^b, \ldots, X_s^b\}$. Then for any $x, x' \in \mathcal{X}$ and all $b\in[B]$, we have $\bigl|\mathrm{P}(B(x,R_{s,(i)}^b(x))) - \mathrm{P}(B(x',R_{s,(i)}^b(x')))\bigr|\leq 2\overline{c}V_d \cdot 3^d d^{(d+1)/2} \|x-x'\|_2$, where $V_d$ denotes the volume of the $d$-dimensional closed unit ball.
\end{lemma}

\begin{proof}[of Lemma \ref{lem:kdistlipschitz}]
We first show that $R_{s,(i)}^b(x)$ is a Lipschitz continuous function of $x$.
Let $t=R_{s,(i)}^b(x)$. By the triangle inequality the $i$ nearest neighbors of $x$ are at distance at most $d(x, x')+t$ of $x'$. Therefore there are at least $i$ points at at most this distance from $x'$, hence the $i$-th nearest neighbor of $x'$ is also at distance less than $d(x, x')+t$, in other words,
\begin{align*}
R_{s,(i)}^b(x') \leq R_{s,(i)}^b(x)+d(x,x'). 
\end{align*}
By symmetry, we also have $R_{s,(i)}^b(x) \leq R_{s,(i)}^b(x')+d(x,x')$, implying that
$R_{s,(i)}^b(x)$ is a Lipschitz continuous function.

Next, we aim to show the continuity of $\mathrm{P}(B(x, R_{s,(i)}^b(x)))$. 
For any $x,x'\in\mathcal{X}$, given $\|f\|_{\infty} \leq \overline{c}$ from Assumption \ref{asp::holder}, we have: 
\begin{align}\label{equ::biglpbxrsibx}
\bigl|\mathrm{P}(B(x,R_{s,(i)}^b(x))) -\mathrm{P}(B(x',R_{s,(i)}^b(x')))\bigr|\leq \overline{c} \bigl|\mu(B(x,R_{s,(i)}^b(x)) \triangle B(x',R_{s,(i)}^b(x')))\bigr|,
\end{align}
where the notation $\triangle$ represents the symmetric difference between the two sets, i.e.~$A\triangle B:=(A\setminus B)\cup (B\setminus A)$, and $\mu$ denote the Lesbegue measure.
Using geometric considerations, the Hausdorff distance between $B(x, R_{s,(i)}^b(x))$ and $B(x', R_{s,(i)}^b(x'))$ can be bounded by $\delta := \|x - x'\|_2 + |R_{s,(i)}^b(x) - R_{s,(i)}^b(x')|$. 
Therefore, we have 
\begin{align*}
\mu(B(x,R_{s,(i)}^b(x)) \setminus B(x',R_{s,(i)}^b(x')))\leq V_d \bigl(\bigl(R_{s,(i)}^b(x)+\delta\bigr)^d - R_{s,(i)}^b(x)^d\bigr).
\end{align*}
Applying the Lagrange mean value theorem, there exists $\xi \in [R_{s,(i)}^b(x), R_{s,(i)}^b(x)+\delta]$ such that $
\bigl(R_{s,(i)}^b(x)+\delta\bigr)^d - R_{s,(i)}^b(x)^d\leq  d  \xi^{d-1}\delta$.
From the Lipschitz continuity of $R_{s,(i)}^b(x)$ and the assumption $\mathcal{X}=[0,1]^d$, we know that $\delta \leq 2 \|x - x'\|_2\leq 2d^{1/2}$ 
and $R_{s,(i)}^b(x) \leq d^{1/2}$. This implies $\xi\leq R_{s,(i)}^b(x)+\delta \leq 3d^{1/2}$.
Substituting this into the above inequalities, we get $
\mu(B(x,R_{s,(i)}^b(x)) \setminus B(x',R_{s,(i)}^b(x')))\leq V_d\cdot d\cdot (3d^{1/2})^{d-1} \cdot \delta$.
Since $\delta\leq 2\|x-x'\|_2$, this yields 
\begin{align*}
\mu(B(x,R_{s,(i)}^b(x)) \setminus B(x',R_{s,(i)}^b(x')))\leq V_d \cdot 3^d d^{(d+1)/2} \|x-x'\|_2.
\end{align*}
By symmetry, we can similarly bound $\mu(B(x', R_{s,(i)}^b(x')) \setminus B(x, R_{s,(i)}^b(x)))$. Combining these bounds, we obtain:
\begin{align*}
\bigl|\mu(B(x,R_{s,(i)}^b(x)) \triangle B(x',R_{s,(i)}^b(x')))\bigr|\leq 
2V_d \cdot 3^d d^{(d+1)/2} \|x-x'\|_2.
\end{align*}
Finally, substituting this into  \eqref{equ::biglpbxrsibx}, we obtain the desired assertion.
\end{proof}

By \eqref{equ::pbxribx}, the study of $\mathrm{P}(B(x,R_{s,(i)}^b(x)))$ reduces to analyzing $U_{(i)}^b$, where $U_{(i)}^b \sim \mathrm{Beta}(i, s+1-i)$. Consequently, term $(II)$ in \eqref{equ::i5} represents a sum of sample mean deviations, where the ``sample'' refers to
$\mathrm{P}(B(x,R_{s,(i)}^b(x)))^{1/d}$ across $b\in [B]$. and our objective is to establish finite-sample bounds for each $i$.

To achieve this, we leverage the sub-exponential property of the $d$-th root of the beta distribution. However, this problem is challenging due to the complexity of the integral in its moment-generating function. To the best of our knowledge, no existing results provide such bounds for this distribution.

To address this gap, we introduce a new approach to derive an upper bound for the moment-generating function. Specifically, we establish upper bounds on the $p$-th absolute central moment of the beta distribution for $p\geq 1$ in Lemma \ref{lem::betasub}, following \cite{skorski2023bernstein}. We then extend these results to its $d$-th root in Lemma \ref{lem::momentbetaroot}, leading to the sub-exponential property established in Lemma \ref{lem::rootbetasub}. This property serves as the foundation for our finite-sample bounds in Lemma \ref{lem::i5lem}.

\begin{lemma}\label{lem::betasub}
Let $s$ and $i$ be integers with $s\geq 2i$. Let $X\sim \mathrm{Beta}(i,s+1-i)$ and $\mu = \mathbb{E}X = i/(s+1)$. Define
\begin{align}\label{equ::vu}
u=\frac{2(s+1-2i)}{(s+1)(s+3)}, \quad v = \frac{i(s+1-i)}{(s+1)^2(s+2)}.
\end{align}
Then $X$ is sub-exponential with parameters $(2\sqrt{v},2u)$.
Furthermore, for all $p\geq 1$, we have $\mathbb{E}\bigl[|X-\mu|^p\bigr] \leq 4^{p+1}(p\sqrt{v})^p$.
\end{lemma}

\begin{proof}[of Lemma \ref{lem::betasub}]
We define the moment generating function   $\phi_X(\lambda):=\mathbb{E}[\exp(\lambda(X-\mathbb{E}X))]$ and its logarithm $\psi(\lambda):=\log \phi_X(\lambda)$. 
Since $s\geq 2i$, we have $s+1-i\geq i$. 
Therefore, for the Beta distribution $\mathrm{Beta}(i,s+1-i)$ with $v$ and $u$ as defined above, the first case in 
\citet{skorski2023bernstein}[Claim (43)] provides the bound:
\begin{align}\label{equ::psilog}
\psi(\lambda)\leq 
-\frac{v}{u^2} \bigl(u\lambda+\log(1-u\lambda)\bigr),\quad \ 0\leq \lambda<1/u.
\end{align}
Define $g(x)=\log(1-x)+x+2x^2$ for $0\leq x\leq 1/2$. Then we have $g'(x)=x(3-4x)/(1-x)>0$. Therefore, $g(x)$ is increasing, and thus $g(x)\geq g(0)=0$ for $0\leq x\leq 1/2$.
Hence, for $0\leq \lambda\leq 1/(2u)$, we have
$\log(1-u\lambda)\geq -u\lambda-2u^2\lambda^2$. Substituting this into \eqref{equ::psilog}, we get
\begin{align}\label{equ::psitbound}
\psi(\lambda)\leq 2v\lambda^2, \quad 0\leq \lambda\leq 1/(2u).
\end{align}

Next, we extend the bound to $\lambda<0$. Since $X\sim \mathrm{Beta}(i,s+1-i)$, it follows that $1-X \sim \mathrm{Beta}(s+1-i,i)$.
Applying the second case in \citet{skorski2023bernstein}[Claim (43)], 
we have 
\begin{align*}
\psi(\lambda)=\log 
\mathbb{E}[\exp\bigl((-\lambda)\cdot (1-X-\mathbb{E}[1-X])\bigr)]\leq v\lambda^2/2, \quad -1/(2u)\leq  \lambda<0. 
\end{align*}
Combining this with \eqref{equ::psitbound}, we conclude
\begin{align*}
\psi(\lambda) \leq 2v\lambda^2,\quad |\lambda|\leq 1/(2u).
\end{align*}
Therefore, $X$ is sub-exponential with parameters $(2\sqrt{v},2u)$. 

Next, we turn to bound the moments.
It is clear to see that for $x \geq 0$ and $a > 0$, we have $e^{ax} \geq ax$. For $x \geq \mu$ and $p \geq 1$, this implies
\begin{align*}
\frac{x-\mu}{4p\sqrt{v}} \leq e^{(x-\mu)/(4p\sqrt{v})} \leq \bigl(e^{(x-\mu)/(4\sqrt{v})} + e^{-(x-\mu)/(4\sqrt{v})}\bigr)^{1/p}.
\end{align*}
By symmetry, this yields
\begin{align*}
|x-\mu|^p \leq (4p\sqrt{v})^p \bigl(e^{(x-\mu)/(4\sqrt{v})} + e^{-(x-\mu)/(4\sqrt{v})}\bigr).
\end{align*}
Given $s \geq 2i$, we have
\begin{align*}
\frac{1}{4\sqrt{v}} = \frac{(s+1)\sqrt{s+2}}{4\sqrt{i(s+1-i)}} \leq \frac{s+1}{4} \cdot \sqrt{\frac{s+2}{s+1-i}} \leq \frac{(s+1)(s+2)}{4(s+1-i)} < \frac{(s+1)(s+3)}{4(s+1-2i)} = \frac{1}{2u}.
\end{align*}
Taking expectation with respect to $X$ and using the sub-exponential property, we obtain
\begin{align*}
\mathbb{E}[|X-\mu|^p] &\leq (4p\sqrt{v})^p \bigl(\mathbb{E}[e^{(X-\mu)/(4\sqrt{v})}] + \mathbb{E}[e^{-(X-\mu)/(4\sqrt{v})}]\bigr) \\
&\leq 2e^{1/8} (4p\sqrt{v})^p \leq 4^{p+1}(p\sqrt{v})^p.
\end{align*}
This completes the proof.
\end{proof}

\begin{lemma}\label{lem::momentbetaroot}
Let $i$, $s$, and $d$ be integers with $s \geq 2i$. Let $X \sim \mathrm{Beta}(i, s+1-i)$ and $\gamma_{s,i}$ be defined as in \eqref{equ::gammai}. Then for all $p \geq 1$, we have  
\begin{align*}
\mathbb{E}\bigl[|X^{1/d}-\gamma_{s,i}|^p\bigr] \leq 3\cdot(8p)^p \bigl(i^{-1/2+1/d}s^{-1/d}\bigr)^p. 
\end{align*}
\end{lemma}

\begin{proof}[Proof of Lemma \ref{lem::momentbetaroot}]
For all $x, y \in \mathbb{R}$ and $p \geq 1$, by Jensen's inequality, we have $(|x|+|y|)^p \leq 2^{p-1}(|x|^p + |y|^p)$. Therefore, 
\begin{align}\label{equ::rootmoment}
\mathbb{E}\bigl[|X^{1/d}-\gamma_{s,i}|^p\bigr]\leq 2^{p-1} \mathbb{E}\bigl[|X^{1/d}-(i/(s+1))^{1/d}|^p\bigr] + 
2^{p-1}|(i/(s+1))^{1/d}-\gamma_{s,i}|^p.
\end{align}
\textit{Bounding the First Term:}
By equality \eqref{equ::dequality}, we have
\begin{align*}
|X-i/(s+1)| & = |X^{1/d}-(i/(s+1))^{1/d}| \cdot \sum_{j=0}^{d-1} \Bigl(X^{j/d} (i/(s+1))^{(d-1-j)/d}\Bigr) \\
& \geq (i/(s+1))^{(d-1)/d} \cdot |X^{1/d}-(i/(s+1))^{1/d}|.
\end{align*}
Therefore, we get
\begin{align}\label{equ::upperx1dfirst1}
\mathbb{E}\bigl[|X^{1/d}-(i/(s+1))^{1/d}|^p\bigr] \leq (i/(s+1))^{-p(d-1)/d} \cdot \mathbb{E}\bigl[|X-(i/(s+1))|^p\bigr].
\end{align}
Lemma \ref{lem::betasub} implies that $\mathbb{E}\bigl[|X-i/(s+1)|^p\bigr] \leq 4^{p+1}(p\sqrt{v})^p$,
where $v$ is specified in \eqref{equ::vu}. Since $v < i/(s+1)^2$, combining this with \eqref{equ::upperx1dfirst1} gives
\begin{align*}
2^{p-1}\mathbb{E}\bigl[|X^{1/d}-(i/(s+1))^{1/d}|^p\bigr] & \leq  2^{p-1}(i/(s+1))^{-p(d-1)/d} \cdot  4^{p+1}p^p\cdot (i^{1/2}/(s+1))^p\\
& = 2\cdot (8p)^p \cdot\bigl(i^{-1/2+1/d}(s+1)^{-1/d}\bigr)^p.
\end{align*}
\textit{Bounding the Second Term:}
Using Gautschi's inequality \citep{gautschi1959some} for Gamma functions, we have
\begin{align*}
\biggl(i+\frac{1}{d}-1\biggr)^{1/d}<\frac{\Gamma(i+1/d)}{\Gamma(i)}<\biggl(i+\frac{1}{d}\biggr)^{1/d}
\end{align*}
and 
\begin{align*}
\biggl(s+ \frac{1}{d}\biggr)^{1/d} <\frac{\Gamma(s + 1+  1/d  )}{\Gamma(s+1)}< \biggl(s+1+ \frac{1}{d}\biggr)^{1/d}.
\end{align*}
By the definition of $\gamma_{s,i}$ in \eqref{equ::gammai}, we conclude that
\begin{align}\label{equ::gammasiupper}
\biggl(\frac{i+1/d-1}{s+1+1/d}\biggr)^{1/d} < \gamma_{s,i} < \biggl(\frac{i+1/d}{s+1/d}\biggr)^{1/d}.
\end{align}
Rewriting the left bound:
\begin{align}\label{equ::i1ds1}
\biggl(\frac{i+1/d-1}{s+1+1/d}\biggr)^{1/d} = \biggl(\frac{i}{s+1}\biggr)^{1/d} \cdot \biggl(1 + \frac{(1/d-1)/i-(1/d)/(s+1)}{1+(1/d)/(s+1)}\biggr)^{1/d}.
\end{align}
Since $s \geq 2i$, we have
\begin{align*}
\biggl|\frac{(1/d-1)/i-(1/d)/(s+1)}{1+(1/d)/(s+1)}\biggr|& \leq \biggl|\frac{1/d}{s+1}-\frac{1/d-1}{i}\biggr|\leq \frac{1/d}{2i}+\frac{1-1/d}{i}\nonumber \\
& = \frac{2-1/d}{2i}\leq 1-\frac{1}{2d}.
\end{align*}
Define $g(x) = (1+x)^{1/d}$ for $x \in [-1+1/(2d),1-1/(2d)]$. Then, its derivative satisfies $g'(x)= (1+x)^{1/d-1}/d \leq 2^{1-1/d}/d^{1/d}< 2$. By the mean value theorem, we have $|(1+x)^{1/d}-1| \leq 2|x|$ for $x \in [-1+1/(2d), 1-1/(2d)]$. This implies
\begin{align}\label{equ::bound1}
\biggl|\biggl(\frac{i+1/d-1}{s+1+1/d}\biggr)^{1/d} - \biggl(\frac{i}{s+1}\biggr)^{1/d}\biggr| \leq \frac{2(2-1/d)}{2i} \biggl(\frac{i}{s+1}\biggr)^{1/d}\leq \frac{2}{i} \biggl(\frac{i}{s+1}\biggr)^{1/d}.
\end{align}
Next, we rewrite the right bound in \eqref{equ::gammasiupper} as
\begin{align*}
\biggl(\frac{i+1/d}{s+1/d}\biggr)^{1/d} = \biggl(\frac{i}{s+1}\biggr)^{1/d}\biggl(1+\frac{(1/d)/i+(1-1/d)/(s+1)}{1-(1-1/d)/(s+1)}\biggr)^{1/d}.
\end{align*}
For $s \geq 2i \geq 2$, we derive $1-(1-1/d)/(s+1)\geq 1-1/(s+1)=s/(s+1)\geq 2/3$ and $(1/d)/i+(1-1/d)/(s+1)\geq 0$. Using the inequality $(1+x)^{1/d}\leq 1+x/d$ for $x\geq 0$, we obtain
\begin{align*}
\biggl(1+\frac{(1/d)/i+(1-1/d)/(s+1)}{1-(1-1/d)/(s+1)}\biggr)^{1/d}& \leq 1+ \frac{1}{d}\cdot\frac{(1/d)/i+(1-1/d)/(s+1)}{1-(1-1/d)/(s+1)}\\
& \leq 1 + \frac{3}{2d}\cdot \biggl(\frac{1/d}{i}+\frac{1-1/d}{2i}\biggr)\\
& = 1+\frac{3(1/d+1)\cdot (1/d)}{4i}\leq 1+\frac{3}{2i},
\end{align*}
where the second inequality follows from $1-(1-1/d)/(s+1)\geq 2/3$ and $s\geq 2i$ and the last inequality follows from $x(1+x)\leq 2$ for $x\in [0,1]$.
Therefore, we obtain
\begin{align*}
\biggl|\biggl(\frac{i+1/d}{s+1/d}\biggr)^{1/d} - \biggl(\frac{i}{s+1}\biggr)^{1/d}\biggr| \leq \frac{2}{i} \biggl(\frac{i}{s+1}\biggr)^{1/d}.
\end{align*}
Combining this with \eqref{equ::bound1}, we conclude that
\begin{align*}
|(i/(s+1))^{1/d}-\gamma_{s,i}| \leq 2i^{-1+1/d}(s+1)^{-1/d}\leq  2i^{-1/2+1/d}(s+1)^{-1/d}.
\end{align*}
Therefore, we have
\begin{align*}
2^{p-1}|(i/(s+1))^{1/d}-\gamma_{s,i}|^p \leq (8p)^p \bigl(i^{-1/2+1/d}(s+1)^{-1/d}\bigr)^p.
\end{align*}
Finally, combining the bounds for both terms and using \eqref{equ::rootmoment}, we obtain
\begin{align*}
\mathbb{E}\bigl[|X^{1/d}-\gamma_{s,i}|^p\bigr] \leq 3\cdot(8p)^p \bigl(i^{-1/2+1/d}s^{-1/d}\bigr)^p.
\end{align*}
This concludes the proof.
\end{proof}

\begin{lemma}\label{lem::rootbetasub}
Let $s$ and $i$ be integers with $s\geq 2i$. Let $X\sim \mathrm{Beta}(i,s+1-i)$. Then $X^{1/d}$ is sub-exponential with parameters $(16e\sqrt{3}i^{-1/2+1/d}s^{-1/d},16ei^{-1/2+1/d}s^{-1/d})$.
\end{lemma}

\begin{proof}[of Lemma \ref{lem::rootbetasub}]
Define the moment generating function (MGF) as 
\begin{align*}\phi(\lambda):=\mathbb{E}[\exp(\lambda(X^{1/d}-\gamma_{s,i}))]
\end{align*}
and its logarithm $\psi(\lambda):=\log \phi(\lambda)$, where $\gamma_{s,i}$ is defined in \eqref{equ::gammai}. 
To verify the sub-exponential property, we evaluate the MGF.

Using the Taylor expansion of the exponential function, we write
\begin{align*}
\mathbb{E}\big[\exp\big(\lambda (X^{1/d}-\gamma_{s,i})\big)\big] = \mathbb{E}\Bigg[1 + \lambda  (X^{1/d}-\gamma_{s,i})+ \sum_{p=2}^\infty \frac{\lambda^p  (X^{1/d}-\gamma_{s,i})^p}{p!}\Bigg].
\end{align*}
Since $\mathbb{E}[X^{1/d}-\gamma_{s,i}]=0$, the linear term vanishes, and we bound the higher-order terms using Lemma \ref{lem::momentbetaroot} as follows 
\begin{align*}
\mathbb{E}\big[\exp\big(\lambda (X^{1/d}-\gamma_{s,i})\big)\big] \leq  1+  \sum_{p=2}^\infty \frac{|\lambda|^p  \mathbb{E}\bigl[\bigl|X^{1/d}-\gamma_{s,i}\bigr|^p\bigr]}{p!}\leq  1+  \sum_{p=2}^\infty \frac{3(8i^{-1/2+1/d}s^{-1/d}|\lambda| )^p  p^p}{p!}.
\end{align*}
Using  Stirling’s approximation $p!\geq (p/e)^p$ for $p\geq 1$, we obtain
\begin{align*}
\mathbb{E}\big[\exp\big(\lambda (X^{1/d}-\gamma_{s,i})\big)\big]\leq  1+  3\sum_{p=2}^\infty (8e i^{-1/2+1/d}s^{-1/d}|\lambda|)^p 
\end{align*}
The geometric series sums to
\begin{align*}
\sum_{p=2}^\infty (8e i^{-1/2+1/d}s^{-1/d}|\lambda|)^p = \frac{64e^2i^{-1+2/d}s^{-2/d}\lambda^2 }{1 -8ei^{-1/2+1/d}s^{-1/d}|\lambda|}.
\end{align*}
For $|\lambda| \leq i^{1/2-1/d}s^{1/d}/(16e)$, we have $8ei^{-1/2+1/d}s^{-1/d}|\lambda| \leq 1/2$, ensuring convergence. Substituting this bound, we get
\begin{align*}
\mathbb{E}\big[\exp\big(\lambda (X^{1/d}-\gamma_{s,i})\big)\big] \leq 1+384e^2i^{-1+2/d}s^{-2/d}\lambda^2 .
\end{align*}
Since $1+x\leq e^x$ for $x\in \mathbb{R}$, we get
\begin{align*}
1+384e^2 i^{-1+2/d}s^{-2/d}\lambda^2\leq \exp(384e^2 i^{-1+2/d}s^{-2/d}\lambda^2).
\end{align*}
Therefore, we have 
\begin{align*}
\mathbb{E}\big[\exp\big(\lambda (X^{1/d}-\gamma_{s,i})\big)\big] \leq \exp(384e^2 i^{-1+2/d}s^{-2/d}\lambda^2), \quad |\lambda| \leq i^{1/2-1/d}s^{1/d}/(16e).
\end{align*}
This establishes that $X^{1/d}$ is sub-exponential with parameters $\zeta=16e\sqrt{3}i^{-1/2+1/d}s^{-1/d}$ and $\alpha = 16ei^{-1/2+1/d}s^{-1/d}$.
This completes the proof. 
\end{proof}

\begin{lemma}\label{lem::i5lem}
Let Assumption \ref{asp::holder} hold. Additionally, let $\{D_s^b\}_{b=1}^B$ be $B$ disjoint subsets of size $s$ randomly drawn from data $D_n$.
Let $R_{s,(i)}^b(x)$ be the $i$-distance of $x$ in the subset $D_s^b$. Define $\overline{k}$ as in Proposition \ref{pro::I456sur}.
Suppose that
$B \geq 2(d^2+4)(\log n)/3$, $s\geq 2\overline{k}$.
Furthermore, assume that there exists constants $C_{n,i}'$ such that $w_i^b\leq C_{n,i}'$ for all $b\in [B]$ and $i\in [s]$.
Then, there exists $n_1:=2d^d+1$, such that for all $x \in \mathcal{X}$, $1\leq i\leq \overline{k}$, and all $n > n_1$, the following statement holds with probability $ \mathrm{P}^{Bs}$ at least $1 - 1/n^2$:
\begin{align}\label{equ::lemma152}
\Biggl|\frac{1}{B} \sum_{b=1}^B w^b_i \bigl(\mathrm{P}(B(x,R_{s,(i)}^b(x)))^{1/d}-\gamma_{s,i}\bigr)\Biggr|\lesssim C_{n,i}' \biggl(\frac{i}{s}\biggr)^{1/d} \sqrt{\frac{\log n}{i B}}.
\end{align}
\end{lemma}

\begin{proof}[of Lemma \ref{lem::i5lem}]
We first prove \eqref{equ::lemma152} holds for a  fixed $x\in \mathcal{X}$ and a fixed $1\leq i\leq\overline{k}$.
For any $b\in [B]$, by \eqref{equ::pbxribx}, we have
\begin{align*}
\bigl(\mathrm{P}(B(x,R_{s,(1)}^b(x))),\ldots,\mathrm{P}(B(x,R_{s,(s)}^b(x)))\bigr)\overset{\mathcal{D}}{=}\big(U_{(1)}^b,\ldots,U_{(s)}^b\big),
\end{align*}
where $U_{(i)}^b$ is the $i$-th order statistic of $\mathrm{i.i.d.}$~uniform $[0,1]$ random variables. By \citet[Corollary 1.2]{biaulecture},
$U_{(i)}^b \sim \mathrm{Beta}(i,s+1-i)$.
Let $\xi_b:=\mathrm{P}(B(x,R_{s,(i)}^b(x)))$.
Since $\{D_s^b\}_{b=1}^B$ are independent, 
$\{\xi_b\}_{b=1}^B$ are independent random variables following $\mathrm{Beta}(i,s+1-i)$. The desired inequality \eqref{equ::lemma152} then reduces to the concentration inequality for $w^b_i\xi_b^{1/d}$.
To prove this, we begin by showing that $\frac{1}{B}\sum_{b=1}^B w^b_i \bigl(\xi_b^{1/d}-\gamma_{s,i}\bigr)$ is a sub-exponential random variables. Since $\xi_b$ are independent, we have
\begin{align*}
\mathbb{E} \bigg[\exp\biggl(\lambda\bigg(\frac{1}{B}\sum_{b=1}^B w_i^b \big(\xi_b^{1/d}-\gamma_{s,i}\big)\biggr)\biggr)\bigg] = \prod_{b=1}^B \mathbb{E}\big[\exp\bigl(\lambda w_i^b \bigl(\xi_b^{1/d}-\gamma_{s,i}\bigr)/B\bigr)\bigr]. 
\end{align*}
Since $s\geq 2\overline{k}$, it follows that $s-i+1\geq i$ for $1\leq i\leq \overline{k}$.
Given the condition $w_i^b\leq C_{n,i}'$, we have $|\lambda w_i^b/B|\leq i^{1/2-1/d}s^{1/d}/(16e)$ for all $|\lambda |\leq i^{1/2-1/d}s^{1/d}B/(16eC_{n,i}')$.
By leveraging the sub-exponential property of $\xi_b^{1/d}$ stated in Lemma \ref{lem::rootbetasub}, we obtain
\begin{align*}
\mathbb{E} \bigg[\exp\biggl(\lambda\bigg(\frac{1}{B}\sum_{b=1}^B w_i^b \big(\xi_b^{1/d}-\gamma_{s,i}\big)\bigg)\biggr)\bigg] \leq \exp \biggl(\frac{384 e^2 C_{n,i}'^2 i^{-1+2/d}\lambda^2}{Bs^{2/d}}\biggr), \quad  |\lambda|\leq \frac{i^{1/2-1/d}s^{1/d}B}{16eC_{n,i}'}.
\end{align*} 
This shows $\frac{1}{B}\sum_{b=1}^B w_i^b \big(\xi_b^{1/d}-\gamma_{s,i}\big)$ is sub-exponential. 
Applying the sub-exponential tail bound in Lemma \ref{lem::subtailbound}, we have
\begin{align*}
 \mathrm{P}^{Bs}\biggl(\Biggl|\frac{1}{B} \sum_{b=1}^B w^b_i \bigl(\xi_b^{1/d}-\gamma_{s,i}\bigr)\Biggr|\geq 16e\sqrt{6}C_{n,i}' \biggl(\frac{i}{s}\biggr)^{1/d} \sqrt{\frac{\tau}{i B}} \biggr)\leq 2e^{-\tau}.
\end{align*}
for all $0\leq \tau\leq 3B/2$. Since $B\geq 2(d^2+4)(\log n)/3$, it follows that $3B/2\geq (d^2+4)\log n$. Taking $\tau:=(d^2+4)\log n$ and replacing $
\xi_b$ with $\mathrm{P}(B(x,R_{s,(i)}^b(x)))$, for a fixed $x\in \mathcal{X}$ and a fixed $1\leq i\leq \overline{k}$, we have 
\begin{align}\label{equ::xibupper}
 \mathrm{P}^{Bs}\biggl(\Biggl|\frac{1}{B} \sum_{b=1}^B w^b_i \bigl(\mathrm{P}(B(x,R_{s,(i)}^b(x)))^{1/d}-\gamma_{s,i}\bigr)\Biggr|\lesssim C_{n,i}' \biggl(\frac{i}{s}\biggr)^{1/d} \sqrt{\frac{\log n}{i B}}\biggr)\geq 1-\frac{2}{n^{d^2+4}}.
\end{align}

To extend the upper bound to all $x\in \mathcal{X}$, consider a $1/n^d$-net $\{z_j\}_{j=1}^J$ of $[0,1]^d$.
A $1/n^d$-net is a finite subset of $\mathcal{X}$ such that for any $x \in \mathcal{X}$, there exists $z_j$ in the net with $\|x - z_j\|_2 \leq 1/n^d$.  
The construction of such a net can be done by placing grid points spaced $1/(dn^d)$ apart in each of the $d$ dimensions. This results in at most $dn^d$ grid points per dimension, and thus the total number of grid points satisfies $J \leq (dn^d)^d\leq d^d n^{d^2}$.  
By \eqref{equ::xibupper}, the bound holds for each $z_j$ with $j\in [J]$,
and using the union bound, it holds for all $j\in [J]$ with probability $ \mathrm{P}^{Bs}$ at least $1-2d^d/n^4$. 

Since $\{z_j\}_{j=1}^J$ is a $1/n^d$-net, for any $x\in \mathcal{X}$, there exists $z_j$ such that $\|x-z_j\|_2\leq 1/n^d$. By Lemma \ref{lem:kdistlipschitz}, we have $\bigl|\mathrm{P}(B(x,R_{s,(i)}^b(x))) - \mathrm{P}(B(z_j,R_{s,(i)}^b(z_j)))\bigr|\lesssim 1/n^d$ for all $b\in [B]$.
Using the Minkowski inequality, this implies 
\begin{align*}
&\bigl|\mathrm{P}(B(x,R_{s,(i)}^b(x)))^{1/d} - \mathrm{P}(B(z_j,R_{s,(i)}^b(z_j)))^{1/d}\bigr|\\
&\leq \bigl|\mathrm{P}(B(x,R_{s,(i)}^b(x))) - \mathrm{P}(B(z_j,R_{s,(i)}^b(z_j)))\bigr|^{1/d}\lesssim 1/n.
\end{align*}
Combining this with the triangle inequality and the error bound for $z_j$, we obtain
\begin{align*}
& \Biggl|\frac{1}{B}\sum_{b=1}^B w^b_i \bigl(\mathrm{P}(B(x,R_{s,(i)}^b(x)))^{1/d}-\gamma_{s,i}\bigr)\Biggr|\\
& \leq \Biggl|\frac{1}{B}\sum_{b=1}^B w^b_i \bigl(\mathrm{P}(B(x,R_{s,(i)}^b(x)))^{1/d}-\bigl(\mathrm{P}(B(z_j,R_{s,(i)}^b(z_j)))^{1/d}\bigr)\Biggr|
\\
& \phantom{=}+ \Biggl|\frac{1}{B}\sum_{b=1}^B w^b_i \bigl(\mathrm{P}(B(z_j,R_{s,(i)}^b(z_j)))^{1/d}-\gamma_{s,i}\bigr)\Biggr|\\
& \lesssim C_{n,i}' \biggl(\frac{i}{s}\biggr)^{1/d} \sqrt{\frac{\log n}{i B}}+\frac{C_{n,i}'}{n}
\end{align*}
for all $x\in \mathcal{X}$ and a fixed $i$
with probability $ \mathrm{P}^{Bs}$ at least $1-2d^d/n^4$.
By applying the union bound over $i$, the same holds for all $x\in \mathcal{X}$ and $1\leq i\leq \overline{k}$ with probability $ \mathrm{P}^{Bs}$ at least $1-2d^d/n^3$.
For $n>n_1:=2d^d+1$, we note that $i\log n\geq 1$ for all $1\leq i\leq \overline{k}$. This implies
\begin{align*}
\biggl(\frac{i}{s}\biggr)^{1/d} \sqrt{\frac{\log n}{i B}} = \biggl(\frac{s}{i}\biggr)^{1-1/d}\sqrt{\frac{i\log n}{Bs^2}}\geq \sqrt{\frac{i\log n}{Bs^2}}\geq \sqrt{\frac{1}{ns}}\geq \frac{1}{n}.
\end{align*}
Combining this with the previous inequality, we conclude
\begin{align*}
\Biggl|\frac{1}{B}\sum_{b=1}^B w^b_i \bigl(\mathrm{P}(B(x,R_{s,(i)}^b(x)))^{1/d}-\gamma_{s,i}\bigr)\Biggr|\lesssim C_{n,i}' \biggl(\frac{i}{s}\biggr)^{1/d} \sqrt{\frac{\log n}{i B}}
\end{align*}
for all $x\in \mathcal{X}$, $1\leq i\leq \overline{k}$, and $n>n_1$,
with probability $ \mathrm{P}^{Bs}$ at least $1-2d^d/n^3\geq 1-1/n^2$. This completes the proof. 
\end{proof}

\begin{proof}[of Proposition \ref{pro::I456}]
Let $\Omega_1$ denote the event defined by \eqref{equ::infrksuprk} and \eqref{equ::pbxrkxkn} in Lemma \ref{lem::Rrho}, and let $\Omega_2$ denote the event defined by \eqref{equ::lemma152} in Lemma \ref{lem::i5lem}. By applying the union bound, the event $\Omega_1 \cap \Omega_2$ holds with probability $ \mathrm{P}^{Bs}$ at least $1 - 2/n^2$ for all $n > n_1$, where $n_1 = 2d^d + 1$.

The condition
$\sum_{i=1}^s w_i^b i ^{1/d} \asymp \bigl( k^b \bigr)^{1/d}$ in $(iii)$ implies the existence of a constant $c_4'>0$ such that $\sum_{i=1}^s w_i^b (i/s)^{1/d} \geq c_4' (k^b/s)^{1/d}$ for all $b\in [B]$.
On the other hand, the condition $\sum_{i=1}^{c_n} w_i^b\lesssim (\log n)/k^b$ in $(i)$ and the bound $\underline{k} \gtrsim (\log n)^2$ together imply the existence of $n_2 \in \mathbb{N}$ such that for all $n\geq n_2$, we have $\sum_{i=1}^{c_n} w_i^b\leq c_4'/2$ and $k^b\geq c_n$ for all $b\in [B]$, where $c_n$ is the sequence specified in Lemma \ref{lem::Rrho}.
Hence, we consider the subsequent arguments under the assumptions that the event $\Omega_1 \cap \Omega_2$ holds and that $n > N_1 := \max\{n_1, n_2\}$.

\textit{Proof of Bounding $(I)$}.
Let $(IV)$ and $(V)$ be defined as follows: 
\begin{align*}
(IV): = \sum_{j=0}^{d-1} \biggl( \frac{1}{B} \sum_{b=1}^B \sum_{i=1}^s w_i^b (i/s)^{1/d} \biggr)^j \bigl(V_d^{1/d} f(x)^{1/d} R_n^B(x) \bigr)^{d-1-j} 
\,
\text{ and } 
\,
(V): = V_d R_n^B(x)^d.
\end{align*}
By the equality \eqref{equ::i4}, in order to derive the upper bound of $(I)$, it suffices to derive the upper bound of $(IV)$ and the lower bound of $(V)$.

Let us first consider $(IV)$. 
By condition \textit{(iii)}, we have $
\sum_{i=1}^s w_i^b (i/s)^{1/d} 
\asymp \bigl( k^b / s \bigr)^{1/d}
\lesssim \bigl( \overline{k} / s \bigr)^{1/d}$ for $b\in [B]$.
Consequently we get
\begin{align}\label{equ::bagsumwiin1d}
\biggl( \frac{1}{B} \sum_{b=1}^B \sum_{i=1}^s w_i^b  (i/s)^{1/d} \biggr)^j
\lesssim \bigl( \overline{k} / s \bigr)^{j/d}.
\end{align}
On the event $\Omega_1$, we have
\begin{align*}
R_s^{w,b}(x) 
& = \sum_{i=1}^s w_i^b R_{s,(i)}^b(x) 
= \sum_{i=1}^{c_n} w_i^b R_{s,(i)}^b(x) + \sum_{i=c_n+1}^s w_i^b R_{s,(i)}^b(x)
\\
& \leq R_{s,(c_n)}^b(x) + \sum_{i=c_n+1}^s w_i^b R_{s,(i)}^b(x)  
\lesssim (c_n /s)^{1/d} + \sum_{i=1}^{s} w_i^b (i/s)^{1/d} 
\lesssim \bigl( \overline{k} / s \bigr)^{1/d}
\end{align*}
for all $x\in \mathcal{X}$,
where the last inequality follows from $\sum_{i=1}^s w_i^b i^{1/d}\lesssim \overline{k}^{1/d}$ in condition $(iii)$ and $\underline{k}\gtrsim (\log n)^2$ in condition $(i)$.
Consequently we obtain 
\begin{align} \label{equ::rlambdabx}
R_n^B(x)
= \frac{1}{B} \sum_{b=1}^B R_s^{w,b}(x)
\lesssim \bigl( \overline{k} / s \bigr)^{1/d}
\end{align}
for all $x\in \mathcal{X}$.
Combining this with \eqref{equ::bagsumwiin1d} and $\|f\|_{\infty} \leq \overline{c}$ from Assumption \ref{asp::holder}, we derive
\begin{align}\label{equ::bagivupper}
(IV) 
\lesssim \sum_{j=0}^{d-1} \overline{c}^{(d-1-j)/d}\cdot \bigl( \overline{k} / s \bigr)^{j/d} \cdot \bigl( \overline{k} / s \bigr)^{(d-1-j)/d}
\lesssim \bigl( \overline{k} / s \bigr)^{(d-1)/d}.
\end{align}

Next, let us consider $(V)$. 
On the event $\Omega_1$, for any $b\in [B]$, we have
\begin{align*}
R_s^{w,b}(x)
\geq \sum_{i=c_n+1}^s w_i^b R_{s,(i)}^b(x)
& \gtrsim \sum_{i=c_n+1}^s w_i^b (i/s)^{1/d}
= \sum_{i=1}^s w_i^b (i/s)^{1/d} - \sum_{i=1}^{c_n} w_i^b  (i/s)^{1/d}
\end{align*}
for all $x\in \mathcal{X}$.
From earlier arguments in the second paragraph of this proof, we have $\sum_{i=1}^s w_i^b (i/s)^{1/d}\geq c_4' (k^b/s)^{1/d}$ and $\sum_{i=1}^{c_n} w_i^b(i/s)^{1/d}\leq (c_n/s)^{1/d} \sum_{i=1}^{c_n} w_i^b\leq c_4'(k^b/s)^{1/d}/2$ for all $n>N_1$.
Therefore, we have 
\begin{align}\label{equ::rsblower}
R_s^{w,b}(x)\gtrsim \sum_{i=1}^s w_i^b (i/s)^{1/d} - \sum_{i=1}^{c_n} w_i^b  (i/s)^{1/d} \geq c_4'\bigl( k^b / s \bigr)^{1/d}/2
\end{align}
for all $x\in \mathcal{X}$.
This implies  
\begin{align}\label{equ::rnbxlower}
R_n^B(x)
= \frac{1}{B} \sum_{b=1}^B R_s^{w,b}(x)
\gtrsim (\underline{k} / s)^{1/d}
\end{align}
for all $x\in \mathcal{X}$.
Therefore, we get
$(V) = V_d R_n^B(x)^d 
\gtrsim \underline{k}/s$.
Combining with \eqref{equ::bagivupper} and $\overline{k}\asymp \underline{k}$ in condition \textit{(ii)}, we conclude that
$(I) = (IV) / (V)
\lesssim \bigl( \overline{k} / s \bigr)^{-1/d}$.
This completes the proof of bounding $(I)$.

\textit{Proof of Bounding $(II)$}. 
Since $w_i^b=0$ for all $i>\overline{k}$ and $b\in [B]$, it follows that
\begin{align*}
\sum_{i=1}^{s}\Biggl|\frac{1}{B} \sum_{b=1}^B w^b_i \bigl(\mathrm{P}(B(x,R_{s,(i)}^b(x))^{1/d}-\gamma_{s,i}\bigr)\Biggr|  = \sum_{i=1}^{\overline{k}}\Biggl|\frac{1}{B} \sum_{b=1}^B w^b_i \bigl(\mathrm{P}(B(x,R_{s,(i)}^b(x))^{1/d}-\gamma_{s,i}\bigr)\Biggr| .
\end{align*}
By applying \eqref{equ::lemma152} on the event $\Omega_2$, for all $x \in \mathcal{X}$, we have
\begin{align*}
\sum_{i=1}^{\overline{k}}\Biggl|\frac{1}{B} \sum_{b=1}^B w^b_i \bigl(\mathrm{P}(B(x,R_{s,(i)}^b(x)))^{1/d}-\gamma_{s,i}\bigr)\Biggr|
\lesssim \sum_{i=1}^{\overline{k}} C_{n,i} \biggl(\frac{i}{s}\biggr)^{1/d} \sqrt{\frac{\log n}{i B}}.
\end{align*}
Given $\sum_{i=1}^s C_{n,i} i^{1/d-1/2} \lesssim \bigl(\overline{k}\bigr)^{1/d-1/2}$ in condition \textit{(iv)}, we obtain 
\begin{align*}
\sum_{i=1}^{s} \Biggl|\frac{1}{B} \sum_{b=1}^B w^b_i \bigl( \mathrm{P}(B(x,R_{s,(i)}^b(x)))^{1/d} -\gamma_{s,i} \bigr)\Biggr|
& \lesssim \bigl( \overline{k} / s \bigr)^{1/d} \bigl( (\log n) /  (\overline{k} B) \bigr)^{1/2}.
\end{align*}
This completes the proof of bounding $(II)$.

\textit{Proof of Bounding $(III)$}.
Let $b\in [B]$ be fixed for now. We analyze two cases separately: 
$1\leq i<c_n$ and $c_n\leq i\leq k^b$.

We begin with the case $1\leq i<c_n$. On the event $\Omega_1$, we have $R_{s,(i)}^b(x)\leq R_{s,(c_n)}^b(x)\lesssim (c_n/s)^{1/d}$ and $\mathrm{P}(B(x,R_{s,(i)}^b(x)))\leq \mathrm{P}(B(x,R_{s,(c_n)}^b(x)))\lesssim c_n/s$ for all $x\in \mathcal{X}$ by using \eqref{equ::infrksuprk} and \eqref{equ::pbxrkxkn}.
These bounds imply
\begin{align*}
\bigl| \mathrm{P}(B(x,R_{s,(i)}^b(x)))^{1/d} -  V_d^{1/d} f(x)^{1/d} R_{s,(i)}^b(x) \bigr|\lesssim (c_n/s)^{1/d}\lesssim ((\log n)/s)^{1/d}.
\end{align*}
Therefore, we have 
\begin{align*}
\sum_{i=1}^{c_n-1} w_i^b \bigl| \mathrm{P}(B(x,R_{s,(i)}^b(x)))^{1/d} -  V_d^{1/d} f(x)^{1/d} R_{s,(i)}^b(x) \bigr|
\lesssim\biggl(\frac{\log n}{s}\biggr)^{1/d} \sum_{i=1}^{c_n} w_i^b .
\end{align*}
Using $\sum_{i=1}^{c_n} w_i^b \lesssim (\log n)/k^b$ from condition \textit{(i)} and $\underline{k}\asymp \overline{k}$ from condition $(ii)$, we obtain
\begin{align}\label{equ::sumicswbi}
\sum_{i=1}^{c_n-1} w_i^b \bigl|\mathrm{P}(B(x,R_{s,(i)}^b(x)))^{1/d}- V_d^{1/d} f(x)^{1/d} R_{s,(i)}^b(x)\bigr|
\lesssim \frac{(\log n)^{1+1/d}}{s^{1/d}\overline{k}}
\end{align}
for all $x\in \mathcal{X}$.

Now, we consider the case $c_n\leq i\leq k^b$.
Using \eqref{equ::dequality} and the condition $\|f\|_{\infty}\geq \underline{c}$ from Assumption \ref{asp::holder}, we get 
\begin{align}\label{equ::iii1d}
&\bigl| \mathrm{P}(B(x,R_{s,(i)}^b(x)))^{1/d} - V_d^{1/d} f(x)^{1/d} R_{s,(i)}^b(x) \bigr|\nonumber\\
&\lesssim \frac{\bigl| \mathrm{P}(B(x,R_{s,(i)}^b(x))) - V_d f(x) R_{s,(i)}^b(x)^d \bigr|}{\sum_{j=0}^{d-1} \bigl(\mathrm{P}(B(x,R_{s,(i)}^b(x)))^{j/d} R_{s,(i)}^b(x)^{d-1-j}\bigr)},
\end{align}
for all $x\in \mathcal{X}$. 
Next, consider $x$ such that $B(x,R_{s,(k^b)}^b(x))\subset [0,1]^d$ for all $b\in [B]$.
Using the Lipschitz smoothness from Assumption \ref{asp::holder}, we obtain
\begin{align*}
& \bigl| \mathrm{P}(B(x,R_{s,(i)}^b(x))) - V_d f(x) R_{s,(i)}^b(x)^d \bigr| \\
& = \biggl| \int_{B(x,R_{s,(i)}^b(x))} f(y) \, dy  - \int_{B(x,R_{s,(i)}^b(x))} f(x) \, dy \biggr|
\leq \int_{B(x,R_{s,(i)}^b(x))} |f(y) - f(x)| \, dy\\
& \leq c_L \int_{B(x,R_{s,(i)}^b(x))} \|y - x\|_2 \, dy
\lesssim R_{s,(i)}^b(x)^{d+1}.
\end{align*}
On the event $\Omega_1$, we have $R_{s,(i)}^b(x)\lesssim (i/s)^{1/d}$ for $c_n\leq i\leq k^b$.
This implies
\begin{align}\label{equ::pbxrsix}
\bigl| \mathrm{P}(B(x,R_{s,(i)}^b(x))) - V_d f(x) R_{s,(i)}^b(x)^d \bigr| \lesssim (i/s)^{1+1/d}.
\end{align}
Moreover, using $\|f\|_{\infty}\geq \underline{c}$ in Assumption \ref{asp::holder} and $R_{s,(i)}^b(x)\asymp (i/s)^{1/d}$ on the event $\Omega_1$, we have $\mathrm{P}(B(x,R_{s,(i)}^b(x)))\gtrsim i/s$ for $c_n\leq i\leq k^b$. 
Consequently, we obtain 
\begin{align*}
\sum_{j=0}^{d-1} (i/s)^{j/d} \bigl(R_{s,(i)}^b(x)\bigr)^{d-1-j}
\gtrsim \sum_{j=0}^{d-1} (i/s)^{j/d}\cdot(i/s)^{(d-1-j)/d}
\gtrsim (i/s)^{(d-1)/d}.
\end{align*}
This together with \eqref{equ::iii1d} and \eqref{equ::pbxrsix} implies 
\begin{align}\label{equ::pbxrsix1d}
\bigl|\mathrm{P}(B(x,R_{s,(i)}^b(x)))^{1/d}-V_d^{1/d} f(x)^{1/d} R_{s,(i)}^b(x)\bigr|
\lesssim (i/s)^{2/d}
\end{align}
for $c_n\leq i\leq k^b$.
Therefore, using $\sum_{i=1}^s w_i^b i^{1/d} \asymp \bigl(k^b \bigr)^{1/d}$ in condition  $(iii)$, we have 
\begin{align*}
& \sum_{i=c_n}^{k^b} w_i^b \bigl| (\mathrm{P}(B(x,R_{s,(i)}^b(x))))^{1/d}-V_d^{1/d} f(x)^{1/d} R_{s,(i)}^b(x) \bigr|\\
& \lesssim \sum_{i=c_n}^{k^b} w_i^b (i/s)^{2/d}\lesssim (\overline{k}/s)^{1/d} \sum_{i=1}^{s} w_i^b (i/s)^{1/d}\lesssim (\overline{k}/s)^{1/d} \cdot (k^b/s)^{1/d} \leq (\overline{k}/s)^{2/d}.
\end{align*}
Combining this with \eqref{equ::sumicswbi}, we obtain
\begin{align*}
\sum_{i=1}^{k^b} w_i^b \bigl| \mathrm{P}(B(x,R_{s,(i)}^b(x)))^{1/d}-V_d^{1/d} f(x)^{1/d} R_{s,(i)}^b(x) \bigr|\lesssim \frac{(\log n)^{1+1/d}}{s^{1/d}\overline{k}}+(\overline{k}/s)^{2/d}.
\end{align*}
for all $x\in \mathcal{X}$ and a fixed $b\in [B]$. Averaging over $b\in [B]$, we have 
\begin{align*}
\frac{1}{B} \sum_{b=1}^B \sum_{i=1}^{k^b} w_i^b \bigl| \mathrm{P}(B(x,R_{s,(i)}^b(x)))^{1/d} - V_d^{1/d} f(x)^{1/d} R_{s,(i)}^b(x) \bigr|
\lesssim \frac{(\log n)^{1+1/d}}{s^{1/d}\overline{k}}+(\overline{k}/s)^{2/d}.
\end{align*}
This completes the proof of bounding $(III)$, and hence the proof of Proposition \ref{pro::I456}.
\end{proof}

\subsubsection{Proofs Related to Section \ref{sec::datadrivennnad}}
\label{sec::proofssurrogaterisk}

To prove Proposition \ref{pro::I456sur}, we require the following lemma, which establishes an upper bound on the number of instances near the boundary.

\begin{lemma}\label{lem::samplenear}
let $\{D_s^b\}_{b=1}^B$ be $B$ disjoint subsets of size $s$ randomly drawn from the data $D_n$, with $D_s^b=\{X_1^b,\ldots,X_s^b\}$ for $b\in [B]$.
Define $\Delta_n := [c_3'(\overline{k}/s)^{1/d}, 1-c_3'(\overline{k}/s)^{1/d}]^d$, where $c_3'$ is the constant from Lemma \ref{lem::Rrho} and $\overline{k}$ is as defined in Proposition \ref{pro::I456sur}.
Assume that the condition $\underline{k}\gtrsim (\log n)^2$ holds.
Define $\mathcal{I}^b:=\{i\in [s]: X_i^b\in \Delta_n\}$ and $n^b:=|\mathcal{I}^b|$.
Then, for all $n>N_1$ with $N_1$ specified in Proposition \ref{pro::I456}, there holds $1 - n^b/s \lesssim \bigl(\overline{k}/s\bigr)^{1/d}$ for all $b\in [B]$ with probability $ \mathrm{P}^{Bs}$ at least $1-1/n^2$.
\end{lemma}

\begin{proof}[of Lemma \ref{lem::samplenear}]
Let $\Delta_n^c := [0,1]^d \setminus \Delta_n$.
Let $b\in [B]$ be fixed.
For $\ell \in [s]$, we define $\xi_{\ell,b} := \eins_{\Delta_n^c}(X_{\ell}^b)-\mathrm{P}(X\in \Delta_n^c)$.
Then we have $\mathbb{E}_{\mathrm{P}}\xi_{\ell,b}=0$ and $\mathbb{E}_{\mathrm{P}} [\xi_{\ell,b}]^2\leq \mathrm{P}(X\in \Delta_n^c)$. Given $\|f\|_{\infty}\leq \overline{c}$ in Assumption \ref{asp::holder}, we have $\mathrm{P}(X\in \Delta_n^c)\leq  \overline{c}\mu(\Delta_n^c)\lesssim (\overline{k}/s)^{1/d}$. Therefore, we have $\mathbb{E}_{\mathrm{P}} [\xi_{\ell,b}]^2\lesssim (\overline{k}/s)^{1/d}$. 
Applying Bernstein's inequality in Lemma \ref{lem::bernstein}, we obtain 
\begin{align*}
\frac{1}{s} \sum_{\ell=1}^s \eins_{\Delta_n^c}(X_\ell^b) - \mathrm{P}(X\in \Delta_n^c)
\lesssim \sqrt{2(\overline{k}/s)^{1/d}\tau/s}+2\tau/(3s)
\end{align*}
with probability $\mathrm{P}^s$ at least $1-e^{-\tau}$. Setting $\tau := 3 \log n$ and using $\mathrm{P}(X\in \Delta_n^c)\lesssim (\overline{k}/s)^{1/d}$, we obtain 
\begin{align*}
1-n^b/s=\frac{1}{s} \sum_{\ell=1}^s \eins_{\Delta_n^c}(X_\ell^b)\lesssim \bigl(\overline{k}/s\bigr)^{1/d}+ \sqrt{(\overline{k}/s)^{1/d}(\log n)/s}+(\log n)/s.
\end{align*}
with probability $\mathrm{P}^s$ at least $1-1/n^3$.
By the definition of $N_1$ in Proposition \ref{pro::I456}, we have
$\underline{k}\geq c_n$ for all $n>N_1$, where $c_n$ is specified in Lemma \ref{lem::Rrho}.
Therefore, we obtain $
(\overline{k}/s)^{1/d}\geq \overline{k}/s \geq \underline{k}/s \geq c_n/s$.
This yields 
\begin{align*}
1-n^b/s\lesssim 
\bigl(\overline{k}/s\bigr)^{1/d}+ \sqrt{(\overline{k}/s)^{1/d}(\log n)/s}+(\log n)/s\lesssim (\overline{k}/s)^{1/d}.
\end{align*}
with probability $\mathrm{P}^s$ at least $1-1/n^3$.
Using the union bound, this inequality holds for all $b \in [B]$ with probability  $ \mathrm{P}^{Bs}$ at least $1-1/n^2$. 
This completes the proof.
\end{proof}

\begin{proof}[of Proposition \ref{pro::I456sur}]
Let $\Omega_1$ denote the event defined by \eqref{equ::infrksuprk} and \eqref{equ::pbxrkxkn} in Lemma \ref{lem::Rrho}.
Furthermore, let 
$\Omega_3$ be the event defined by the inequality for the upper bound of $(I)$, $(II)$, and $(III)$ in Proposition \ref{pro::I456}, and let $\Omega_4$ be the event defined by the inequality for the upper bound of $1 - n^b/s$ in Lemma \ref{lem::samplenear}.
By applying the union bound argument on Lemma \ref{lem::Rrho}, \ref{lem::samplenear}, and Proposition \ref{pro::I456}, the event $\Omega_1\cap \Omega_3\cap\Omega_4$ holds with probability $\mathrm{P}^{Bs}$ at least $1-1/n^2-1/n^2-2/n^2\geq 1-4/n^2$ for all $n>N_1^*:=N_1$, where $N_1$ is specified in Proposition \ref{pro::I456}.
The subsequent arguments assume that $\Omega_1\cap \Omega_3\cap \Omega_4$ holds and $n>N_1^*$.

For $X_i^b$ satisfying $B(X_i^b,R_{s,(k^{b'})}^{b'}(X_i^b))\subset [0,1]^d$ for all $b'\in [B]$, using the bound of $(I)$, $(II)$, and $(III)$ on the event $\Omega_3$ and \eqref{equ::flambdabxerror}, we get
\begin{align}\label{equ::lxibfnb}
L(X_i^b, f_n^B) =
\bigl|f_n^B(X_i^b)-f(X_i^b)\bigr|
& \lesssim (I) \cdot (II) + (I) \cdot (III)\nonumber \\
& \lesssim ((\log n)/\overline{k})^{1+1/d} + \bigl((\log n)/ (\overline{k} B)\bigr)^{1/2} + (\overline{k}/s)^{1/d}.
\end{align}
The condition $B \lesssim (\overline{k}/(\log n))^{1+2/d}$ implies that 
\begin{align}\label{equ::lognk11d}
((\log n)/\overline{k})^{1+1/d}\lesssim   \bigl((\log n)/ (\overline{k} B)\bigr)^{1/2}.
\end{align}
The conditions $\|w^b\|_2\gtrsim \bigl(k^b\bigr)^{-1/2}$ for $b\in [B]$ and $\underline{k}\asymp \overline{k}$ yield 
that $\|w^b\|_2\gtrsim (\underline{k})^{-1/2}\gtrsim (\overline{k})^{-1/2}$.
Combining this with the condition $\log s \asymp \log n$ in $(ii)$, we obtain 
\begin{align}\label{equ::logskb}
\bigl((\log n)/ (\overline{k} B)\bigr)^{1/2}\lesssim \sqrt{(\log s)/B}\cdot \|w^b\|_2.
\end{align}
By \eqref{equ::rsblower} in the proof of Proposition \ref{pro::I456} and the condition $\underline{k}\asymp \overline{k}$ in $(ii)$, we obtain that for all $n>N_1$, 
\begin{align*}
(\overline{k}/s)^{1/d}\lesssim \bigl( \underline{k} / s \bigr)^{1/d}\lesssim R_s^{w,b}(X_i^b).
\end{align*}
on the event $\Omega_1$.
Combining this with \eqref{equ::lxibfnb}, \eqref{equ::lognk11d}, and \eqref{equ::logskb}, it follows that
\begin{align*}
\bigl|f_n^B(X_i^b)-f(X_i^b)\bigr|\lesssim \sqrt{(\log s)/B} \cdot \|w^b\|_2 + R^{w,b}_s(X_i^b).
\end{align*}
This completes the proof of \eqref{equ:bagupperbound}.

Next, let us turn to the proof of \eqref{equ::bagerub}.
Let $\Delta_n$, $\mathcal{I}^b$, and $n^b$ be defined by Lemma \ref{lem::samplenear}.
Then, it is clear to see that 
\begin{align}\label{mathcalrldsfws}
\mathcal{R}_{L,D_s^B}(f_n^B)
& = \frac{1}{B} \sum_{b=1}^B \frac{1}{s} \sum_{i=1}^s \bigl| f_n^B(X_i^b) - f(X_i^b) \bigr|
\nonumber\\
& = \frac{1}{B} \sum_{b=1}^B \frac{1}{s} \biggl( \sum_{i \in \mathcal{I}^b} \bigl| f_n^B(X_i^b) - f(X_i^b) \bigr| + \sum_{i \in [s] \setminus \mathcal{I}^b} \bigl| f_n^B(X_i^b) - f(X_i^b) \bigr| \biggr).
\end{align}
Let $b\in [B]$ be fixed for now. 
We consider the first term on the right-hand side of \eqref{mathcalrldsfws}. 
For any $i\in \mathcal{I}^b$, we have $X_i^b\in \Delta_n$.
Consequently, for any $b'\in [B]$ and $y\in B(X_i^b,R_{s,(k^{b'})}^{b'}(X_i^b))$, we obtain
$d(y,\mathbb{R}^d\setminus[0,1]^d)\geq c_3'(\overline{k}/s)^{1/d}-R_{s,(k^{b'})}^{b'}(X_i^b) \geq 0$ on the event $\Omega_1$.
This implies that $y\in [0,1]^d$ and thus
$B(X_i^b,R_{s,(k^{b'})}^{b'}(X_i^b))\subset [0,1]^d$ for all $i\in \mathcal{I}^b$ and $b'\in [B]$.
Therefore, by \eqref{equ:bagupperbound}, we have 
\begin{align}\label{equ::1bb11s}
\frac{1}{s}\sum_{i\in \mathcal{I}^b}\bigl| f_n^B(X_i^b) - f(X_i^b) \bigr|
& \lesssim \frac{1}{s}\sum_{i\in \mathcal{I}^b} \biggl(\sqrt{(\log s)/B} \cdot \|w^b\|_2 + R^{w,b}_s(X_i^b)\biggr)\nonumber \\
& \lesssim \sqrt{(\log s)/B} \cdot \|w^b\|_2+\frac{1}{s}\sum_{i=1}^s  R^{w,b}_s(X_i^b).
\end{align}
Now, we consider the second term on the right-hand side of \eqref{mathcalrldsfws}.
The condition $\sum_{i=1}^s i^{1/d}w_i^b \asymp \bigl(k^b \bigr)^{1/d}$, $b \in [B]$, together with \eqref{equ::rnbxlower}
in the proof of Proposition \ref{pro::I456} implies that for all $x\in [0,1]^d$, there holds
\begin{align*}
f_n^B(x)
= \frac{1}{V_d R_n^B(x)^d}
\biggl( \frac{1}{B} \sum_{b=1}^B \sum_{i=1}^s w_i^b  (i/s)^{1/d} \biggr)^d
\lesssim \frac{\overline{k} / s}{R_n^B(x)^d} 
\lesssim 1
\end{align*}
on the event $\Omega_1$.
Combining this with $\|f\|_{\infty}\leq \overline{c}$ in Assumption \ref{asp::holder}, on the event $\Omega_3$, we get
\begin{align*}
\sum_{i\in [s]\setminus\mathcal{I}^b} \bigl| f_n^B(X_i^b) - f(X_i^b) \bigr|\lesssim \#\{i\in [s]: X_i^b\in \Delta_n^c\}= s - n^b \lesssim s(k^b/s)^{1/d}\lesssim s(\overline{k}/s)^{1/d}.
\end{align*} 
By \eqref{equ::rsblower} in the proof of Proposition \ref{pro::I456}, we have $(\overline{k}/s)^{1/d}
\lesssim  R_s^{w,b}(X_i^b)$ for all 
$i \in [s]$ on the event $\Omega_1$.
Consequently, we obtain
\begin{align*}
\frac{1}{s}\sum_{i\in [s]\setminus\mathcal{I}^b} \bigl| f_n^B(X_i^b) - f(X_i^b) \bigr|\lesssim (\overline{k}/s)^{1/d} \lesssim  \frac{1}{s}\sum_{i=1}^s R_s^{w,b}(X_i^b).
\end{align*}
Combining this with \eqref{mathcalrldsfws} and \eqref{equ::1bb11s}, we obtain
\begin{align*}
\mathcal{R}_{L,D_s^B}(f_n^B)
\lesssim \mathcal{R}_{L,D_s^B}^{\mathrm{sur}}(f_n^B)&:=
\frac{1}{B}\sum_{b=1}^B\biggl(\sqrt{(\log s)/B} \cdot \|w^b\|_2 + \frac{1}{s}\sum_{i=1}^s R^{w,b}_s(X_i^b)\biggr).
\end{align*}
Since $\frac{1}{s}\sum_{i=1}^s R^{w,b}_s(X_i^b) = \sum_{i=1}^s w_i^b \overline{R}_{s,(i)}^b$, we obtain the desired assertion. 
\end{proof}

\subsection{Proofs Related to the Convergence Rates of BRDDE}\label{sec::proofsbrdde}

We present the proofs related to the results concerning the surrogate risk minimization in Section \ref{sec::proofauc}.
Additionally, the proof of Theorem \ref{pro::rateBWDDE} are provided in Section \ref{sec::proofstheorem4}.

\subsubsection{Proofs Related to Section \ref{sec::analysissr}}\label{sec::proofauc}

The following lemma, which will be used several times in the sequel, supplies the key to the proof of Proposition \ref{lem::order:k:w:lambda}.

\begin{lemma}\label{lem:existenceofsolution}
Let $\overline{R}_{s,(i)}^b$ be defined as in Proposition \ref{pro::I456sur}, and let $w^{b,*}$ and $k^{b,*}$ be defined as in Proposition \ref{lem::order:k:w:lambda}.
Then, for each $b\in [B]$, there exists a constant $\mu^b>0$ such that $ \overline{R}_{s,( k^{b,*})}^b <\mu^b\leq  \overline{R}_{s,( k^{b,*}+1)}^b$. 
(For simplicity, we set $\overline{R}_{s,(i)}^b = \infty$ for all $i > s$.) Moreover, the optimal weights satisfy
\begin{align} \label{equ:structureofweights}
w_i^{b,*}=\frac{\mu^b- \overline{R}_{s,(i)}^b }{\sum_{i=1}^{k^{b,*}}\big(\mu^b- \overline{R}_{s,(i)}^b\big)}, \qquad1\leq i\leq  k^{b,*}.
\end{align}
Moreover, the weights $w_i^{b,*}$ are bounded as follows:
\begin{align}\label{equ::wixbound}
\frac{\overline{R}_{s,( k^{b,*})}^b-\overline{R}_{s,(i)}^b}{\sum_{i=1}^{k^{b,*}}\big(\overline{R}_{s,( k^{b,*}+1)}^b-\overline{R}_{s,(i)}^b\big)}
\leq w_i^{b,*} 
\leq \frac{\overline{R}_{s,( k^{b,*}+1)}^b-\overline{R}_{s,(i)}^b}{\sum_{i=1}^{k^{b,*}}\big(\overline{R}_{s,( k^{b,*})}^b-\overline{R}_{s,(i)}^b\big)}, \quad 1\leq i\leq  k^{b,*}.
\end{align}
Additionally, the following inequality holds:
\begin{align}\label{equ::sumlambdax}
\sum^{k^{b,*}}_{i=1}\big(\overline{R}_{s,( k^{b,*})}^b-\overline{R}_{s,(i)}^b\big)^2< \frac{\log s}{B}\leq \sum^{k^{b,*}}_{i=1}\big(\overline{R}_{s,( k^{b,*}+1)}^b-\overline{R}_{s,(i)}^b\big)^2.
\end{align}
\end{lemma}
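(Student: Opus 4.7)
The plan is to exploit the KKT analysis already carried out in Section~\ref{sec::datadrivennnad} and push it to its quantitative consequences. The objective $F(w) := \sqrt{\log s/B}\cdot\|w\|_2 + \sum_i w_i \overline{R}_{s,(i)}^b$ is convex on the simplex $\mathcal{W}_s$, and for any $w \in \mathcal{W}_s$ we have $\|w\|_2 \geq 1/\sqrt{s} > 0$, so the $\ell_2$ norm is differentiable at every feasible point. Together with strict convexity on the relative interior, this guarantees an optimal primal-dual pair $(w^{b,*},\mu^b,\nu^b)$ satisfying the stationarity equation \eqref{equ::logswi} rewritten as
\begin{align*}
    \sqrt{\log s/B}\cdot w_i^{b,*}/\|w^{b,*}\|_2 = \mu^b + \nu_i^b - \overline{R}_{s,(i)}^b.
\end{align*}
I will first observe that $i \mapsto \overline{R}_{s,(i)}^b$ is non-decreasing (an average of non-decreasing sequences is non-decreasing), so complementary slackness $\nu_i^b w_i^{b,*} = 0$ forces the active set to be an initial segment: $w_i^{b,*} > 0$ iff $\mu^b > \overline{R}_{s,(i)}^b$. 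This identifies the active set with $\{1,\ldots,k^{b,*}\}$, yielding $\overline{R}_{s,(k^{b,*})}^b \leq \mu^b \leq \overline{R}_{s,(k^{b,*}+1)}^b$ (the lower bound from $w_{k^{b,*}}^{b,*} > 0$, the upper bound from $w_{k^{b,*}+1}^{b,*} = 0$ and $\nu_{k^{b,*}+1}^b \geq 0$).

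Next, on the active indices the stationarity equation gives $w_i^{b,*} \propto \mu^b - \overline{R}_{s,(i)}^b$, and enforcing the simplex constraint $\sum_{i=1}^{k^{b,*}} w_i^{b,*} = 1$ yields exactly \eqref{equ:structureofweights}. For the two-sided estimate \eqref{equ::wixbound}, I will plug the sandwich $\overline{R}_{s,(k^{b,*})}^b \leq \mu^b \leq \overline{R}_{s,(k^{b,*}+1)}^b$ separately into the numerator and denominator of \eqref{equ:structureofweights}: the upper bound on $w_i^{b,*}$ uses the maximal value of $\mu^b$ in the numerator and the minimal value in the denominator, and vice versa for the lower bound. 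Since every $\overline{R}_{s,(k^{b,*})}^b - \overline{R}_{s,(i)}^b$ is nonnegative for $i \leq k^{b,*}$, these substitutions preserve the direction of inequality.

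Finally, for \eqref{equ::sumlambdax} I will compute $\|w^{b,*}\|_2^2$ directly from the stationarity equation: squaring $w_i^{b,*} = (\|w^{b,*}\|_2/\sqrt{\log s/B})\,(\mu^b - \overline{R}_{s,(i)}^b)$ on the active set and summing gives
\begin{align*}
    \|w^{b,*}\|_2^2 = \frac{\|w^{b,*}\|_2^2}{\log s/B}\sum_{i=1}^{k^{b,*}}\bigl(\mu^b - \overline{R}_{s,(i)}^b\bigr)^2,
\end{align*}
which, upon cancelling $\|w^{b,*}\|_2^2 > 0$, forces the key identity $\sum_{i=1}^{k^{b,*}}(\mu^b - \overline{R}_{s,(i)}^b)^2 = \log s/B$. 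Sandwiching $\mu^b$ once more then yields \eqref{equ::sumlambdax}.

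The main obstacle is the clean identification of the active set as an initial segment and the justification that the sandwich substitution into \eqref{equ:structureofweights} respects signs; everything else is routine KKT bookkeeping. The monotonicity of $\overline{R}_{s,(i)}^b$ in $i$ is the key structural fact that makes the argument work, and I will state and use it explicitly at the outset.
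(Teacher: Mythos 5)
Your proposal is correct and follows essentially the same route as the paper, with one difference in how the structural part is justified. The paper obtains the representation \eqref{equ:structureofweights} together with the sandwich $\overline{R}_{s,(k^{b,*})}^b \leq \mu^b < \overline{R}_{s,(k^{b,*}+1)}^b$ by citing Theorem 3.1 of \cite{anava2016k}, and then, exactly as you do, reads \eqref{equ::wixbound} off by substituting the two ends of the sandwich into the numerator and denominator of \eqref{equ:structureofweights}, and obtains \eqref{equ::sumlambdax} from the identity $\sum_{i=1}^{k^{b,*}}\bigl(\mu^b-\overline{R}_{s,(i)}^b\bigr)^2=\log s/B$, which it attributes to \eqref{equ::logswi} without further comment. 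What you do differently is unpack that citation: you rederive the active-set structure directly from the KKT system \eqref{equ::logswi}, using the monotonicity of $i\mapsto\overline{R}_{s,(i)}^b$ (an average of nondecreasing nearest-neighbor distances) to show the support is an initial segment, complementary slackness at the indices $k^{b,*}$ and $k^{b,*}+1$ for the two ends of the sandwich, and you spell out the squaring-and-summing computation with the cancellation of $\|w^{b,*}\|_2^2$ that produces the identity behind \eqref{equ::sumlambdax}. Your observation that $\|w\|_2\geq s^{-1/2}$ on the simplex, so the $\ell_2$ term is differentiable at every feasible point, is the right justification for invoking \eqref{equ::logswi}; the appeal to strict convexity is not actually needed (compactness of the simplex gives existence of a minimizer and affine constraints give the KKT conditions), but it does no harm. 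The net effect is a self-contained variant of the paper's argument rather than a genuinely different one; the only shared blemish is that the upper bound in \eqref{equ::wixbound} is vacuous if the denominator $\sum_{i=1}^{k^{b,*}}\bigl(\overline{R}_{s,(k^{b,*})}^b-\overline{R}_{s,(i)}^b\bigr)$ vanishes (and the sandwich argument implicitly needs $k^{b,*}\leq s-1$), degeneracies that are present in the paper's statement as well.
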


\begin{proof}[of Lemma \ref{lem:existenceofsolution}]
By Theorem 3.1 in \cite{anava2016k}, for each $b\in [B]$, there exists a constant $\mu^b > 0$ such that the weights satisfy the formula in \eqref{equ:structureofweights}. 
This inequality, together with \eqref{equ:structureofweights}, implies that the bound in \eqref{equ::wixbound} holds.
Moreover, by \eqref{equ::logswi}, we have 
\begin{align*}
\sqrt{(\log s)/B}\cdot
{w_i^{b,*}}/{\|w^{b,*}\|_2}=\mu^b+\nu_i^b-\overline{R}_{s,(i)}^b.
\end{align*}
For $1\leq i\leq k^{b,*}$, we have $w_i^{b,*}>0$, this implies that $\nu_i^b=0$ by the KKT condition. Therefore, we have
\begin{align*}
\sqrt{(\log s)/B} \cdot w_i^{b,*}/\|w^{b,*}\|_2 = \mu^b-\overline{R}_{s,(i)}^b, \quad 1\leq i\leq k^{b,*}.
\end{align*}
Now, summing over $i$ from $1$ to $k^{b,*}$, we get
\begin{align}\label{equ::mubequality}
\sum_{i=1}^{k^{b,*}}\big(\mu^b- \overline{R}_{s,(i)}^b\big)^2 = \frac{\log s}{B} \sum_{i=1}^{k^{b,*}} (w_i^{b,*})^2/\|w^{b,*}\|_2^2 =\frac{\log s}{B}.
\end{align}
This, along with the inequality $\overline{R}_{s,( k^{b,*})}^b< \mu^b\leq  \overline{R}_{s,( k^{b,*}+1)}^b$, establishes \eqref{equ::sumlambdax}.
\end{proof}

\begin{proof}[of Proposition \ref{lem::order:k:w:lambda}]
Let $c_2'$ and $c_3'$ be the constants defined in \eqref{equ::infrksuprk}, and let $\{c_n\}$ be the sequence from Lemma \ref{lem::Rrho}.
Define $c_4':=(c_2'/c_3')^d<1$.
Since $s\asymp(n/\log n)^{(d+1)/(d+2)}$ and $B \asymp n^{1/(d+2)}(\log n)^{(d+1)/(d+2)}$, there exists constants $c_5'$, $c_6'$, and $c_7'$ such that 
\begin{align}\label{equ::choicesb1}
c_5'\leq \frac{s}{ (n/\log n)^{(d+1)/(d+2)}}\leq c_6'\text{ and } B\geq c_7'n^{1/(d+2)}(\log n)^{(d+1)/(d+2)}.
\end{align}
The choice of $s$ implies that $\log s\gtrsim \log(n)-\log(\log n)\gtrsim \log n$. Consequently, we have 
\begin{align*}
s^{2/d}c_n^{-1-2/d}\log s\gtrsim n^{\frac{2(d+1)}{d(d+2)}}(\log n)^{-\frac{4d+6}{d(d+2)}}.
\end{align*}
Using the order of $B$, we know that there exists $n_3\in \mathbb{N}$ such that for all $n> n_3$, there holds 
\begin{align}\label{equ::BS}
s^{2/d}c_n^{-1-2/d}(\log s)/(3^{2/d+1}(c_3')^2c_4'^{-1-2/d})>B.
\end{align}
Furthermore, from the order of $s$ and $B$, we see that there exists $n_4\in \mathbb{N}$ such that for all $n>n_4$, the following three inequalities hold:
\begin{align}
&c_5'(n/\log n)^{(d+1)/(d+2)} \geq c_1',\label{equ::nchoice1}\\
& c_7'n^{1/(d+2)}(\log n)^{(d+1)/(d+2)} \geq 2(d^2+4)(\log n)/3, \label{equ::nchoice2}\\
& c_5'(n/\log n)^{d/(d+2)} \geq c_8':=\frac{4(c_6')^{\frac{2}{d+2}}}{c_4'((c_3')^2 c_7')^{\frac{d}{2+d}}}\biggl(\int_{1/2}^1 (1-t^{1/d})^2 \, dt \biggr)^{-\frac{d}{d+2}},\label{equ::nchoice3}
\end{align}
where $c_1'$ is the constant specified in Lemma \ref{lem::Rrho}.
As we will demonstrate in the second part of this argument, the inequalities \eqref{equ::nchoice1} and \eqref{equ::nchoice3}, together with the condition $n>\lceil (c_6')^{2+d} \rceil$, guarantee that the lower bound for $s$ is satisfied. Similarly, inequality \eqref{equ::nchoice2} ensures that the lower bound for $B$ holds.

Additionally, by the divergence of the sequence $c_n$, there exists $n_5\in \mathbb{N}$ such that for all $n\geq n_5$, we have 
\begin{align}\label{equ::n4}
\frac{1}{c_n}\leq \frac{1}{2}\int_{1/2}^1 t^{1/d}(1-t^{1/d})\, dt.
\end{align}
Let $\Omega_1$ denote the event defined by \eqref{equ::infrksuprk} and \eqref{equ::pbxrkxkn} in Lemma \ref{lem::Rrho}.
By Lemma \ref{lem::Rrho}, the event $\Omega_1$ holds with probability $\mathrm{P}^{Bs}$ at least $1-1/n^2$. 
For the remainder of the proof, we assume that  $\Omega_1$ holds and that $n>N_2:=\max\{n_3,n_4,n_5, \lceil (c_6')^{2+d}\rceil\}$.
We proceed with the proof of statement 1.

\vspace{+2mm}
\noindent
\textit{Verification of Condition $(i)$:}
We first show that $k^{b,*}\geq \lceil 2c_n/c_4'\rceil$ for all $b\in [B]$ by contradiction.
Suppose that $k^{b,*}< \lceil 2c_n/c_4'\rceil$  for some $b\in [B]$.
Since $c_4'<1$, it follows that $c_n/c_4'>c_n>2$. Therefore, we have $k^{b,*}+1<\lceil 2c_n/c_4'\rceil+1\leq 2c_n/c_4'+2\leq 3c_n/c_4'$. This leads to the bound
$
\overline{R}_{s,(k^{b,*}+1)}^b
\leq c_3'((k^{b,*}+1)/s)^{1/d}\leq 3^{1/d}c_3'c_4'^{-1/d}(c_n/s)^{1/d}$ on the event $\Omega_1$.
Therefore, we obtain
\begin{align*}
\sum^{k^{b,*}}_{i=1}\big(\overline{R}_{s,(k^{b,*}+1)}^b-\overline{R}_{s, (i)}^b\big)^2 
\leq k^{b,*} \bigl(\overline{R}_{s,(k^{b,*}+1)}^b\bigr)^2
\leq 3^{2/d+1}(c_3')^2c_4'^{-1-2/d} c_n^{1+2/d} s^{-2/d}.
\end{align*}
Combining this with \eqref{equ::BS}, we conclude that 
$\sum^{k^{b,*}}_{i=1}\big(\overline{R}_{s,(k^{b,*}+1)}^b-\overline{R}_{s, (i)}^b\big)^2 <(\log s)/B$.
However, by \eqref{equ::sumlambdax} in Lemma \ref{lem:existenceofsolution}, we know that
$(\log s)/B 
\leq \sum^{k^{b,*}}_{i=1}\big(\overline{R}_{s,(k^{b,*}+1)}^b-\overline{R}_{s, (i)}^b\big)^2$.
This leads to a contradiction, implying that 
\begin{align}\label{equ::kbcn}
k^{b,*}\geq \lceil 2c_n/c_4'\rceil > c_n, \quad \forall b\in [B].
\end{align}
Let $b\in [B]$ be fixed.
By Lemma \ref{lem:existenceofsolution}, we have
\begin{align}\label{equ::wibstarin}
\sum_{i=1}^{c_n} w_i^{b,*}\leq
\frac{\sum_{i=1}^{c_n}\bigl(\overline{R}_{s,(k^{b,*}+1)}^b-\overline{R}_{s,(i)}^b\bigr)}{\sum_{i=1}^{k^{b,*}}\big(\overline{R}^b_{s,(k^{b,*})}-\overline{R}_{s,(i)}^b\big)}.
\end{align}
On the event $\Omega_1$, we have the following upper bound for the numerator:
\begin{align}\label{equ::upper}
\sum_{i=1}^{c_n}\bigl(\overline{R}_{s,(k^{b,*}+1)}^b-\overline{R}_{s,(i)}^b\bigr)\leq c_n \overline{R}_{s,(k^{b,*}+1)}^b\lesssim c_n(k^{b,*}/s)^{1/d}.
\end{align}
Next, we establish the lower bound for $\sum_{i=1}^{k^{b,*}}\big(\overline{R}^b_{s,(k^{b,*})}-\overline{R}_{s,(i)}^b\big)$. 
Since $k^{b,*}>  c_n$ by \eqref{equ::kbcn}, we have $\overline{R}_{s,(k^{b,*})}^b\geq c_2' \big(k^{b,*}/s\big)^{1/d} = c_3'\bigl(c_4'k^{b,*}/s\bigr)^{1/d}\geq c_3'\bigl(\lfloor c_4'k^{b,*}\rfloor/s\bigr)^{1/d} $ and $\overline{R}_{s,(i)}^b\leq c_3'(i/s)^{1/d}$ for $c_n\leq i\leq s$ on the event $\Omega_1$.
Therefore, we obtain
\begin{align}\label{equ::rkxminusrix}
\overline{R}_{s,(k^{b,*})}^b-\overline{R}_{s,(i)}^b\geq c_3'\big(\lfloor c_4'k^{b,*}\rfloor/s\big)^{1/d}-c_3'(i/s)^{1/d}.
\end{align}
for $c_n\leq i\leq s$. Since $\lfloor c_4'k^{b,*}\rfloor\geq 2c_n$ by \eqref{equ::kbcn}, we obtain
\begin{align*}
&\sum_{i=1}^{k^{b,*}} \big(\overline{R}_{s,(k^{b,*})}^b-\overline{R}_{s,(i)}^b\big)
\geq 	\sum_{i=c_n}^{\lfloor c_4'k^{b,*}\rfloor } \big(\overline{R}_{s,(k^{b,*})}^b-\overline{R}_{s,(i)}^b\big) \gtrsim	\sum_{i=c_n}^{\lfloor c_4'k^{b,*}\rfloor } \biggl(\biggl(\frac{\lfloor c_4'k^{b,*}\rfloor}{s}\biggr)^{1/d}-\biggl(\frac{i}{s}\biggr)^{1/d}\bigg) \nonumber \\
& \gtrsim
\frac{(k^{b,*})^{1/d+1}}{s^{1/d}}  \cdot \frac{1}{\lfloor c_4'k^{b,*}\rfloor}\sum_{i=c_n}^{\lfloor c_4'k^{b,*}\rfloor }\bigg(1-\bigg(\frac{i}{\lfloor c_4'k^{b,*}\rfloor}\bigg)^{1/d}\bigg).
\end{align*}
Since $g_1(t):=1-t^{1/d}$ is a monotonically decreasing function for $0\leq t\leq 1$, 
we have
\begin{align}\label{equ::sumicdnkxlower}
\sum_{i=1}^{k^{b,*}} \big(\overline{R}_{s,(k^{b,*})}^b-\overline{R}_{s,(i)}^b\big)
&\gtrsim \frac{(k^{b,*})^{1/d+1}}{s^{1/d}}\int_{c_n/\lfloor c_4'k^{b,*}\rfloor}^1 g_1(t)\, dt\nonumber \\
&\gtrsim \frac{(k^{b,*})^{1/d+1}}{s^{1/d}} \int_{1/2}^1 g_1(t)\, dt\gtrsim  (k^{b,*})^{1/d+1}s^{-1/d}.
\end{align}
Combining this with the upper bound in \eqref{equ::upper} and inequality \eqref{equ::wibstarin}, we get
$\sum_{i=1}^{c_n} w_i^{b,*}\lesssim (\log n)/k^{b,*}$ for $b\in [B]$. Hence, we verify condition $(i)$ in Proposition \ref{pro::I456sur}. 

\vspace{+2mm}
\noindent
\textit{Verification of Condition $(ii)$:} 
Let $b\in [B]$ be fixed.
Since $k^{b,*}>c_n$ by \eqref{equ::kbcn}, we have
\begin{align*}
\sum_{i=1}^{k^{b,*}} \big(\overline{R}_{s,(k^{b,*}+1)}^b-\overline{R}_{s,(i)}^b\big)^2&\lesssim 		\sum_{i=1}^{k^{b,*}} \bigl(\overline{R}_{s,(k^{b,*}+1)}^b \bigr)^2
\lesssim {(k^{b,*})^{2/d+1}} {s^{-2/d}}.
\end{align*}
on the event $\Omega_1$.
Combining this with \eqref{equ::sumlambdax} in Lemma \ref{lem:existenceofsolution}, we have $(\log s)/B  \lesssim (k^{b,*})^{2/d+1}s^{-2/d}$, which implies the lower bound $
k^{b,*}\gtrsim s^{2/(2+d)}((\log s)/B)^{d/(2+d)}$.
Combining this with the choice of $s$ and $B$, we get the lower bound
$k^{b,*}\gtrsim (n/\log n)^{1/(d+2)}$.

Next, we derive the upper bound of $k^{b,*}$.
Using the lower bound in \eqref{equ::rkxminusrix} again, we have
\begin{align*}
\sum_{i=1}^{k^{b,*}} \big(\overline{R}_{s,(k^{b,*})}^b-\overline{R}_{s,(i)}^b\big)^2
& \geq \sum_{i=c_n}^{\lfloor c_4'k^{b,*}\rfloor } \big(\overline{R}_{s,(k^{b,*})}^b-\overline{R}_{s,(i)}^b\big)^2\\
&\geq (c_3')^2 \sum_{i=c_n}^{\lfloor c_4'k^{b,*}\rfloor } \bigl(\big(\lfloor c_4'k^{b,*}\rfloor/s\big)^{1/d}-(i/s)^{1/d}\big)^2
\\
& = \frac{(c_3')^2\lfloor c_4'k^{b,*}\rfloor^{2/d+1}}{s^{2/d}}\cdot \frac{1}{\lfloor c_4'k^{b,*}\rfloor}\sum_{i=c_n}^{\lfloor c_4'k^{b,*}\rfloor}\biggl( 1- \biggl(\frac{i}{\lfloor c_4'k^{b,*}\rfloor}\biggr)^{1/d}\biggr)^2.
\end{align*}
Since $\lfloor c_4'k^{b,*}\rfloor\geq 2c_n>144$ by \eqref{equ::kbcn}, we have $\lfloor c_4'k^{b,*}\rfloor\geq c_4'k^{b,*}/2$. Therefore, we get
\begin{align*}
\sum_{i=1}^{k^{b,*}} \big(\overline{R}_{s,(k^{b,*})}^b-\overline{R}_{s,(i)}^b\big)^2 
\geq \frac{(c_3')^2(c_4')^{2/d+1}}{2^{2/d+1}} \int_{1/2}^1 (1-t^{1/d})^2 \, dt \cdot(k^{b,*})^{2/d+1}s^{-2/d}.
\end{align*}
Combining this with \eqref{equ::sumlambdax} in Lemma \ref{lem:existenceofsolution}, we obtain \begin{align}\label{equ::losblower}
\frac{\log s}{B} > \frac{(c_3')^2(c_4')^{2/d+1}}{2^{2/d+1}} \int_{1/2}^1 (1-t^{1/d})^2 \, dt \cdot(k^{b,*})^{2/d+1}s^{-2/d}.
\end{align} 
Since $n\geq N_2\geq (c_6')^{2+d}$, using the choice of $s$ in \eqref{equ::choicesb1}, we get
\begin{align*}
\log s \leq \log c_6' + \frac{d+1}{d+2}\log n \leq \log n.
\end{align*}
Using this bound with the choices of $B$ and $s$ in \eqref{equ::choicesb1}, and the inequality \eqref{equ::losblower}, we derive
\begin{align}\label{equ::kbstarupper}
&k^{b,*}\leq \frac{2}{c_4'(c_3')^{2d/(2+d)}}\biggl(\int_{1/2}^1 (1-t^{1/d})^2 \, dt \biggr)^{-\frac{d}{d+2}}\biggl(\frac{\log s}{B}\biggr)^{\frac{d}{d+2}}s^{\frac{2}{d+2}}\nonumber\\
&\leq \frac{2(c_6')^{\frac{2}{d+2}}}{c_4'((c_3')^2 c_7')^{\frac{d}{d+2}}}\biggl(\int_{1/2}^1 (1-t^{1/d})^2 \, dt \biggr)^{-\frac{d}{d+2}}\biggl(\frac{n}{\log n}\biggr)^{\frac{1}{d+2}}=\frac{c_8'}{2}\biggl(\frac{n}{\log n}\biggr)^{\frac{1}{d+2}}.
\end{align}
Combining the lower and upper bounds of $k^{b,*}$, we conclude 
\begin{align}\label{equ::kbstar}
k^{b,*}\asymp (n/\log n)^{1/(d+2)}, \quad \forall b\in [B].
\end{align}
The inequalities \eqref{equ::choicesb1} and \eqref{equ::nchoice1} ensure that $s\geq c_1'$ for all $n>N_2$. 
Combining the upper bound of $k^{b,*}$ in \eqref{equ::kbstarupper} with the choice of $s$ in \eqref{equ::choicesb1} and the inequality \eqref{equ::nchoice3} yields $s\geq 2\overline{k}$ for all $n>N_2$. Therefore, we have $s\geq \max\{c_1',2\overline{k}\}$ for all $n>N_2$ and it is straightforward to verify that $\log s\asymp \log n$.
Additionally, the choice of $B$ in \eqref{equ::choicesb1} combined with \eqref{equ::nchoice2} ensures that $
B \geq 2(d^2+4)(\log n)/3$ for all $n>N_2$.
Moreover, using \eqref{equ::kbstar} and the choices of $B$ and $s$, we obtain
$B\lesssim (\overline{k}/(\log n))^{1+2/d}$. This complete the verification of condition $(ii)$ in Proposition \ref{pro::I456sur}.

\vspace{+2mm}
\noindent
\textit{Verification of Condition $(iii)$:}
Fix $b\in [B]$.
By inequality \eqref{equ::wixbound} in Lemma \ref{lem:existenceofsolution}, we have
\begin{align}\label{equ::sumiwi}
\frac{\sum_{i=1}^{k^{b,*}}i^{1/d}\big(\overline{R}_{s,(k^{b,*})}^b-\overline{R}_{s,(i)}^b\big)}{\sum_{i=1}^{k^{b,*}}\big(\overline{R}_{s,(k^{b,*}+1)}^b-\overline{R}_{s,(i)}^b\big)}\leq \sum_{i=1}^{k^{b,*}} i^{1/d}w_i^{b,*}\leq \frac{\sum_{i=1}^{k^{b,*}}i^{1/d}\big(\overline{R}_{s,(k^{b,*}+1)}^b-\overline{R}_{s,(i)}^b\big)}{\sum_{i=1}^{k^{b,*}}\big(\overline{R}_{s,(k^{b,*})}^b-\overline{R}_{s,(i)}^b\big)}.
\end{align}	
We first evaluate the numerator on the left-hand side.
Since $c_4'<1$ and $\lfloor c_4'k^{b,*}\rfloor\geq 2c_n$ by \eqref{equ::kbcn}, we have
\begin{align*}
\sum_{i=1}^{k^{b,*}}i^{1/d}\big(\overline{R}_{s,(k^{b,*})}^b-\overline{R}_{s,(i)}^b\big)\geq 	\sum_{i=c_n}^{\lfloor c_4'k^{b,*}\rfloor}i^{1/d}\big(\overline{R}_{s,(k^{b,*})}^b-\overline{R}_{s,(i)}^b\big)
\end{align*}
Applying inequality \eqref{equ::rkxminusrix}, we obtain
\begin{align*}
&\sum_{i=1}^{k^{b,*}}i^{1/d}\big(\overline{R}_{s,(k^{b,*})}^b-\overline{R}_{s,(i)}^b\big)
\gtrsim s^{-1/d}\sum_{i=c_n}^{\lfloor c_4'k^{b,*}\rfloor } i^{1/d} \bigl(\lfloor c_4'k^{b,*}\rfloor^{1/d}-i^{1/d}\bigr)\\
& \gtrsim s^{-1/d}(k^{b,*})^{2/d+1} \cdot \frac{1}{\lfloor c_4'k^{b,*}\rfloor} \sum_{i=c_n}^{\lfloor c_4'k^{b,*}\rfloor}\bigg(\frac{i}{\lfloor c_4'k^{b,*}\rfloor}\bigg)^{1/d}
\biggl(1-\biggl(\frac{i}{\lfloor c_4'k^{b,*}\rfloor}\biggr)^{1/d}\biggr).
\end{align*}
Define $g_2(t):=t^{1/d}(1-t^{1/d})$ for $t\in [0,1]$. Since $g_2(t)\leq 1$ for $t\in [0,1]$, we obtain the lower bound
\begin{align}\label{equ::sumicdnkx}
\sum_{i=1}^{k^{b,*}} i^{1/d}\big(\overline{R}_{s,(k^{b,*})}^b-\overline{R}_{s,(i)}^b\big)\gtrsim \frac{(k^{b,*})^{2/d+1}}{s^{1/d}} \bigg(\int_{(c_n-1)/\lfloor c_4'k^{b,*}\rfloor}^1 g_2(t)\, dt-\frac{2}{\lfloor c_4'k^{b,*}\rfloor} \bigg).
\end{align}
From \eqref{equ::n4} and \eqref{equ::kbcn}, we have ${\lfloor c_4'k^{b,*}\rfloor}^{-1}\leq 1/(2c_n)\leq \int_{1/2}^1 g_2(t)\, dt/4$ and $(c_n-1)/\lfloor c_4'k^{b,*}\rfloor\leq 1/2$ for all $n>N_2$.
Therefore, we obtain the following lower bound:
\begin{align*}
\sum_{i=1}^{k^{b,*}}i^{1/d}\big(\overline{R}_{s,(k^{b,*})}^b-\overline{R}_{s,(i)}^b\big)
\gtrsim \frac{(k^{b,*})^{2/d+1}}{2s^{1/d}} \int_{1/2}^1 g_2(t)\, dt
\gtrsim  \frac{(k^{b,*})^{2/d+1}}{s^{1/d}}.
\end{align*} 
On the other hand, on the event $\Omega_1$, since $k^{b,*}> c_n$ by \eqref{equ::kbcn}, we have the upper bound:
\begin{align*}
\sum_{i=1}^{k^{b,*}}\big(\overline{R}_{s,(k^{b,*}+1)}^b-\overline{R}_{s,(i)}^b\big)\leq k^{b,*}\overline{R}_{s,(k^{b,*}+1)}^b\lesssim (k^{b,*})^{1+1/d}s^{-1/d}.
\end{align*} 
Combining these bounds with \eqref{equ::sumiwi}, we obtain $
\sum_{i=1}^{k^{b,*}} i^{1/d}w_i^{b,*}\gtrsim (k^{b,*})^{1/d}$.
By similar arguments, we can also derive the upper bound $\sum_{i=1}^{k^{b,*}} i^{1/d}w_i^{b,*}$
$\lesssim (k^{b,*})^{1/d}$ from the right-hand term of inequality \eqref{equ::sumiwi}. 
Hence, we verify the first part of condition \textit{(iii)}.

Using the formula for $w^{b,*}_i$ in \eqref{equ:structureofweights}, along with the equality \eqref{equ::mubequality}, and noting that
$\overline{R}_{s,( k^{b,*})}^b< \mu^b\leq  \overline{R}_{s,( k^{b,*}+1)}^b$ from Lemma \ref{lem:existenceofsolution}, we obtain
\begin{align*}	
\|w^{b,*}\|_2 =\frac{\sqrt{\sum_{i=1}^{k^{b,*}}\big(\mu^b-   \overline{R}_{s,(i)}^b\big)^2}}{\sum_{i=1}^{k^{b,*}}\big(\mu^b-\overline{R}_{s,(i)}^b\big)}
=\frac{((\log s)/B)^{1/2}}{\sum_{i=1}^{k^{b,*}}\big(\mu^b-   \overline{R}_{s,(i)}^b\big)}
<\frac{((\log s)/B)^{1/2}}{\sum_{i=1}^{k^{b,*}} (\overline{R}_{s,(k^{b,*})}^b-\overline{R}_{s,(i)}^b)}.
\end{align*}
Combining this with \eqref{equ::sumicdnkxlower}, the choice of $B$ and $s$, and the order of $k^{b,*}$ in \eqref{equ::kbstar}, we derive
\begin{align*}
\|w^{b,*}\|_2\lesssim ((\log s)/B)^{1/2}/ \big((k^{b,*})^{1/d+1}s^{-1/d}\big)\lesssim (k^{b,*})^{-1/2}.
\end{align*}
Finally, using the Cauchy–Schwarz inequality, we obtain $1=\|w^{b,*}\|_1^2\leq k^{b,*}\|w^{b,*}\|_2^2$.
This implies the lower bound $\|w^{b,*}\|_2 \gtrsim (k^{b,*})^{-1/2}$. 
Therefore, combining the upper and lower bounds for $\|w^{b,*}\|_2$, we verify the second part of condition \textit{(iii)}.
This completes the verification of Condition \textit{(iii)} in Proposition \ref{pro::I456sur}.

\vspace{+2mm}
\noindent
\textit{Verification of Condition $(iv)$:}
Let $c_3'$ be the constant specified in Lemma \ref{lem::Rrho}, and let $c_8'$ be the constant defined in \eqref{equ::nchoice3}.
We define $h_n:=\lceil  c_8' (n/(\log n))^{1/(d+2)}/2 \rceil+1$, and define the following sequence 
\begin{align*}
 C_{n,i}:= 
\begin{cases}
c_3'(h_n/s)^{1/d}, & 1\leq i \leq (h_n\wedge s);\\
0, & \text{otherwise}.
\end{cases}
\end{align*}

By \eqref{equ::kbstarupper}, we have
$k^{b,*}+1\leq c_8' (n/\log n)^{1/(d+2)}/2+1\leq h_n$ for all $b\in [B]$. 
Therefore, for $1\leq i\leq k^{b,*}$, we have $i\leq  (h_n\wedge s)$, and thus $C_{n,i}=c_3'(h_n/s)^{1/d}$. 
By \eqref{equ::kbcn}, we have $k^{b,*}> c_n$ for all $b\in [B]$. 
Therefore, on the event $\Omega_1$, we have 
\begin{align}\label{equ::cnilower}
C_{n,i} = c_3'(h_n/s)^{1/d} \geq  c_3'((k^{b,*}+1)/s)^{1/d} \geq \overline{R}_{s,(k^{b,*}+1)}^b
\end{align}
for $1\leq i\leq k^{b,*}$ and $b\in [B]$.
Using the expression for $k^{b,*}$ in \eqref{equ::kbstar}, the choice of $s$, and the inequality \eqref{equ::sumicdnkxlower}, we derive 
\begin{align*}
\sum_{i=1}^{k^{b,*}} \big(\overline{R}_{s,(k^{b,*})}^b-\overline{R}_{s,(i)}^b\big)\gtrsim \biggl( \frac{n}{\log n}\biggr)^{\frac{1/d+1}{d+2}}\cdot  \biggl( \frac{n}{\log n}\biggr)^{-\frac{1/d+1}{d+2}}
\gtrsim 1.
\end{align*}
Consequently, using \eqref{equ::wixbound} from Lemma \ref{lem:existenceofsolution} and inequality \eqref{equ::cnilower}, we obtain
\begin{align*}
w_i^{b,*} \leq
\frac{\overline{R}_{s,(k^{b,*}+1)}^b-\overline{R}_{s,(i)}^b}{\sum_{i=1}^{k^{b,*}}\big(\overline{R}_{s,(k^{b,*})}^b-\overline{R}_{s,(i)}^b\big)}
\lesssim \overline{R}_{s,(k^{b,*}+1)}^b -\overline{R}_{s,(i)}^b\leq \overline{R}_{s,(k^{b,*}+1)}^b\leq C_{n,i}
\end{align*}
for $1\leq i\leq k^{b,*}$ and $b\in [B]$.
Since $w_i^{b,*}=0$ for $i>k^{b,*}$ and $b\in [B]$, we conclude that $w_i^{b,*} \lesssim C_{n,i}$ for all $b\in [B]$ and $i\in [s]$.

We now analyze the summation $\sum_{i=1}^s i^{1/d-1/2} C_{n,i}$.
Since $h_n\wedge s\leq h_n$, we have 
\begin{align*}
\sum_{i=1}^s i^{1/d-1/2} C_{n,i} 
= c_3'\sum_{i=1}^{h_n\wedge s}i^{1/d-1/2} (h_n/s)^{1/d}
\lesssim h_n^{1/d+1/2}(h_n/s)^{1/d} \lesssim (n/\log n)^{\frac{1/d-1/2}{d+2}}.
\end{align*}
On the other hand, from \eqref{equ::kbstar}, we know that $\overline{k}\gtrsim (n/\log n)^{1/(d+2)}$, which implies $\overline{k}^{1/d-1/2}\gtrsim (n/\log n)^{\frac{1/d-1/2}{d+2}}$. Therefore, we conclude that 
$\sum_{i=1}^s i^{1/d-1/2} C_{n,i}\lesssim \overline{k}^{1/d-1/2}$.
This completes the verification of condition $(iv)$ in Proposition \ref{pro::I456sur}.

Finally, statement 2 has been verified in \eqref{equ::kbstar}, completing the proof of Proposition \ref{lem::order:k:w:lambda}.
\end{proof}

Within the same theoretical framework as the preceding proof, the following lemma establishes a finite sample bound on the number of nonzero weights returned by SRM without bagging. This result is essential for the comparison of search complexity in Section \ref{sec:complexity}.
Following the approach in Section \ref{sec::datadrivennnad}, we randomly partition the data into two disjoint subsets—one for weight selection and the other for computing $k$-distances. 
For convenience, we assume without loss of generality that $n$ is an odd number in the following lemma.

\begin{lemma}\label{lemm::kbstarnobag}
Let $D_s$ be a subset of size $s = n/2$ randomly drawn from the dataset $D_n$. Define $\overline{R}_{s,(i)}$ as the average $i$-distance for any integer $i \leq s$ on the subset $D_s$, following the definition in Proposition \ref{pro::I456sur}. Furthermore, let $w^*$ be the solution to the following SRM problem:
\begin{align*}
w^*:=\argmin_{w\in \mathcal{W}_s} \; \sqrt{\log s} \cdot \|w\|_2 + \sum_{i=1}^s w_i \overline{R}_{s,(i)},
\end{align*}
and define $k^* := \sup\{i \in [s] : w_i^* \neq 0\}$. Then, there exists an integer $N_3 \in \mathbb{N}$ such that for all $n > N_3$, with probability $\mathrm{P}^s$ at least $1 - 1/n^2$, we have $k^* \asymp n^{2/(d+2)}(\log n)^{d/(d+2)}$.
\end{lemma}
\begin{proof}[of Lemma \ref{lemm::kbstarnobag}]
Let $c_3'$ denote the constant specified in Lemma \ref{lem::Rrho}, $c_4'$ denote the constant introduced in the proof of Proposition \ref{lem::order:k:w:lambda}, and $\{c_n\}$ denote the sequence defined in Lemma \ref{lem::Rrho}.
By the definition of $c_n$, there exists $n_6 \in \mathbb{N}$ such that for all $n \geq n_6$, we have 
\begin{align}\label{equ::contraana}
(n/2)^{2/d} c_n^{-1-2/d} \log(n/2)/(3^{2/d+1} (c_3')^2 c_4'^{-1-2/d}) > 1.
\end{align}
Define $N_3 := \max\{\lceil 2c_1'\rceil+2, n_6\}$, where $c_1'$ is as specified in Lemma \ref{lem::Rrho}. Let $\Omega_1$ denote the event defined by \eqref{equ::infrksuprk} and \eqref{equ::pbxrkxkn} in Lemma \ref{lem::Rrho} with $B = 1$ and $s=n/2$. By Lemma \ref{lem::Rrho}, for all $n > N_3$, we have $s \geq c_1'$, ensuring that the event $\Omega_1$ holds with probability $\mathrm{P}^{s}$ at least $1 - 1/n^2$. In the subsequent argument, we assume that $\Omega_1$ holds and $n \geq N_3$. 

Since \eqref{equ::contraana} is analogous to \eqref{equ::BS} in the proof of Proposition \ref{lem::order:k:w:lambda}, a similar argument as in the ``Verification of Condition $(i)$'' part of that proof shows that $k^* \geq \lceil 2c_n/c_4' \rceil$ by contradiction. Following the reasoning in the ``Verification of Condition $(ii)$'' part, we obtain $
\log s \asymp (k^{*})^{2/d+1} s^{-2/d}$.
Substituting $s = n/2$ into the above inequality yields $k^* \asymp n^{2/(d+2)}(\log n)^{d/(d+2)}$. This completes the proof of Lemma \ref{lemm::kbstarnobag}.
\end{proof}
\vspace{-1.5mm}

\subsubsection{Proofs Related to Section \ref{sec::convergencebrdde}}\label{sec::proofstheorem4}

In this section, we present the proof related to BRDDE. The weights $w^{b,*}$ are derived using SRM based on the data for BRDDE, whereas Proposition \ref{pro::I456} assumes that the weights are fixed and independent of the data. As a result, Proposition \ref{pro::I456} cannot be directly applied to establish the convergence rate of our density estimator. However, Proposition \ref{lem::order:k:w:lambda} ensures that the weights returned by SRM satisfy the required conditions in Proposition \ref{pro::I456sur} with high probability. Therefore, by making slight modifications to the proof of Proposition \ref{pro::I456}, we can establish the error decomposition of BRDDE as stated in Proposition \ref{pro::I456tilde}.

Before proceeding, we introduce additional notation.
Regarding the expression of $f^{B,*}(x)$ in \eqref{equ::fbstar}, we define the error term $(I')$, $(II')$, and $(III')$ as follows, corresponding to $(I)$, $(II)$, and $(III)$ in \eqref{equ::i4}, \eqref{equ::i5}, and \eqref{equ::i6}:
\begin{align}
(I') &:= \frac{1}{V_d R_n^{B,*}(x)^d}
\sum_{j=0}^{d-1} \biggl( \frac{1}{B} \sum_{b=1}^B \sum_{i=1}^s w_i^{b,*}  \gamma_{s,i} \biggr)^j \bigl(V_d^{1/d} f(x)^{1/d} R_n^{B,*}(x) \bigr)^{d-1-j}.\label{equ::i4tilde}\\
(II') 
&:= \sum_{i=1}^s \Biggl|\frac{1}{B}\sum_{b=1}^B w_i^{b,*}  \big(\gamma_{s,i} - \mathrm{P}(B(x,\widetilde{R}_{s,(i)}^b(x)))^{1/d}\big) \Biggr|, 
\label{equ::i5tilde} 
\\
(III')
& := \sum_{i=1}^s \Biggl|\frac{1}{B}\sum_{b=1}^B  w_i^{b,*} \bigl( \mathrm{P}(B(x,\widetilde{R}_{s,(i)}^b(x)))^{1/d} - V_d^{1/d} f(x)^{1/d} \widetilde{R}_{s,(i)}^b(x) \bigr) \Biggr|.
\label{equ::i6tilde}
\end{align}
Following a similar derivation as in \eqref{equ::flambdabxerror}, we obtain the error decomposition:
\begin{align}\label{equ::errordecfbstarx}
\bigl| f^{B,*}_n(x) - f(x) \bigr|
\leq (I') \cdot (II') + (I') \cdot (III').
\end{align}

\begin{proposition}\label{pro::I456tilde}
Let Assumption \ref{asp::holder} hold.  
Let $(I')$, $(II')$, and $(III')$ be defined in \eqref{equ::i4tilde}, \eqref{equ::i5tilde}, and \eqref{equ::i6tilde}, respectively.  
Then, there exists an integer $N_4 \in \mathbb{N}$ such that for all $n > N_4$ and any $x$ satisfying $B(x,\widetilde{R}_{s,(k^{b,*})}^b(x))\subset [0,1]^d$ for all $b \in [B]$, with probability $\mathrm{P}^n$ at least $1 - 2/n^2$, $(I')$, $(II')$, and $(III')$ have upper bounds of the same asymptotic order as $(I)$, $(II)$, and $(III)$ in Proposition \ref{pro::I456}.
\end{proposition}

\begin{proof}[of Proposition \ref{pro::I456tilde}]
Let $\Omega_5$ denote the event defined by the statements in Proposition \ref{lem::order:k:w:lambda}.
Applying Lemma \ref{lem::Rrho} to the subset  $\{\widetilde{D}_s^b\}_{b=1}^B$, we obtain that, with probability at least $1 - 1/n^2$, the following holds:
$c_2' (i/s)^{1/d}\leq  \widetilde{R}_{s,(i)}^b(x)\leq c_3' (i/s)^{1/d}$ and $
\mathrm{P}(B(x,\widetilde{R}_{s,(i)}^b(x)))\asymp i/s$
for all $x \in \mathcal{X}$, $b \in [B]$, and $c_n \leq i \leq s$. We define this event as $\Omega_6$.

Since the datasets $D_s^b$ and $\widetilde{D}_s^b$
are independent for $b\in [B]$ and $w^{b,*}$ is the solution to the SRM in \eqref{equ::bagwstar}, we have 
\begin{align*}
&\mathbb{E}\bigl[w_i^{b,*}((\mathrm{P}(B(x,\widetilde{R}_{s,(i)}^b(x))))^{1/d}-\gamma_{s,i})\big| \Omega_5\bigr] \\
& = \mathbb{E}\bigl[w_i^{b,*}\big| \Omega_5\bigr]\cdot \mathbb{E}\bigl[(\mathrm{P}(B(x,\widetilde{R}_{s,(i)}^b(x))))^{1/d}-\gamma_{s,i}\bigr]=0
\end{align*}
for a fixed $x\in \mathcal{X}$,
where the last equality follows from \eqref{equ::pbxribx} and the expectation is taken with respect to the empirical measure $\widetilde{D}_s^b$, conditional on the event $\Omega_5$. 
Following the proof of Lemma \ref{lem::i5lem} and conditioning on the event $\Omega_5$, for all $n > n_1=2d^d+1$, we have 
\begin{align*}
\mathrm{P}\Biggl(\sup_{x\in \mathcal{X},i\in [\, \overline{k} \,]}\Biggl|\frac{1}{B} \sum_{b=1}^B w^{b,*}_i \bigl(\mathrm{P}(B(x,\widetilde{R}_{s,(i)}^b(x)))^{1/d}-\gamma_{s,i}\bigr)\Biggr|\lesssim C_{n,i} \biggl(\frac{i}{s}\biggr)^{1/d} \sqrt{\frac{\log n}{i B}}
\Bigg| \Omega_5\Biggr)
\geq 1-\frac{1}{n^2}.
\end{align*}
Since Proposition \ref{lem::order:k:w:lambda} ensures that $\mathrm{P}(\Omega_5)\geq 1-1/n^2$,
applying the conditional probability formula yields 
\begin{align*}
\mathrm{P}^n\Biggl(\sup_{x\in \mathcal{X},i\in [\, \overline{k} \,]}\Biggl|\frac{1}{B} \sum_{b=1}^B w^{b,*}_i \bigl(\mathrm{P}(B(x,\widetilde{R}_{s,(i)}^b(x)))^{1/d}-\gamma_{s,i}\bigr)\Biggr|\lesssim C_{n,i} \biggl(\frac{i}{s}\biggr)^{1/d} \sqrt{\frac{\log n}{i B}}\Biggr) \geq 1-\frac{2}{n^2}
\end{align*}
for all $n>n_1\vee N_2$, where $N_2$ is the integer specified in Proposition \ref{lem::order:k:w:lambda}.
Denote this event as $\Omega_7$. 
By the union bound, the event $\Omega_5\cap \Omega_6\cap \Omega_7$ holds with probability $\mathrm{P}^n$ at least $1-4/n^2$ for all $n>n_1\vee N_2$. 
Since the events $\Omega_6$ and $\Omega_7$ correspond to the events $\Omega_1$ and $\Omega_2$ in the proof of Proposition \ref{pro::I456}, respectively, and given that  conditions $(i)-(iv)$ in Proposition \ref{pro::I456sur} hold for $w^{b,*}$ and $k^{b,*}$ on $\Omega_5$, there exists $N_4\in \mathbb{N}$ such that the upper bound for $(I')$, $(II')$, and $(III')$ follow for all $n>N_4$ by similar arguments as in the proof of Proposition \ref{pro::I456}. (Note that $N_4$ may differ $N_1$ in that proposition.) The details are omitted.
\end{proof}

\begin{proof}[of Theorem \ref{pro::rateBWDDE}]
Let $\Omega_5$, $\Omega_6$, and $\Omega_7$ be the events defined in the proof of Proposition \ref{pro::I456tilde}.  
Following the arguments therein, the event $\Omega_5 \cap \Omega_6 \cap \Omega_7$ holds with probability $\mathrm{P}^n$ at least $1 - 4/n^2$ for all $n \geq N_4$.  
For the remainder of the proof, we assume that the event $\Omega_5 \cap \Omega_6 \cap \Omega_7$ holds and that $n > N_2^*:=N_4$.

From \eqref{equ::kbstarupper} in the proof of  Proposition \ref{lem::order:k:w:lambda}, we have 
$\overline{k}\leq c_8' (n/\log n)^{1/(d+2)}/2$.
Define
\begin{align*}
\Delta_n:=[c_3'(c_8' (n/\log n)^{1/(d+2)}/(2s))^{1/d}, 1-
c_3'(c_8' (n/\log n)^{1/(d+2)}/(2s))^{1/d}]^d.
\end{align*}
On the event $\Omega_6$, for all $x\in \Delta_n$ and any $b\in [B]$, we have 
\begin{align*}
\widetilde{R}_{s,(k^{b,*})}^b(x)\leq c_3'(k^{b,*}/s)^{1/d}\leq c_3'(c_8' (n/\log n)^{1/(d+2)}/(2s))^{1/d}.
\end{align*}
Thus, for any $y\in B\bigl(x,\widetilde{R}_{s,(k^{b,*})}^b(x)\bigr)$, we have 
\begin{align*}
d(y,\mathbb{R}^d\setminus[0,1]^d)\geq c_3'(c_8' (n/\log n)^{1/(d+2)}/(2s))^{1/d}-\widetilde{R}_{s,(k^{b,*})}^b(x) \geq 0.
\end{align*}
This implies that 
$B\bigl(x,\widetilde{R}_{s,(k^{b,*})}^b(x)\bigr)\subset [0,1]^d$
for all $x\in \Delta_n$ and $b\in [B]$. 
Therefore, from Proposition \ref{pro::I456tilde} and inequality \eqref{equ::errordecfbstarx}, we obtain
\begin{align*}
\big|f_n^{B,*}(x)-f(x)\big|
\lesssim ((\log n)/\overline{k})^{1+1/d} + \bigl((\log n)/ (\overline{k} B)\bigr)^{1/2} + (\overline{k}/s)^{1/d}, \quad x \in \Delta_n.
\end{align*}
Using $k^{b,*}\asymp (n/\log n)^{1/(d+2)}$ from Proposition \ref{lem::order:k:w:lambda}  and substituting the choices of $B$ and $s$,  we obtain
\begin{align*}
\bigl| f_n^{B,*}(x) - f(x) \bigr|
\lesssim n^{-1/(d+2)} (\log n)^{(d+3)/(d+2)}, \qquad x \in \Delta_n
\end{align*}
Integrating over $\Delta_n$, we get
\begin{align}\label{equ::fbfndeltan}
\int_{\Delta_n} |f_n^{B,*}(x)-f(x)| \, dx 
\lesssim n^{-1/(d+2)}(\log n)^{(d+3)/(d+2)}.
\end{align}

On the other hand, on the event $\Omega_5$, condition $(iii)$ in Proposition \ref{pro::I456sur} holds for $w^{b,*}$ and $k^{b,*}$, which implies that $\sum_{i=1}^s w_i^{b,*} i^{1/d}\lesssim (k^{b,*})^{1/d}$ for all $b\in [B]$. Since $\gamma_{s,i}<((i+1/d)/(s+1/d))^{1/d}\leq (2i/s)^{1/d}$ by \eqref{equ::gammasiupper}, we have
\begin{align*}\frac{1}{B} \sum_{b=1}^B \sum_{i=1}^s w_i^{b,*} \gamma_{s,i} \lesssim\frac{1}{B}\sum_{b=1}^B\sum_{i=1}^s w_i^{b,*} (i/s)^{1/d}\lesssim \frac{1}{B}\sum_{b=1}^B (k^{b,*}/s)^{1/d}\lesssim (\overline{k}/s)^{1/d}.
\end{align*}
Following similar arguments as in \eqref{equ::rnbxlower} from Proposition \ref{pro::I456}, we obtain $R_n^{B,*}(x)\gtrsim (\underline{k} / s)^{1/d}$ for all $x\in \mathcal{X}$ on the event $\Omega_5\cap\Omega_6$.
This implies
\begin{align*}
f_n^{B,*}(x)
= \frac{1}{V_d R_n^{B,*}(x)^d}
\biggl( \frac{1}{B} \sum_{b=1}^B \sum_{i=1}^s w_i^{b,*} \gamma_{s,i} \biggr)^d
\lesssim \overline{k}/\underline{k}
\lesssim 1, 
\qquad 
x \in \mathcal{X}.
\end{align*}
Since $\|f\|_{\infty}\leq \overline{c}$ from Assumption \ref{asp::holder}, we conclude that 
$ \|f_n^{B,*}-f\|_{\infty} $ is bounded by a constant.
Therefore, we have 
\begin{align*}
\int_{\mathcal{X}\setminus \Delta_n} |f_n^{B,*}(x)-f(x)| dx \lesssim \mu(\mathcal{X}\setminus\Delta_n)\lesssim ((n/\log n)^{1/(d+2)}/s)^{1/d}\\
\lesssim n^{-1/(d+2)}(\log n)^{(d+3)/(d+2)}.
\end{align*}
Finally, combining this with \eqref{equ::fbfndeltan}, we have
\begin{align*}
\int_{\mathcal{X}}\bigl|f_n^{B,*}(x) - f(x)\bigr|\, dx & = \biggl( \int_{\Delta_n} +\int_{\mathcal{X}\setminus\Delta_n} 
\biggr) \bigl|f_n^{B,*}(x) - f(x)\bigr|\, dx
\lesssim n^{-1/(d+2)}(\log n)^{(d+3)/(d+2)},
\end{align*}
which completes the proof.
\end{proof}

\subsection{Proofs Related to the Convergence Rates of BRDAD} \label{sec::proofsbrdadauc}

In this subsection, we first present the proofs for learning the AUC regret in Section \ref{sec::proofaucregret}.
Then we provide the proof of Theorem \ref{thm::bagaverage} in Section \ref{sec::proofsbrdad}.

\subsubsection{Proofs Related to Section \ref{sec::analysisaucregret}}
\label{sec::proofaucregret}

\begin{proof}[of Proposition \ref{lem:lemfrometatoauc}]
Under the Huber contamination model in Assumption \ref{asp::huber}, let $\eta(x)$ be defined as in \eqref{equ::etax}, and define $\widehat{\eta}(x)=\Pi\cdot f_n^{B,*}(x)^{-1}$. 
By the expression of $f_n^{B,*}(x)$ in \eqref{equ::fbstar}, 
it follows that 
$\eins \{ R_n^{B,*}(X) - R_n^{B,*}(X') > 0 \} = \eins \{ \widehat{\eta}(X) - \widehat{\eta}(X') > 0 \}$  and
$\eins \{ R_n^{B,*}(X) = R_n^{B,*}(X') \} = \eins \{ \widehat{\eta}(X) = \widehat{\eta}(X') \}$.
Consequently, we obtain 
\begin{align*}
& \mathrm{AUC}(R_n^{B,*})
\\
&= \mathbb{E}\bigl[\eins\{(Y-Y')(R_n^{B,*}(X)-R_n^{B,*}(X')>0)\}+\eins\{R_n^{B,*}(X)=R_n^{B,*}(X')\}/2|Y\neq Y'\bigr]\\
&=\mathbb{E}\bigl[\eins\{(Y-Y')(\widehat{\eta}(X)-\widehat{\eta}(X')>0)\}+\eins\{\widehat{\eta}(X)=\widehat{\eta}(X')\}/2|Y\neq Y'\bigr]
= \mathrm{AUC}(\widehat{\eta}).
\end{align*}
Therefore, we have 
$\mathrm{Reg}^{\mathrm{AUC}}(R_n^{B,*})=\mathrm{Reg}^{\mathrm{AUC}}(\widehat{\eta})$.
Applying \citet[Corollary 11]{agarwal2013surrogate}, we obtain
\begin{align}\label{equ::regaucrsw}
\mathrm{Reg}^{\mathrm{AUC}}(R_n^{B,*})  =\mathrm{Reg}^{\mathrm{AUC}}(\widehat{\eta}) 
\leq \frac{1}{\Pi(1-\Pi)}\int_{\mathcal{X}} |\widehat{\eta}(x)-\eta(x)| d \mathrm{P}_X(x). 
\end{align}
From Assumption \ref{asp::holder}, we have $\|f\|_{\infty}\geq \underline{c}$, and given that  $\|f_n^{B,*}\|_{\infty}\geq c$, it follows that
\begin{align*}
|\widehat{\eta}(x)-\eta(x)| =  
\frac{\Pi\bigl|f_n^{B,*}(x)-f(x)\bigr|}{f_n^{B,*}(x)f(x)}\lesssim \bigl|f_n^{B,*}(x)-f(x)\bigr|. 
\end{align*}
Combining this with \eqref{equ::regaucrsw} and the condition $\|f\|_{\infty}\leq c$ from Assumption \ref{asp::holder}, we establish the desired result.
\end{proof}

\subsubsection{Proofs Related to Section \ref{sec::ratebrdad}}
\label{sec::proofsbrdad}

\begin{proof}[of Theorem \ref{thm::bagaverage}]
Let $\Omega_5$, $\Omega_6$, and $\Omega_7$ be the events defined in the proof of Proposition \ref{pro::I456tilde}.  
Following the arguments therein, the event $\Omega_5 \cap \Omega_6 \cap \Omega_7$ holds with probability $\mathrm{P}^n$ at least $1 - 4/n^2$ for all $n \geq N_4$.  
For the subsequent arguments, we assume that $\Omega_5\cap \Omega_6\cap \Omega_7$ holds and that $n > N_2^*=N_4$.

Let $\{c_n\}$ denote the sequence from Lemma \ref{lem::Rrho}.
On the event $\Omega_6$, for all $x\in \mathcal{X}$, we have 
\begin{align*}
R_s^{b,*}(x) 
& = \sum_{i=1}^s w_i^{b,*} \widetilde{R}_{s,(i)}^b(x) 
= \sum_{i=1}^{c_n} w_i^{b,*} \widetilde{R}_{s,(i)}^b(x) + \sum_{i=c_n+1}^s w_i^{b,*} \widetilde{R}_{s,(i)}^b(x)
\\
& \leq \widetilde{R}_{s,(c_n)}^{b,*}(x) + \sum_{i=c_n+1}^s w_i^{b,*} \widetilde{R}_{s,(i)}^b(x)  
\lesssim (c_n / s)^{1/d} + \sum_{i=1}^{s} w_i^{b,*} (i/s)^{1/d},
\end{align*}
By Proposition \ref{lem::order:k:w:lambda}, on the event $\Omega_5$, we have $\sum_{i=1}^s w_{i}^{b,*}i^{1/d}\lesssim \overline{k}^{1/d}$ and $\underline{k}\gtrsim (\log n)^2$.
This implies that $
R_s^{b,*}(x) \lesssim (c_n / s)^{1/d}  + (\overline{k} / s)^{1/d}\lesssim  (\overline{k} / s)^{1/d}$.
Averaging over $b$ in $[B]$, we obtain 
\begin{align}\label{equ::rnbstarlower}
R_n^{B,*}(x) = \frac{1}{B}\sum_{b=1}^B R_s^{b,*}(x) \lesssim  (\overline{k} / s)^{1/d}
\end{align}
On the other hand, using \eqref{equ::gammasiupper}, we have 
\begin{align*}
\sum_{i=1}^s w_i^{b,*} \gamma_{s,i} >\sum_{i=1}^s w_i^{b,*} \biggl(\frac{i+1/d-1}{s+1+1/d}\biggr)^{1/d} \gtrsim \sum_{i=1}^s w_i^{b,*}(i/s)^{1/d},
\end{align*}
where we use the inequality $(i+1/d-1)/(s+1+1/d)\geq (i-1)/(s+2)\geq (i-1)/(2s)$. 
Applying Proposition \ref{lem::order:k:w:lambda} again, we get $\sum_{i=1}^s w_{i}^{b,*}i^{1/d}\gtrsim \overline{k}^{1/d}$. Averaging over $b$ in $[B]$, we have
\begin{align*}
\frac{1}{B}\sum_{b=1}^B\sum_{i=1}^s w_i^{b,*} \gamma_{s,i}  \gtrsim (\overline{k}/s)^{1/d}. 
\end{align*}
Combining this with \eqref{equ::rnbstarlower}, we conclude that $f_n^{B,*}(x)$ is lower bounded by a constant for $x\in \mathcal{X}$. 
Consequently, by Theorem \ref{pro::rateBWDDE} and Proposition \ref{lem:lemfrometatoauc}, we obtain the desired assertion.
\end{proof}

\section{Conclusion} \label{sec::Conclusion}
In this paper, we propose a distance-based algorithm, \textit{Bagged Regularized $k$-Distances for Anomaly Detection} (\textit{BRDAD}), to address challenges in unsupervised anomaly detection. BRDAD mitigates the sensitivity of hyperparameter selection by formulating the problem as a convex optimization task and incorporates bagging to enhances computational efficiency. 
From a theoretical perspective, we establish fast convergence rates for the AUC regret of BRDAD and show that the bagging technique substantially reduces computational complexity. As a by-product, we derive optimal convergence rates for the $L_1$-error of \textit{Bagged Regularized $k$-Distances for Density Estimation} (\textit{BRDDE}), which shares the same weights as BRDAD, further validating the effectiveness of the \textit{Surrogate Risk Minimization} (\textit{SRM}) framework for density estimation.  
On the experimental side, BRDAD is evaluated against distance-based, forest-based, and kernel-based methods on various anomaly detection benchmarks, demonstrating superior performance. Additionally, parameter analysis reveals that choosing an appropriate number of bagging rounds improves performance, making the method well-suited for practical applications.

\acks{The authors would like to thank the reviewers and the action editor for their help and advice, which led
to a significant improvement of the article. 
Hanfang Yang and Yuheng Ma are corresponding authors. 
The research is supported by the Special Funds
of the National Natural Science Foundation of China (Grant No. 72342010). Yuheng Ma
is supported by the Outstanding Innovative Talents Cultivation Funded Programs 2024 of
Renmin University of China. This research is also supported by Public Computing Cloud,
Renmin University of China.
}

\bibliography{CAIBib}

\end{document}